\documentclass{article}
\usepackage{geometry}
\usepackage[nospace,noadjust]{cite}
\usepackage{amsmath,amssymb,amsfonts,times}
\usepackage{graphicx}
\usepackage{textcomp}
\usepackage{xcolor}
\usepackage{subfigure}
\usepackage{epsfig}
\usepackage{multirow}
\usepackage{algorithm}
\usepackage{algpseudocode}
\usepackage{algpseudocode}
\usepackage{array}
\usepackage{epstopdf}
\usepackage{color}
\usepackage{diagbox}
\usepackage{bm}
\usepackage{bbm}
\usepackage{mathrsfs}
\usepackage{tcolorbox}
\usepackage{pdfsync}
\usepackage{mdframed}

\usepackage{url}
\usepackage[hidelinks]{hyperref}
\usepackage{amsmath,amsthm,amssymb,amsbsy}
\usepackage{paralist}
\usepackage{xcolor}
\usepackage{color}
\usepackage{comment}
\usepackage{multirow}
\usepackage[toc, page]{appendix}

\usepackage{fancyhdr}
\usepackage{cite}
\usepackage{cleveref}

\newtheorem{lemma}{Lemma}
\newtheorem{definition}{Definition}

\newtheorem{theorem}{Theorem}

\theoremstyle{remark}

\theoremstyle{problem}

\newcommand{\R}{\mathbb{R}}
\def \real    { \mathbb{R} }

\newcommand{\e}{\begin{equation}}
\newcommand{\ee}{\end{equation}}
\newcommand{\en}{\begin{equation*}}
\newcommand{\een}{\end{equation*}}
\newcommand{\eqn}{\begin{eqnarray}}
\newcommand{\eeqn}{\end{eqnarray}}
\newcommand{\bmat}{\begin{bmatrix}}
\newcommand{\emat}{\end{bmatrix}}

\DeclareMathAlphabet\mathbfcal{OMS}{cmsy}{b}{n}
\renewcommand{\P}[1]{\operatorname{\mathbb{P}}\left(#1\right)}
\newcommand{\E}{\operatorname{\mathbb{E}}}

\newcommand{\vct}[1]{\boldsymbol{#1}}
\newcommand{\mtx}[1]{\boldsymbol{#1}}

\newcommand{\<}{\langle}
\renewcommand{\>}{\rangle}

\newcommand{\trace}{\operatorname{trace}}

\newcommand{\set}[1]{\mathbb{#1}}

\newcommand{\wh}{\widehat}

\newcommand{\wt}{\widetilde}
\newcommand{\ol}{\overline}

\newcommand{\calA}{\mathcal{A}}
\newcommand{\calB}{\mathcal{B}}
\newcommand{\calC}{\mathcal{C}}
\newcommand{\calD}{\mathcal{D}}
\newcommand{\calE}{\mathcal{E}}

\newcommand{\calL}{\mathcal{L}}

\newcommand{\calN}{\mathcal{N}}

\newcommand{\calP}{\mathcal{P}}
\newcommand{\calQ}{\mathcal{Q}}

\newcommand{\calS}{\mathcal{S}}

\newcommand{\calU}{\mathcal{U}}
\newcommand{\calV}{\mathcal{V}}

\newcommand{\calX}{\mathcal{X}}
\newcommand{\calY}{\mathcal{Y}}

\newcommand{\va}{\vct{a}}

\newcommand{\ve}{\vct{e}}

\newcommand{\vr}{\vct{r}}

\newcommand{\vx}{\vct{x}}
\newcommand{\vy}{\vct{y}}

\newcommand{\mA}{\mtx{A}}
\newcommand{\mB}{\mtx{B}}

\newcommand{\mE}{\mtx{E}}

\newcommand{\mL}{\mtx{L}}

\newcommand{\mQ}{\mtx{Q}}

\newcommand{\mU}{\mtx{U}}
\newcommand{\mV}{\mtx{V}}

\newcommand{\mX}{\mtx{X}}
\newcommand{\mY}{\mtx{Y}}

\newcommand{\mDelta}{\mtx{\Delta}}

\newcommand{\mId}{{\bf I}}

\newcommand{\setL}{\set{L}}

\newcommand{\setS}{\set{S}}

\newcommand{\setX}{\set{X}}

\graphicspath{{./figs/}}

\newlength{\imgwidth}
\newboolean{twoColVersion}
\setboolean{twoColVersion}{false}
\newcommand{\twoCol}[2]{\ifthenelse{\boolean{twoColVersion}} {#1} {#2} }

\usepackage{mathtools}

\date{}

\title{\LARGE \bf Structured Adaptive Tensor Prediction for Streaming Data}

\author{Zhen Qin  and Yang Chen\thanks{ZQ (email: zhenqin@umich.edu) is with the Michigan Institute for Computational Discovery and Engineering, Department of Statistics, Department of Electrical Engineering and Computer Science,  University of Michigan, Ann Arbor, MI 48109 USA;  and YC (email: ychenang@umich.edu) is with the Department of Statistics, University of Michigan, Ann Arbor, MI 48109 USA.}
}

\begin{document}

\maketitle

\begin{abstract}
Matrix-valued time series arise in a wide range of applications, such as spatio-temporal data from medical imaging and geophysics. Existing methods are mainly designed for static settings and lack adaptability to streaming and time-varying environments. Adaptive filtering techniques have also been largely limited to data with scalar or vector values, leaving adaptive forecasting for matrix-valued time series inadequately understood. To bridge these gaps, we develop an adaptive tensor regression framework that includes Matrix-on-Matrix (MoM) and Tensor-on-Matrix (ToM) formulations for streaming matrix-valued prediction. The two formulations differ in whether to directly model matrix-valued outputs or to exploit temporal structure via higher-order tensor representations. For the proposed tensor regression framework, we develop stochastic gradient descent (SGD) algorithms for online learning. We show that stacking multiple responses across time into higher-order tensors improves performance; in particular, the ToM achieves lower steady-state error and stronger denoising capability than MoM, motivating our focus on the  ToM model. We further characterize the tracking behavior of SGD under time-varying dynamics. From a statistical perspective, we establish fixed-time recovery guarantees for ToM under general low-dimensional structures, including sparsity, low-rankness, and their joint sparse–low-rank models. We show that the recovery error depends on the intrinsic degrees of freedom rather than the ambient dimension, providing a principled justification for structural modeling in adaptive settings. Building on this result, we develop a family of structured iterative hard thresholding (IHT) algorithms that incorporate sparse and low-rank projections. Extensive simulations and real-data experiments on global total electron content (TEC) forecasting demonstrate the effectiveness and robustness of the proposed framework.
\end{abstract}


\section{Introduction}
\label{sec: introduction}

Matrix-valued time series data have attracted increasing attention in a broad range of scientific and engineering domains. Representative examples include global Total Electron Content (TEC) maps in geophysics and space science, which capture coupled solar-temporal dynamics \cite{sun2022matrix,sun2023complete}; spatiotemporal precipitation and pollution fields in environmental monitoring \cite{bagheri2022machine}; multispectral images in remote sensing \cite{wang2023tensor}; brain activity measurements such as functional MRI (fMRI) voxel grids and EEG sensor arrays in neuroscience \cite{chen2014survey}; and origin-destination flow matrices that describe traffic dynamics in large-scale transportation systems \cite{xiong2020dynamic}. The inherent two-dimensional structure of such data, combined with the temporal dependencies across observations, poses fundamental challenges for both modeling and forecasting, and has motivated the development of a rich family of specialized methodologies.

The most straightforward approach is to vectorize each matrix observation and apply the classical Vector Autoregression (VAR) framework \cite{stock2001vector, wang2023regularized}. While being conceptually convenient, this vectorization strategy discards the intrinsic matrix structure of the data and fails to exploit any prior knowledge regarding the relational geometry among the constituent time series, often resulting in severely over-parameterized models that are difficult to estimate reliably. To overcome these limitations, Matrix Autoregression (MAR) \cite{chen2021autoregressive, sun2023matrix} has been proposed as a principled alternative that operates directly on matrix-valued observations, preserving the row and column structures and enabling a more parsimonious parameter representation. Furthermore, when the data naturally form a higher-order tensor -- for instance, by stacking a sequence of matrix observations along the temporal dimension--it is both natural and advantageous to extend MAR to the Tensor Autoregression (TAR) framework \cite{li2021multi, wang2024high}, which simultaneously exploits low-rank tensor geometry to achieve further dimensionality reduction.

Besides the autoregressive models above, the tensor-on-tensor (ToT) regression framework \cite{sun2017store, lock2018tensor, raskutti2019convex, llosa2022reduced, luo2024tensor, qin2025computational} provides another effective approach for predicting matrix-valued time series. In this framework, both the covariate tensor and the response tensor are represented as higher-order arrays, and their relationship is modeled via a tensor coefficient, often equipped with a low-rank structure to ensure statistical identifiability and computational efficiency. In the time-series setting, the covariate tensor is typically constructed from lagged observations of the target series or related auxiliary variables, while the response tensor encodes the target matrix-valued observations over time. Compared with autoregressive models such as MAR and TAR, which model temporal dependencies by applying structured transformations to past observations, the ToT framework instead learns a direct mapping from tensor-valued inputs to outputs. This formulation provides greater modeling flexibility and enables richer cross-dimensional interactions through a unified low-rank tensor regression structure.

Despite the aforementioned advances, existing approaches share a fundamental limitation: they assume a fixed autoregressive or prediction model trained on a stationary dataset. In practice, however, many systems are inherently dynamic, where incoming observations follow evolving distributions that may gradually deviate from historical patterns \cite{gama2014survey,iwashita2018overview,goldenberg2018survey}. Under such distribution shifts, models trained offline often suffer rapid performance degradation, and naive retraining on accumulated data is both computationally expensive and statistically inefficient, as it fails to exploit structural continuity over time. This mismatch between static model assumptions and dynamic data environments motivates the need for adaptive methodologies that can efficiently incorporate new observations while preserving the learned structure.

This challenge necessitates an online and recursive update mechanism that can adapt to evolving system dynamics in a streaming setting. A natural approach is to leverage adaptive filter algorithms (AFAs) from adaptive signal processing~\cite{haykin2008adaptive,sayed2011adaptive}. AFAs constitute a class of data-driven methods that recursively update filter coefficients based on streaming observations. Representative algorithms include the least mean squares (LMS)~\cite{hoff1960adaptive}, affine projection algorithm (APA)~\cite{ozeki1984adaptive}, recursive least squares (RLS)~\cite{plackett1950some}, and the Kalman filter~\cite{kalman1960new}. Beyond these classical methods, numerous variants have been developed to address specific structural and noise characteristics. For instance, sparse AFAs~\cite{duttweiler2002proportionate,benesty2002improved,chen2009sparse,gu2009lnorm,das2016improving,eksioglu2011rls,qin2020proportionate,qin2022proportionate,wang2024variable} exploit sparsity in the filter coefficients to enhance convergence and denoising performance. Robust adaptive filters~\cite{Chambers1994LMMN,Petrus1999Huber,Yousef2000Sign,Shao2010APS,chen2016generalized,peng2017constrained,huang2017maximum,chen2019maximum,qin2023proportionate} replace the mean-square error criterion with alternative loss functions to improve resilience under non-Gaussian or impulsive noise. In addition, nonlinear adaptive filtering frameworks~\cite{Mandic2009Complex,Xia2010AAPA,Liu2011KAF,Xia2011WidelyLinear,Raja2014BoxJenkins} have been proposed to overcome the inherent limitations of linear models, enabling more expressive representations for complex dynamical systems. Building upon these AFAs and their variants, a wide range of practical applications has been demonstrated, including channel estimation and equalization~\cite{qin2019direct,qin2020sparse,shao2025adaptive}, echo cancellation~\cite{naylor2006adaptive,deng2006proportionate}, speech enhancement~\cite{hu2006robust,jin2009speech}, frequency estimation in power systems~\cite{xia2011widely,xia2012adaptive}, and beamforming~\cite{hoshuyama2002robust,herbordt2007multichannel}. Despite their success, these methods have been largely restricted to scalar- or vector-valued time series, leaving adaptive forecasting for matrix- or tensor-valued streaming data largely unexplored. This limitation motivates the following central research question:
\vspace{0.2cm}
\smallskip
\begin{mdframed}[linewidth=0.3pt, leftmargin=0mm, rightmargin=0mm,
  innerleftmargin=1mm, innerrightmargin=1mm, innertopmargin=1mm, innerbottommargin=1mm]
\centering {\bf Question}: How can we adaptively forecast streaming matrix- or tensor-valued time series?
\end{mdframed}
\vspace{0.2cm}

In this paper, we address this problem within the framework of adaptive tensor regression. A fundamental modeling question is how to represent the streaming data: given auxiliary vector covariates as inputs, it is a priori unclear whether one should operate directly on matrix-valued outputs or construct higher-order tensor representations by stacking multiple time steps. To answer this question systematically, we propose two adaptive regression models --- the Matrix-on-Matrix (MoM) and Tensor-on-Matrix (ToM) formulations --- and develop stochastic gradient descent (SGD) algorithms to minimize the instantaneous squared loss in an online manner. A theoretical comparison of their steady-state behavior reveals that the ToM model, by lifting the regression to a higher-order tensor representation, achieves provably a lower steady-state error and superior denoising capability, which motivates our focus on the ToM formulation for the remainder of the paper. We further characterize the tracking performance of the SGD algorithm under the ToM model when the underlying system is time-varying.

To account for the low-dimensional structures commonly present in practical systems---including sparsity, low-rankness under Tucker, tensor train, tubal decompositions, and their combinations---we establish statistical recovery guarantees for the ToM regression problem at a fixed time instant, independent of any particular algorithmic choice. Our analysis shows that the recovery error scales with the intrinsic degrees of freedom\footnote{Here, the intrinsic degrees of freedom refer to the number of independent parameters required to specify the structured model.} of the underlying structure rather than the ambient dimension, providing a principled basis for structural exploitation.  Building on this foundation, we incorporate structural projections into the SGD updates and develop a family of iterative hard thresholding (IHT) algorithms tailored to each structure. Simulation studies and real-data experiments on the global total electron content (TEC) prediction demonstrate the effectiveness and robustness of the proposed models and algorithms.

{\bf Notation}: We use calligraphic letters (e.g., $\calY$) to denote tensors,  bold capital letters (e.g., $\mY$) to denote matrices, bold lowercase letters (e.g., $\vy$) to denote vectors, and italic letters (e.g., $y$) to denote scalar quantities.  Elements of matrices and tensors are denoted in parentheses, as in the Matlab notation. For example, $\calY(s_1, s_2, s_3)$ denotes the element in position $(s_1, s_2, s_3)$ of the $3$-order tensor $\calY$.
The inner product of $\calA\in\R^{d_1\times\dots\times d_N}$ and $\calB\in\R^{d_1\times\dots\times d_N}$ is denoted as $\<\calA, \calB \> = \sum_{s_1=1}^{d_1}\cdots \sum_{s_N=1}^{d_N} \calA(s_1,\dots,s_N)\calB(s_1,\dots,s_N) $.
The vectorization of  $\calX\in\R^{d_1\times\dots\times d_N}$, denoted as $\text{vec}(\calX)$, transforms the tensor $\calX$ into a vector. The $(s_1, \dots, s_N)$-th element of $\calX$ can be found in the vector $\text{vec}(\calX)$ at the position $s_1 + d_1(s_2-1) + \cdots + d_1d_2 \cdots d_{N-1}(s_N-1)$.  $\|\calX\|_F = \sqrt{\<\calX, \calX \>}$ is the Frobenius norm of $\calX$. $\|\mX\|_F$ represents the Frobenius norm of $\mX$. $\|\vx\|_2$ denotes the $l_2$ norm of $\vx$.
For two positive quantities $a,b\in \real$,  $b = O(a)$ means $b\leq c a$ for some universal constant $c$; likewise, $b = \Omega(a)$ represents $b\ge ca$ for some universal constant $c$.

\section{Adaptive Tensor-on-Matrix Regression Models}

As discussed in \Cref{sec: introduction},  adaptive signal processing methods~\cite{haykin2008adaptive, sayed2011adaptive} have long provided efficient frameworks for online parameter estimation from streaming data. However, these classical approaches are designed for scalar- or vector-valued signals and do not readily extend to settings where both the input and output are matrix or tensor-valued. To bridge this gap, we study an online regression problem where the goal is to predict a matrix-valued response from streaming vector inputs. Specifically, we seek to learn an unknown low-dimensional mapping $\calX^\star$ from sequentially observed data. Such mappings naturally arise in spatio-temporal applications, where observations are often high-dimensional, structured, and collected over time.
Let $\{\va_i,\mY_i\}_{i=1}^N$ denote a sequence of training samples, where $\va_i\in\R^{d_3}$ represents the input feature vector and $\mY_i\in\R^{d_1\times d_2}$ denotes the corresponding matrix-valued response. Since the response at time $i$ may depend not only on the current input but also on recent observations, we incorporate temporal context into the regression model. To this end, we stack the most recent $d_4$ input vectors into a matrix
\begin{eqnarray}
    \label{stacked input signal}
    \mA_i = \begin{bmatrix} \va_i & \va_{i-1} & \cdots & \va_{i-d_4+1}    \end{bmatrix} \in\R^{d_3\times d_4}.
\end{eqnarray}
Here, $\mA_i$ represents the measurement operator. This formulation allows us to capture both short-term and long-term dependencies in the input sequence. Since $\mA_i$ is matrix-valued, the regression coefficient relating $\mA_i$ to the response is naturally represented by a higher-order tensor that encodes the underlying multilinear mapping.
This motivates the framework of \emph{adaptive tensor regression}, in which a tensor-valued coefficient is updated incrementally as new observations arrive. However, due to the lack of a systematic framework for matrix-input prediction, it remains unclear whether the response should be modeled directly as a matrix-valued time series or lifted to a higher-order tensor representation that explicitly captures temporal dependencies across multiple time steps. To address this modeling ambiguity, we introduce Matrix-on-Matrix (MoM) and Tensor-on-Matrix (ToM) regression models as two complementary formulations.  In the MoM model, both the predictor and response are matrix-valued, leading to a fourth-order regression coefficient $\calX^\star \in \R^{d_1 \times d_2 \times d_3 \times d_4}$. In contrast, the ToM model treats the response as a third-order tensor that aggregates temporal information, resulting in a third-order coefficient tensor $\calX^\star \in \R^{d_1 \times d_2 \times d_3}$. These two formulations serve as the foundation for the adaptive algorithms developed in this work. A schematic comparison of the two models is provided in \Cref{Comparison between the proposed MoM and ToM regression models}.
\begin{figure}[!ht]
\centering
\includegraphics[width=0.5\linewidth]{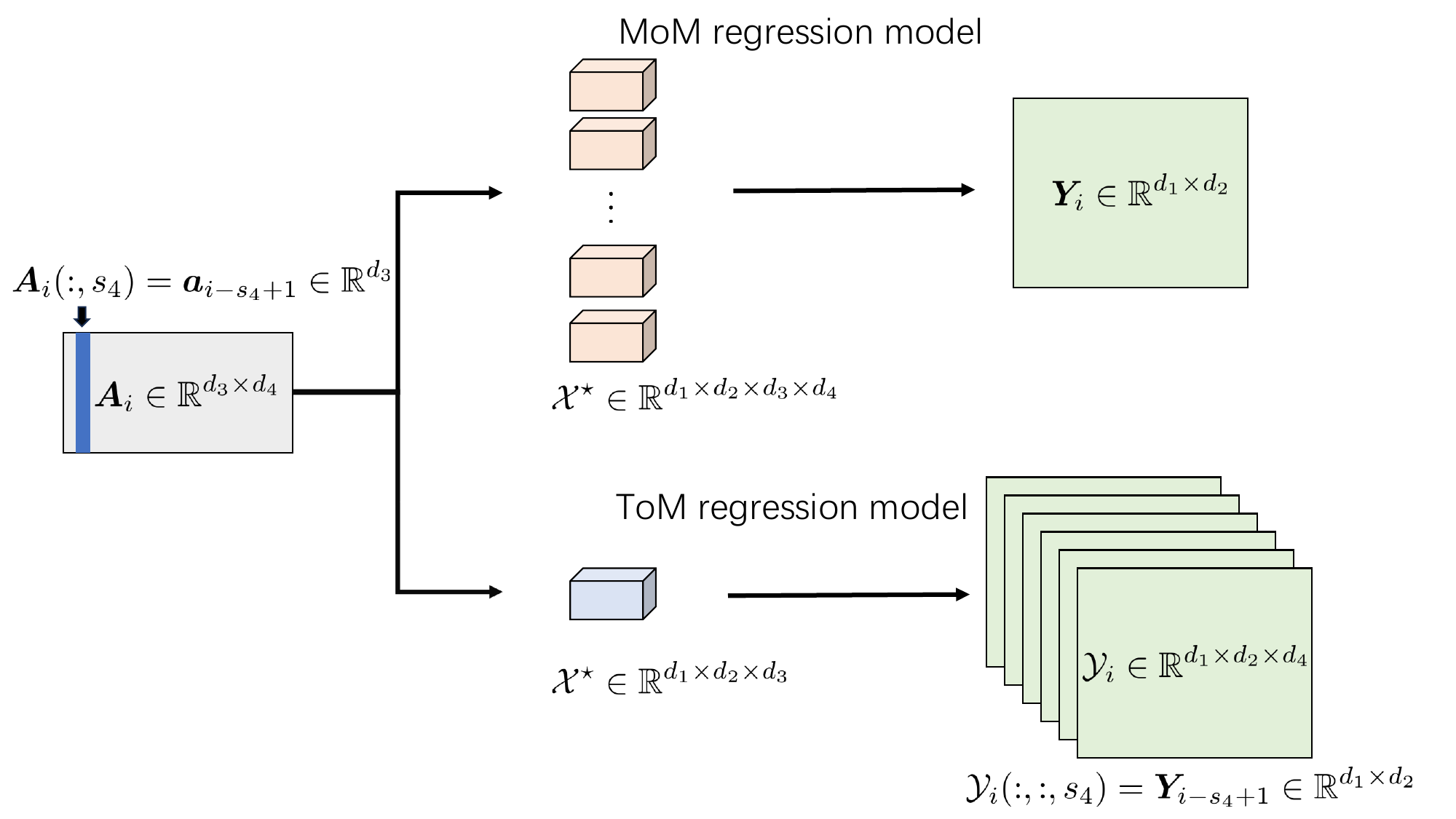}
\caption{Comparison between the proposed MoM and ToM regression models. }
\label{Comparison between the proposed MoM and ToM regression models}
\end{figure}

\paragraph{Matrix-on-Matrix (MoM) Model: Predicting the Current Observation} The most straightforward formulation, referred to as the MoM regression model, predicts the current response $\mY_i$ from the measurement matrix $\mA_i$ via a fourth-order regression tensor $\calX_i^\star \in \R^{d_1 \times d_2 \times d_3 \times d_4}$:
\begin{eqnarray}
    \label{ToM regression model I}
    \mY_i= \calX_i^\star \times_{3,4}^{1,2} \mA_i +  \mE_i,
\end{eqnarray}
where $\mE_i\in\R^{d_1\times d_2}$ denotes the noise. The multilinear map $\times_{3,4}^{1,2}$ contracts the last two modes of $\calX_i^\star$ against $\mA_i$, yielding a matrix-valued output:
\begin{eqnarray}
    (\calX_i^\star \times_{3,4}^{1,2} \mA_i)(s_1, s_2) =
    \sum_{s_3=1}^{d_3} \sum_{s_4=1}^{d_4}
    \calX_i^\star(s_1,s_2,s_3,s_4)\, \mA_i(s_3,s_4).
\end{eqnarray}
Given the observation pair $(\mA_i, \mY_i)$, the estimator $\calX_i$ is obtained by minimizing the instantaneous squared loss
\begin{eqnarray}
    \label{loss function of X model I}
    \min_{\calX_i\in\R^{d_1\times d_2 \times d_3 \times d_4}} f(\calX_i) =  \frac{1}{2}\|\calX_i \times_{3,4}^{1,2} \mA_i -   \mY_i\|_F^2 .
\end{eqnarray}
Taking a stochastic gradient decent (SGD) step with respect to $\calX_i$ yields the adaptive update
\begin{eqnarray}
    \label{updating method X model I}
    \calX_{i+1} = \calX_i - \mu (\calX_i \times_{3,4}^{1,2} \mA_i -   \mY_i) \circ \mA_i,
\end{eqnarray}
where the outer product $(\mA \circ \mB)(s_1,s_2,s_3,s_4) = \mA(s_1,s_2)\mB(s_3,s_4)$ produces a fourth-order tensor, and $\mu > 0$ is the step size. This update has the same structural form as the classical LMS rule, generalized to the tensor setting: the coefficient is corrected by the current prediction error, scaled by the input. While computationally efficient, MoM regression model uses only the single output $\mY_i$ at each iteration, leaving the temporal structure encoded across the columns of $\mA_i$ largely unexploited.

\paragraph{Tensor-on-Matrix (ToM) Model: Jointly Predicting a Sequence of Observations} To more fully leverage the temporal information contained in $\mA_i$, we extend the prediction target from a single response to the entire sequence of $d_4$ consecutive observations $\{\mY_j\}_{j=i-d_4+1}^{i}$. Stacking these outputs into a third-order tensor $\calY_i \in \R^{d_1 \times d_2 \times d_4}$, where $\calY_i(:,:,s_4) = \mY_{i-s_4+1} $, the ToM regression model takes the form
\begin{eqnarray}
    \label{ToM regression general data model II}
    \calY_i = \calX_i^\star \times_{3}^{1} \mA_i + \calE_i,
\end{eqnarray}
where $\calX_i^\star \in \R^{d_1 \times d_2 \times d_3}$ is now a third-order regression tensor and $\calE_i$ denotes noise. The contraction $\times_{3}^{1}$ couples the third mode of $\calX_i^\star$ with the first mode of $\mA_i$:
\begin{eqnarray}
    (\calX_i^\star \times_{3}^{1} \mA_i)(s_1,s_2,s_4) =
    \sum_{s_3=1}^{d_3} \calX_i^\star(s_1,s_2,s_3)\, \mA_i(s_3,s_4),
\end{eqnarray}
producing a third-order tensor output whose second mode indexes the temporal observations. The estimator solves
\begin{eqnarray}
    \label{loss function of X another model II}
    \min_{\calX_i\in\R^{d_1\times d_2 \times d_3}} f(\calX_i) =  \frac{1}{2d_4}\|\calX_i \times_{3}^{1} \mA_i -   \calY_i  \|_F^2.
\end{eqnarray}
where the $1/d_4$ normalization accounts for the $d_4$ output terms and ensures comparability of the loss scale across different window sizes. The corresponding stochastic gradient descent (SGD) update is given by
\begin{eqnarray}
    \label{updating method X another model II}
    \calX_{i+1} = \calX_i - \frac{\mu}{d_4} (\calX_i \times_{3}^{1} \mA_i -   \calY_i) \times_{3}^{2} \mA_i.
\end{eqnarray}
By jointly fitting all $d_4$ outputs at each step, ToM regression model effectively aggregates gradient information across the temporal window, leading to a more informative update direction. This averaging mechanism is analogous to mini-batch SGD, where using multiple samples per iteration reduces gradient variance and improves convergence behavior.

The two models thus represent complementary design choices: MoM regression model prioritizes simplicity and low per-iteration cost, whereas ToM regression model trades a modest increase in complexity for a richer exploitation of temporal structure. As we establish in Section~\ref{sec:steady-state performance}, this structural advantage translates into a provably lower steady-state error for ToM regression model, making it the preferred choice in noise-sensitive applications.

\subsection{Steady-state Performance}
\label{sec:steady-state performance}

The two models represent complementary design choices within the proposed adaptive tensor regression framework. MoM regression model operates on a single output per iteration, resulting in a lightweight update that resembles the classical LMS rule in the tensor setting. ToM regression model, by contrast, jointly processes $d_4$ consecutive outputs at each step, effectively aggregating gradient information across the temporal window at the cost of a moderately more involved update. A natural question is whether this increased temporal context translates into a measurable improvement in estimation accuracy. To address this, we analyze the steady-state behavior of the SGD updates in \eqref{updating method X model I} and
\eqref{updating method X another model II}, characterizing the limiting mean squared error to which each algorithm converges as a function of the step size $\mu$, noise level $\sigma_e^2$, and problem dimensions. This analysis provides a principled basis for comparing the noise suppression capabilities of the two algorithms and offers practical guidance on step size selection.

We formally establish these results under the following standard assumptions.
\begin{enumerate}[(i)]
\item   The entries of the noise matrix $\mE_i$ and $\calE_i$ are independent and identically distributed (i.i.d.)\ Gaussian with zero mean and variance $\sigma_e^2$, and are independent of $\mA_i$;
\item The entries of $\mA_i$ are i.i.d.\ Gaussian with zero mean and variance $\sigma_a^2$;
\item   The time-varying model $\{ \calX_i^\star \}$ is fixed to a constant tensor $\calX^\star$ to facilitate the analysis.
\end{enumerate}
Under these assumptions, we establish the following results for Models~I and~II.
\begin{theorem} (Steady-state error of MoM regression model)
\label{The main theorem of error in the model I}
Under these assumptions, the SGD-based estimator in \eqref{updating method X model I} satisfies
\begin{eqnarray}
    \label{steady state performance in the MoM regression model main theorem}
    \lim_{i\to\infty}\E[\|\calX_{i} - \calX^\star\|_F^2]  =     \frac{\mu d_1d_2d_3d_4\sigma_e^2}{2 - \mu(d_3d_4 + 2)\sigma_a^2},
\end{eqnarray}
provided that the step size obeys $\mu < \frac{2}{(d_3d_4 + 2)\sigma_a^2}$. Here, the expectation is taken with respect to the joint distribution of the streaming inputs and observation noise, which constitute the sources of randomness driving the SGD iterates $\{\calX_i\}$.
\end{theorem}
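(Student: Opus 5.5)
The plan is to reduce the fourth-order update \eqref{updating method X model I} to a family of classical LMS recursions and run the standard mean-square analysis on them. Let $\mDelta$-type notation $\calD_i := \calX_i - \calX^\star$ denote the error tensor. Substituting the model \eqref{ToM regression model I} (with $\calX_i^\star\equiv\calX^\star$ by assumption (iii)) into \eqref{updating method X model I} gives
\begin{equation*}
\calD_{i+1} = \calD_i - \mu\,(\calD_i\times_{3,4}^{1,2}\mA_i)\circ\mA_i + \mu\,\mE_i\circ\mA_i .
\end{equation*}
Set $n := d_3d_4$ and $\va := \text{vec}(\mA_i)\in\R^{n}$, and for each output index $(s_1,s_2)$ let $\vct{\delta}_{s_1s_2,i}\in\R^{n}$ be the vectorization of the slice $\calD_i(s_1,s_2,:,:)$. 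The recursion then decouples into $d_1d_2$ recursions with the same input:
\begin{equation*}
\vct{\delta}_{s_1s_2,i+1} = (\mId - \mu\,\va\va^\top)\,\vct{\delta}_{s_1s_2,i} + \mu\,\mE_i(s_1,s_2)\,\va .
\end{equation*}
Since $\|\calD_i\|_F^2=\sum_{s_1,s_2}\|\vct{\delta}_{s_1s_2,i}\|_2^2$, it suffices to find the steady-state mean-square deviation of one such LMS filter of length $n$ and multiply it by $d_1d_2$.

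For one filter, I will take squared norms and expectations. The cross term $2\mu\,\E[\mE_i(s_1,s_2)\,\vct{\delta}^\top(\mId-\mu\va\va^\top)\va]$ vanishes because the noise is zero mean and independent of both $\mA_i$ and $\vct{\delta}_{s_1s_2,i}$; the latter depends only on past data. The noise term contributes $\mu^2\sigma_e^2\,\E\|\va\|_2^2=\mu^2\sigma_e^2 n\sigma_a^2$.

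For the main term I need $\E[(\mId-\mu\va\va^\top)^2] = \mId - 2\mu\sigma_a^2\mId + \mu^2\E[\|\va\|_2^2\,\va\va^\top]$. By the Gaussian fourth-moment (Isserlis) identity, $\E[\|\va\|_2^2\va\va^\top]=(n+2)\sigma_a^4\mId$. Because this matrix is a multiple of the identity, the recursion closes in the scalar $m_i:=\E\|\vct{\delta}_{s_1s_2,i}\|_2^2$:
\begin{equation*}
m_{i+1} = \bigl(1-2\mu\sigma_a^2+\mu^2(n+2)\sigma_a^4\bigr)\,m_i + \mu^2 n\sigma_a^2\sigma_e^2 .
\end{equation*}
The contraction factor lies in $[0,1)$ exactly when $0<\mu<2/((n+2)\sigma_a^2)$, which is the stated step-size condition. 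In that regime $m_i$ converges to the fixed point
\begin{equation*}
\frac{\mu^2 n\sigma_a^2\sigma_e^2}{2\mu\sigma_a^2-\mu^2(n+2)\sigma_a^4}=\frac{\mu n\sigma_e^2}{2-\mu(n+2)\sigma_a^2}.
\end{equation*}
Multiplying by $d_1d_2$ and substituting $n=d_3d_4$ gives \eqref{steady state performance in the MoM regression model main theorem}.

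The main obstacle is justifying that $\mA_i$ is independent of $\calD_i$, which the factorization $\E[\vct{\delta}^\top(\mId-\mu\va\va^\top)^2\vct{\delta}]=\E[\vct{\delta}^\top\E[(\mId-\mu\va\va^\top)^2]\vct{\delta}]$ requires. The windows in \eqref{stacked input signal} overlap across time, so consecutive $\mA_i$ share columns and the iterate $\calD_i$ genuinely depends on columns of $\mA_i$. My first approach is to read assumption (ii) as saying that each $\mA_i$ is a fresh i.i.d.\ Gaussian matrix independent of the past, which is the classical LMS independence assumption. Under that reading the computation above is exact.

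If a rigorous treatment of the shift structure were required, I would instead bound the correlation between $\calD_i$ and the $d_4-1$ reused columns. That correlation is $O(\mu)$, so it should perturb the result only at higher order in $\mu$; I would present this as the regime in which the stated formula is the standard steady-state characterization.
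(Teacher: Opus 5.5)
Your proposal is correct and follows essentially the same route as the paper: rewrite the update as an LMS recursion driven by $\text{vec}(\mA_i)$, drop the noise cross term using zero mean and independence, evaluate the quadratic term with the Gaussian fourth-moment identity $\E[\|\va\|_2^2\va\va^\top]=(d_3d_4+2)\sigma_a^4\mId$, and solve the scalar fixed-point equation; the paper does this on the whole reshaped $d_1d_2\times d_3d_4$ matrix via traces rather than row by row. Like the paper, you need the classical independence assumption between $\mA_i$ and the current iterate, which the paper uses silently despite the overlapping windows, and your contraction-factor argument additionally justifies the existence of the limit, which the paper simply takes for granted.
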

The proof is provided in {Appendix}~\ref{Proof of error analysis SGD of model I}. This result establishes the steady-state mean-squared error (MSE) of the SGD update \eqref{updating method X model I} as a closed-form expression in terms of the step size $\mu$, the noise variance $\sigma_e^2$, the input variance $\sigma_a^2$, and the problem dimensions $d_1, \dots, d_4$. The stability condition $\mu < \frac{2}{(d_3d_4+2)\sigma_a^2}$ further reveals that the admissible step size shrinks as the problem dimension grows, consistent with the behavior of classical LMS. To examine whether the richer temporal structure of ToM regression model yields a provable reduction in steady-state error, we now derive the analogous result.
\begin{theorem} (Steady-state error of ToM regression model)
\label{The main theorem of error in the model II}
Under these assumptions, the SGD-based estimator in \eqref{updating method X another model II} obeys
\begin{eqnarray}
    \label{steady state performance in the ToM regression model main theorem}
    \lim_{i\to\infty}\E[\|\calX_{i} - \calX^\star\|_F^2]  =   \frac{\mu d_1d_2d_3\sigma_e^2}{2d_4 - \mu(d_3 + d_4 + 1)\sigma_a^2},
\end{eqnarray}
where the step size satisfies $\mu < \frac{2d_4}{(d_3 + d_4 + 1)\sigma_a^2}$. Here, the expectation is taken with respect to the joint distribution of the streaming inputs and observation noise, which constitute the sources of randomness driving the SGD iterates $\{\calX_i\}$.
\end{theorem}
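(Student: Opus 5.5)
We will work with the error tensor $\mDelta_i := \calX_i - \calX^\star$ and reduce the ToM update to a matrix LMS-type recursion by unfolding along the third mode. Let $\mDelta_i^{\langle 3\rangle}\in\R^{d_1d_2\times d_3}$ denote the matrix whose rows are indexed by $(s_1,s_2)$ and columns by $s_3$, and let $\mE_i^{\langle 3\rangle}\in\R^{d_1d_2\times d_4}$ be the analogous unfolding of $\calE_i$. Then $\calX\times_3^1\mA_i$ unfolds to $\mDelta^{\langle 3\rangle}\mA_i$ (plus the true signal), and the contraction $\times_3^2\mA_i$ unfolds to right-multiplication by $\mA_i^\top$. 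Substituting \eqref{ToM regression general data model II} into \eqref{updating method X another model II} and writing $c=\mu/d_4$ gives
\begin{equation*}
\mDelta_{i+1}^{\langle 3\rangle} = \mDelta_i^{\langle 3\rangle}\bigl(\mId - c\,\mA_i\mA_i^\top\bigr) + c\,\mE_i^{\langle 3\rangle}\mA_i^\top .
\end{equation*}
Here $\|\mDelta_i\|_F=\|\mDelta_i^{\langle 3\rangle}\|_F$, so it suffices to track $m_i:=\E\|\mDelta_i^{\langle 3\rangle}\|_F^2$.

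Next we take the squared Frobenius norm and the expectation. The cross term vanishes because $\calE_i$ has zero mean and is independent of $\mA_i$ and of $\mDelta_i$ (the latter depends only on past noise). This leaves two pieces. The noise piece is $c^2\E\|\mE_i^{\langle 3\rangle}\mA_i^\top\|_F^2=c^2 d_1d_2d_3d_4\sigma_e^2\sigma_a^2$, since each of the $d_1d_2d_3$ entries is a sum of $d_4$ independent products with variance $\sigma_e^2\sigma_a^2$. The signal piece is $\E\,\trace\bigl(\mDelta_i^{\langle 3\rangle}(\mId-c\mA_i\mA_i^\top)^2\mDelta_i^{\langle 3\rangle\top}\bigr)$.

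For the signal piece we need two Wishart moments of $\mW=\mA_i\mA_i^\top$, which has $d_4$ degrees of freedom in dimension $d_3$: $\E\mW=d_4\sigma_a^2\mId$ and $\E\mW^2=d_4(d_3+d_4+1)\sigma_a^4\mId$. The second follows from Isserlis' theorem; the diagonal entry is $\sum_{k,l,m}\E[A_{jk}A_{lk}A_{lm}A_{jm}]$ evaluated termwise. Using these moments, the recursion becomes scalar:
\begin{equation*}
m_{i+1}=\Bigl(1-2\mu\sigma_a^2+\tfrac{\mu^2\sigma_a^4(d_3+d_4+1)}{d_4}\Bigr)m_i+\tfrac{\mu^2}{d_4}d_1d_2d_3\sigma_e^2\sigma_a^2 .
\end{equation*}
The contraction factor lies in $[0,1)$ exactly when $\mu<\frac{2d_4}{(d_3+d_4+1)\sigma_a^2}$; it is nonnegative automatically because it equals $\E\|\mId-c\mW\|$-type second moments. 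Under that condition $m_i$ converges to the fixed point
\begin{equation*}
m_\infty=\frac{\mu^2 d_1d_2d_3\sigma_e^2\sigma_a^2/d_4}{2\mu\sigma_a^2-\mu^2\sigma_a^4(d_3+d_4+1)/d_4}=\frac{\mu d_1d_2d_3\sigma_e^2}{2d_4-\mu(d_3+d_4+1)\sigma_a^2},
\end{equation*}
which is the formula claimed in \eqref{steady state performance in the ToM regression model main theorem}.

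The main obstacle is the step that factors $\E[\mDelta_i^{\langle 3\rangle}g(\mA_i)\mDelta_i^{\langle 3\rangle\top}]$ as though $\mA_i$ were independent of $\mDelta_i$. Because of the sliding-window construction \eqref{stacked input signal}, $\mA_i$ shares $d_4-1$ columns with $\mA_{i-1}$, so this independence does not hold literally. My plan is to adopt the classical independence assumption of adaptive filtering theory (as in the analysis of MoM in \Cref{The main theorem of error in the model I}), interpreting assumption (ii) as giving fresh i.i.d. Gaussian $\mA_i$ at each step. I would state this interpretation explicitly. With it in place, the remaining computations are routine.
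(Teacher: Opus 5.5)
Your proposal is correct and follows essentially the same route as the paper's proof. Both reshape to $\mX_i\in\R^{d_1d_2\times d_3}$, expand $\E\|\mX_{i+1}-\mX^\star\|_F^2$, evaluate the cross and quadratic terms via $\E[\mA_i\mA_i^\top]=d_4\sigma_a^2\mId$ and the Gaussian fourth-moment identity (equivalently your $\E[(\mA_i\mA_i^\top)^2]=d_4(d_3+d_4+1)\sigma_a^4\mId$), and solve the scalar fixed-point equation. Two things you add are left implicit in the paper: you state the independence assumption across $i$ explicitly, and you check that the contraction factor lies in $[0,1)$, which shows the limit exists rather than assuming it.
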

The proof has been provided in {Appendix}~\ref{Proof of error analysis SGD of model II}. A direct comparison of \eqref{steady state performance in the MoM regression model main theorem} and \eqref{steady state performance in the ToM regression model main theorem} reveals a fundamental structural difference: the denominator of ToM regression model's steady-state MSE scales linearly with
$d_4$, whereas that of MoM regression model does not. This implies that, for fixed $\mu$, $\sigma_e^2$, and $\sigma_a^2$, increasing the window size $d_4$ strictly reduces the steady-state error of ToM regression model---an effect that can be interpreted as a temporal averaging of the noise across the $d_4$ outputs. Furthermore, when $d_4 = 1$, the ToM formulation reduces to the MoM formulation, and the steady-state MSE expressions in \eqref{steady state performance in the MoM regression model main theorem} and \eqref{steady state performance in the ToM regression model main theorem} coincide. Taken together, Theorems~\ref{The main theorem of error in the model I} and~\ref{The main theorem of error in the model II} establish that ToM regression model achieves provably lower steady-state MSE than MoM regression model under identical conditions, at the cost of a moderately more complex update rule.

\paragraph{Simulation} In this part, we verify the effectiveness of \Cref{The main theorem of error in the model I} and \Cref{The main theorem of error in the model II}. The ground-truth tensors $\calX^\star$ in \eqref{ToM regression model I} and \eqref{ToM regression general data model II} are generated from a standard Gaussian distribution. The SGD algorithm is initialized with the zero tensor and executed for $20000$ iterations to ensure convergence. To mitigate randomness, we conduct $50$ independent Monte Carlo trials and report the average performance across these runs. As shown in \Cref{steady state of MoM regression model,steady state of ToM regression model}, the simulated performance of the SGD algorithms is consistent with the theoretical predictions in \eqref{steady state performance in the MoM regression model main theorem} and \eqref{steady state performance in the ToM regression model main theorem}, thereby validating our theoretical analysis. Furthermore, the steady-state error grows as the noise variance $\sigma_e^2$ increases, which aligns with intuition. Interestingly, the impact of $d_4$ differs across the two models: in MoM regression model, a larger $d_4$ leads to a higher steady-state error, whereas in ToM regression model, a larger $d_4$ reduces the error. This contrast highlights that the additional dimension $d_4$ in ToM regression model plays a denoising role, thereby enhancing robustness to noise.

\begin{figure}[htbp]
\centering
\subfigure[]{
\begin{minipage}[t]{0.45\textwidth}
\centering
\includegraphics[width=5.5cm]{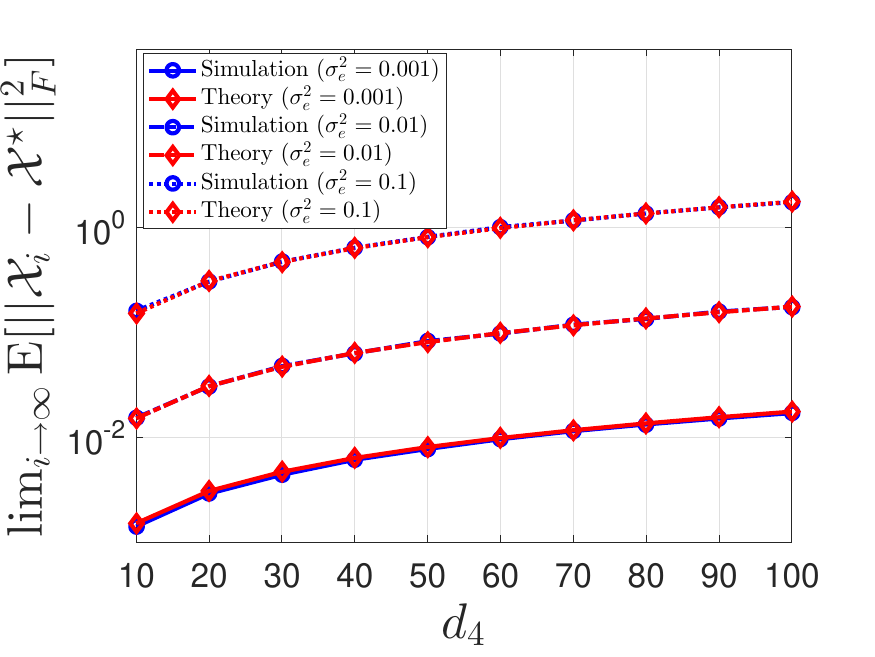}
\end{minipage}
\label{steady state of MoM regression model}
}
\subfigure[]{
\begin{minipage}[t]{0.45\textwidth}
\centering
\includegraphics[width=5.5cm]{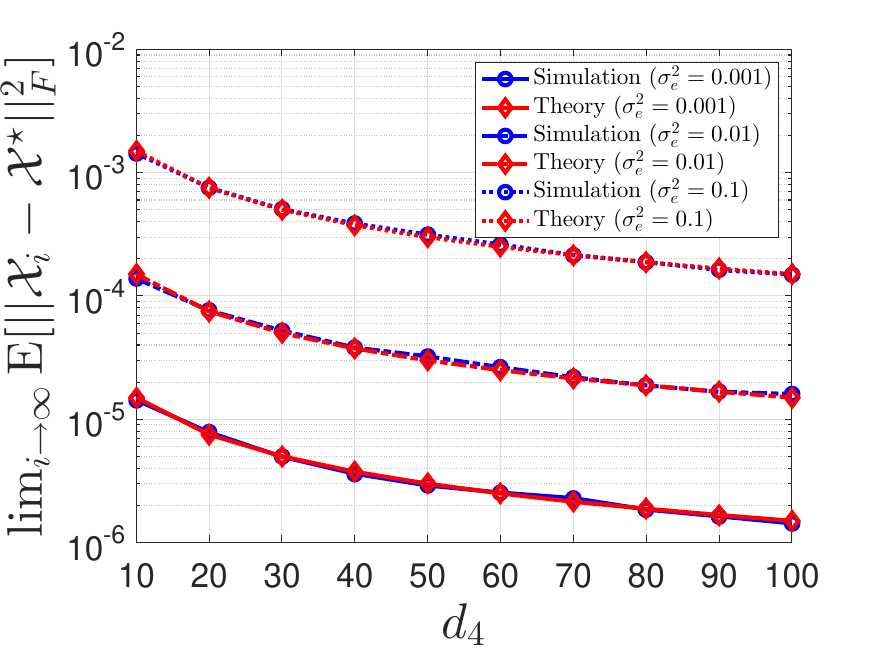}
\end{minipage}
\label{steady state of ToM regression model}
}
\caption{Steady-state errors of (a) the MoM regression model and (b) the ToM regression model for different $d_4$ and $\sigma_e^2$ with $d_1=d_2 = 10$, $d_3=3$, $\sigma_a^2 = 1$ and $\mu = 10^{-3}$.}
\end{figure}

\subsection{Tracking Ability of Tensor-on-Matrix Regression}
\label{sec:tracking ability performance}

The steady-state analysis in Section~\ref{sec:steady-state performance} establishes that ToM regression model achieves a provably lower limiting MSE than MoM regression model, owing to its temporal averaging effect across $d_4$ consecutive outputs. We therefore restrict attention to ToM regression model for the remainder of the analysis and study its behavior in the more challenging setting where the underlying tensor is time-varying.

In adaptive signal processing, the ability of adaptive algorithms to track a drifting system is a fundamental performance criterion, complementary to steady-state error~\cite{haykin2008adaptive,sayed2011adaptive}. To model temporal variations in the ground-truth tensor, we adopt a random-walk (RW) model~\cite{bernard1985adaptive}:
\begin{eqnarray}
\label{RW model definition}
\calX_i^\star = (1-\lambda)\calX_{i-1}^\star + \lambda\calQ_i,
\end{eqnarray}
where $0 \leq \lambda < 1$ is a positive constant, each entry of $\calQ_i \in \R^{d_1\times d_2 \times d_3}$ is i.i.d.\ Gaussian with zero mean and variance $\sigma_q^2$, independent of all entries of $\mA_i$, $\mE_i$ and $\{\calX_k^\star\}_{k=0}^{i-1}$ in the ToM regression model. This model captures slow temporal variations in the underlying tensor. The parameter $\lambda$ controls the rate of temporal variation: smaller values of $\lambda$ correspond to slowly varying systems with strong temporal correlation, whereas larger values of $\lambda$ lead to faster system evolution and a more challenging tracking task. Under this setting, the following theorem characterizes the steady-state tracking error of the SGD-based estimator, providing insight into its ability to track the time-varying tensor.
\begin{theorem} (Steady state tracking error of ToM regression model)
\label{The main theorem of tracking ability in the model II}
Under the assumptions (i)-(ii) in \Cref{sec:steady-state performance}, suppose that $\calX_i^\star$ follows the RW model in \eqref{RW model definition} and that each element of $\calX_0^\star$ has zero mean and bounded variance. Then, the SGD-based estimator in \eqref{updating method X another model II} satisfies
\begin{eqnarray}
\label{tracking ability performance in the ToM regression model main theorem}
    \lim_{i\to\infty}\E[\|\calX_{i} - \calX_i^\star\|_F^2]    = \frac{\mu\sigma_e^2d_1d_2d_3 + \dfrac{2\lambda^2 d_1d_2d_3 d_4\sigma_q^2 \big[ 1 - \lambda(2-\lambda)(\lambda + \mu\sigma_a^2 - \lambda\mu\sigma_a^2) \big]}{(2-\lambda)(\lambda + \mu\sigma_a^2 - \lambda\mu\sigma_a^2)}}{2d_4 - \mu(d_3+d_4+1)\sigma_a^2},
\end{eqnarray}
provided that the step size satisfies $\mu< \min\bigg\{\frac{1-2\lambda^2+\lambda^3}{\lambda(2 - \lambda)(1-\lambda)\sigma_a^2}, \frac{2d_4}{(d_3 + d_4 + 1)\sigma_a^2} \bigg\}$, where we note that the first bound is only relevant when $\lambda > 0$, while the second bound applies for all $\lambda \in [0,1)$. Here, the expectation is taken with respect to the joint distribution of the streaming inputs and observation noise, which constitute the sources of randomness driving the SGD iterates $\{\calX_i\}$.
\end{theorem}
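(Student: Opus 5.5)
The plan is to write a linear recursion for the second moment of the tracking error $\mDelta_i := \calX_i - \calX_i^\star$, couple it to a recursion for the cross-correlation $c_i := \E\<\mDelta_i, \calX_i^\star\>$, and then pass to the fixed point. Write $d := d_1 d_2 d_3$. By \eqref{ToM regression general data model II}, $\calY_i = \calX_i^\star\times_3^1\mA_i + \calE_i$. Substituting this into \eqref{updating method X another model II} and using \eqref{RW model definition} at index $i+1$ gives
\begin{eqnarray*}
\mDelta_{i+1} = \mDelta_i - \frac{\mu}{d_4}(\mDelta_i\times_3^1\mA_i)\times_3^2\mA_i + \frac{\mu}{d_4}\calE_i\times_3^2\mA_i + \lambda\calX_i^\star - \lambda\calQ_{i+1}.
\end{eqnarray*}
In mode-$3$ unfolded form, with $\mDelta_{(3)}\in\R^{d_1d_2\times d_3}$, the first two terms read $\mDelta_{(3)}(\mId - \frac{\mu}{d_4}\mA_i\mA_i^\top)$. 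The iterate $\calX_i$ depends only on data up to time $i-1$, so $\mDelta_i$ and $\calX_i^\star$ are independent of $\mA_i$, $\calE_i$ and $\calQ_{i+1}$.

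First I would compute the needed input moments, taking the Gaussian facts from the proof of \Cref{The main theorem of error in the model II}. These are $\E[\mA_i\mA_i^\top] = d_4\sigma_a^2\mId$ and the Wishart identity $\E\|\mM\mA_i\mA_i^\top\|_F^2 = \sigma_a^4 d_4(d_3+d_4+1)\|\mM\|_F^2$. Together they give the contraction factor
\begin{eqnarray*}
\beta := 1 - 2\mu\sigma_a^2 + \mu^2\sigma_a^4\frac{d_3+d_4+1}{d_4}.
\end{eqnarray*}
The noise contributes $\frac{\mu^2}{d_4}\sigma_e^2\sigma_a^2 d$. The drift term contributes $\lambda^2(\E\|\calX_i^\star\|_F^2 + d\sigma_q^2)$. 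The only surviving cross term is $2\lambda(1-\mu\sigma_a^2)c_i$. The cross terms involving $\calE_i$ or $\calQ_{i+1}$ vanish by independence and zero mean. This yields
\begin{eqnarray*}
\E\|\mDelta_{i+1}\|_F^2 = \beta\,\E\|\mDelta_i\|_F^2 + \frac{\mu^2\sigma_e^2\sigma_a^2 d}{d_4} + \lambda^2\big(\E\|\calX_i^\star\|_F^2 + d\sigma_q^2\big) + 2\lambda(1-\mu\sigma_a^2)c_i .
\end{eqnarray*}

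Second, I would handle the auxiliary sequences. From \eqref{RW model definition}, and using that $\calX_0^\star$ has zero mean and bounded variance, $\E\|\calX_i^\star\|_F^2$ converges to $V = d\lambda\sigma_q^2/(2-\lambda)$, since $|1-\lambda|<1$. Taking the inner product of the $\mDelta_{i+1}$ expression with $\calX_{i+1}^\star = (1-\lambda)\calX_i^\star + \lambda\calQ_{i+1}$ gives
\begin{eqnarray*}
c_{i+1} = (1-\lambda)(1-\mu\sigma_a^2)c_i + \lambda(1-\lambda)\E\|\calX_i^\star\|_F^2 - \lambda^2 d\sigma_q^2 .
\end{eqnarray*}
Convergence of this recursion requires $|(1-\lambda)(1-\mu\sigma_a^2)|<1$. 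Here I expect the first step-size bound in the statement to enter, possibly in combination with positivity of the final numerator; I would verify exactly which inequality it encodes. The limit is then
\begin{eqnarray*}
c = \frac{-\lambda^2 d\sigma_q^2}{(2-\lambda)(\lambda+\mu\sigma_a^2-\lambda\mu\sigma_a^2)},
\end{eqnarray*}
because $\lambda(1-\lambda)V - \lambda^2 d\sigma_q^2 = -\lambda^2 d\sigma_q^2/(2-\lambda)$ and $1-(1-\lambda)(1-\mu\sigma_a^2) = \lambda+\mu\sigma_a^2-\lambda\mu\sigma_a^2$.

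Third, the second stability condition $\mu<\frac{2d_4}{(d_3+d_4+1)\sigma_a^2}$ gives $\beta<1$. The second-moment recursion is then a stable linear recursion with convergent forcing, so the limit exists and equals the forcing divided by $1-\beta = \mu\sigma_a^2\big(2d_4-\mu(d_3+d_4+1)\sigma_a^2\big)/d_4$. The noise part reproduces the numerator term $\mu\sigma_e^2 d$, as in \Cref{The main theorem of error in the model II}. The drift part is $\lambda^2(V + d\sigma_q^2) + 2\lambda(1-\mu\sigma_a^2)c$, and it must be algebraically rearranged into the bracketed expression in \eqref{tracking ability performance in the ToM regression model main theorem}.

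The main obstacle is the bookkeeping of the cross-correlation term $c_i$, including the exact stability region for its recursion. The final algebraic simplification against the stated closed form is delicate and will need careful checking. The Gaussian fourth-moment computation can be reused from the earlier proof.
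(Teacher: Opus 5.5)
Your skeleton matches the paper's: the same contraction factor $\beta$, the same recursion for $c_i=\E\langle\mDelta_i,\mX_i^\star\rangle$, and the same limit $c$. The gap is the last step, which you flagged as delicate, and it cannot be closed. The drift forcing in your (correct) second-moment recursion does not rearrange into the bracketed expression of the statement. With $d=d_1d_2d_3$, $m=\mu\sigma_a^2$ and $D=\lambda+m-\lambda m$, you have $V+d\sigma_q^2=\frac{2d\sigma_q^2}{2-\lambda}$, hence
\[
\lambda^2\big(V+d\sigma_q^2\big)+2\lambda(1-m)c=\frac{2\lambda^2 d\sigma_q^2}{(2-\lambda)D}\big(D-\lambda(1-m)\big)=\frac{2\lambda^2 d\sigma_q^2\,m}{(2-\lambda)D},
\]
since $D-\lambda(1-m)=m$. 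Dividing by $1-\beta=\frac{m}{d_4}\big(2d_4-\mu(d_3+d_4+1)\sigma_a^2\big)$, your argument proves
\[
\lim_{i\to\infty}\E\|\calX_i-\calX_i^\star\|_F^2=\frac{\mu\sigma_e^2 d+\dfrac{2\lambda^2 d\,d_4\sigma_q^2}{(2-\lambda)D}}{2d_4-\mu(d_3+d_4+1)\sigma_a^2}.
\]
This is the stated formula without the factor $1-\lambda(2-\lambda)D$. That factor is strictly smaller than $1$ for every $\lambda\in(0,1)$, so the two expressions differ whenever $\lambda\sigma_q^2>0$.

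Your bookkeeping is the consistent one, because under the RW model the innovation producing $\calX_{i+1}^\star$ is the fresh $\calQ_{i+1}$. The paper's first expansion instead writes the increment as $\lambda(\mX_i^\star-\mQ_i)$, reusing the $\mQ_i$ already contained in $\mX_i^\star$ and $\mDelta_i$. This injects $\E\|\mX_i^\star-\mQ_i\|_F^2=\frac{2d(1-\lambda)^2\sigma_q^2}{2-\lambda}$ and $\E\langle\mDelta_i,\mQ_i\rangle=-\lambda d\sigma_q^2$, and it is inconsistent with the paper's own $c_i$ recursion, which uses $\mQ_{i+1}$. That is the source of the discrepancy; in fact the paper's displayed intermediate terms do not combine to its bracket exactly either.

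Two independent checks side with you. First, as $d_4\to\infty$ each entry follows $\Delta_{i+1}=(1-m)\Delta_i-(x^\star_{i+1}-x^\star_i)$ plus noise. A direct AR(1) filtering computation then gives drift variance $\frac{2\lambda^2\sigma_q^2}{(2-\lambda)(2-m)D}$, which is exactly the limit of your formula. Second, without the first step-size restriction the stated expression turns negative, for example at $\lambda=0.9$, $m=1.5$, $d_4\gg d_3$, small $\sigma_e^2$.

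That first restriction is precisely the condition $1-\lambda(2-\lambda)D>0$, so it has no role in your argument. Your $c_i$ recursion already converges under the second bound alone: $0<m<2d_4/(d_3+d_4+1)<2$ gives $|(1-\lambda)(1-m)|<1$. So do not try to force your algebra onto the stated closed form. As a proof of the statement as written, the proposal fails at this step because that constant does not follow from the model.

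Separately, independence of $\mDelta_i$ and $\mX_i^\star$ from $\mA_i$ and $\calE_i$ is not implied by causality here. Consecutive windows share $d_4-1$ input columns and response slices. It is the standard independence assumption of adaptive-filter analysis, which the paper also uses tacitly, and you should state it as an assumption.
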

The proof is provided in Appendix~\ref{Proof of tracking ability of SGD of model II}. Comparing \eqref{tracking ability performance in the ToM regression model main theorem} with the static result in \eqref{steady state performance in the ToM regression model main theorem}, the tracking MSE contains an additional term in the numerator that captures the cost of temporal variation: it vanishes when $\lambda = 0$, recovering exactly the steady-state MSE of Theorem~\ref{The main theorem of error in the model II}, and grows monotonically with $\sigma_q^2$ as the process noise intensifies. The dependence on $\lambda$, however, is nonlinear and non-monotone, reflecting the competing effects of faster tensor drift and the tightening stability constraint on the admissible step size. The denominator retains the same $d_4$-scaling as before, confirming that the temporal averaging effect of ToM regression model persists in the time-varying setting. The stability condition now imposes an additional constraint on $\mu$ when $\lambda > 0$, reflecting the trade-off between tracking speed and noise amplification: a larger step size enables faster adaptation to the drifting tensor but increases sensitivity to both observation noise $\sigma_e^2$ and process noise $\sigma_q^2$. Together, these results establish that ToM regression model achieves a favorable balance between denoising and tracking, making it well-suited for adaptive estimation in slowly varying tensor-valued systems.

\paragraph{Simulation} In this part, we verify the effectiveness of \Cref{The main theorem of tracking ability in the model II}. Using the same experimental setup as in \Cref{steady state of MoM regression model,steady state of ToM regression model}, we observe that the simulated performance of the SGD algorithm closely matches the theoretical prediction in \eqref{tracking ability performance in the ToM regression model main theorem}, thereby validating our analysis. Furthermore, as shown in \Cref{tracking of ToM regression model lambda} and \Cref{tracking of ToM regression model sigma_q}, increasing either $\sigma_q^2$ or $\lambda$ generally leads to a larger tracking error.

\begin{figure}[htbp]
\centering
\subfigure[$\sigma_q^2 = 10^{-6}$]{
\begin{minipage}[t]{0.45\textwidth}
\centering
\includegraphics[width=5.5cm]{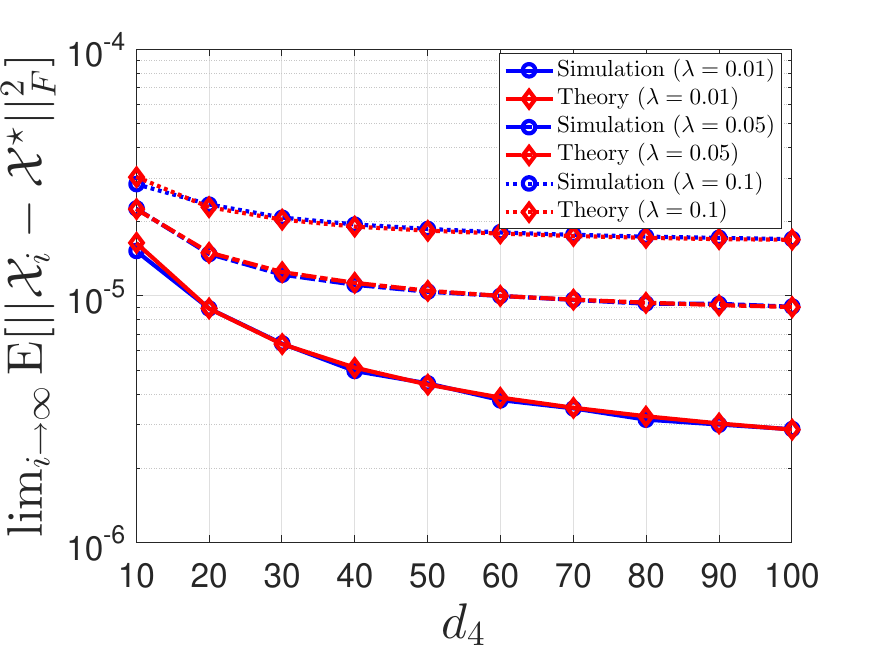}
\end{minipage}
\label{tracking of ToM regression model lambda}
}
\subfigure[$\lambda = 0.01$]{
\begin{minipage}[t]{0.45\textwidth}
\centering
\includegraphics[width=5.5cm]{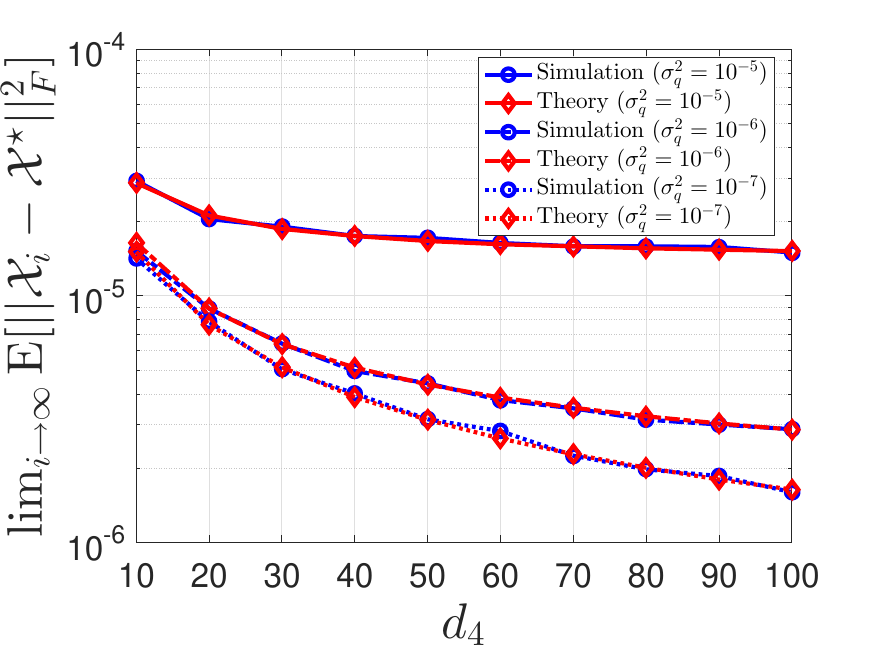}
\end{minipage}
\label{tracking of ToM regression model sigma_q}
}
\caption{Steady-state tracking errors of ToM regression model for different $d_4$: (a) versus $\lambda$ and (b) versus $\sigma_q^2$, with $d_1=d_2=10$, $d_3=3$, $\sigma_a^2=1$, $\sigma_e^2=10^{-3}$, and $\mu=0.001$.}
\end{figure}

\section{Statistical Guarantees of Low-dimensional Adaptive ToM Regression}
\label{sec: low-dimensional ToM regression}

The steady-state and tracking analyses of ToM regression model in Sections~\ref{sec:steady-state performance} and~\ref{sec:tracking ability performance} characterize the long-run error behavior of the SGD update \eqref{updating method X another model II} under general, unstructured settings, where the recovery error scales as $O(d_1 d_2 d_3 \sigma_e^2 / d_4)$. In this section, we adopt a complementary, algorithm-independent perspective: fixing a time instant $i$ and treating the estimation of $\calX_i^\star$ from measurements $\calY_i$ encoded in $\mA_i$ as a structured optimization problem, we ask how the intrinsic low-dimensional structure of $\calX_i^\star$ affects the statistical accuracy of recovery.

In practice, the ground-truth tensor $\calX_i^\star$ may exhibit low-dimensional structure at certain time instants, which substantially reduces its effective degrees of freedom. We consider three canonical structural models: \emph{sparsity}, which restricts the number of nonzero entries; \emph{low-rankness}, which exploits global multilinear correlations across modes; and their combination, \emph{sparse plus low-rank}, which simultaneously captures both effects. In each case, the recovery error is shown to depend not on the ambient dimension $d_1 d_2 d_3$ but on a much smaller intrinsic complexity measure, yielding guarantees that improve substantially over the unstructured baseline. Since the structural properties of the underlying tensor may vary over time, we analyze the recovery problem at a representative time instant $i$ for which $\calX_i^\star$ admits the assumed low-dimensional structure. The resulting guarantees should therefore be interpreted as instantaneous recovery bounds. Whenever the evolving tensor in the online ToM model satisfies the corresponding structural assumption, the same analysis applies at that time instant. For notational simplicity, we drop the time index and write $\calX$, $\calX^\star$, $\mA$, $\calE$, and $\calY$ in place of $\calX_i$, $\calX_i^\star$, $\mA_i$, $\calE_i$, and $\calY_i$. We begin with the following optimization problem:
\begin{eqnarray}
    \label{loss function of X another model II general optimization}
    \min_{\calX\in \setX }\frac{1}{2d_4}\|\calX \times_{3}^{1}
    \mA -   \calY  \|_F^2,
\end{eqnarray}
where $\setX$ denotes the structured tensor space corresponding to the assumed low-dimensional model.

The central task is then to recover low-dimensional $\calX^\star$ from its linear measurements $\calY = \calX^\star \times_{3}^{1} \mA + \calE$ via the ToM regression. A key property that facilitates accurate recovery is the \emph{restricted isometry property} (RIP), which plays a fundamental role in compressive sensing and high-dimensional estimation~\cite{donoho2006compressed, candes2006robust, candes2008introduction, recht2010guaranteed}. Intuitively, the RIP ensures that the measurement operator preserves the Euclidean geometry of structured signals, thereby preventing information loss during projection. Recent advances have further extended RIP-based analyses to tensor recovery problems, highlighting its importance in low-dimensional tensor settings~\cite{rauhut2017low, qin2024quantum, qin2025robust, qin2025computational}. The following theorem establishes RIP conditions for the ToM regression model considered here.
\begin{theorem}
\label{RIP condition fro the ToM regression}
Suppose each element of the input signal matrix $\mA\in\R^{d_3\times d_4 }$ is a complex-valued subgaussian random variable. Let $\delta_{d_3}\in(0,1)$ be a positive constant. Then, for any  tensor $\calX\in\R^{d_1\times d_2 \times d_3}$, when the number of $d_4$ satisfies
\begin{equation}
d_4 \ge C \cdot \frac{d_3}{\delta_{d_3}^2},
\label{eq:mrip ToM single}
\end{equation}
with probability $1- e^{-c d_4}$, $\calX$ satisfies the $d_3$-RIP:
\begin{eqnarray}
    \label{RIP condition fro the ToM regression single}
    (1-\delta_{d_3})\|\calX\|_F^2\leq \frac{1}{d_4}\|\calX \times_{3}^{1} \mA\|_F^2\leq(1+\delta_{d_3})\|\calX\|_F^2,
\end{eqnarray}
where $c$ and $C$ are positive constants.
\end{theorem}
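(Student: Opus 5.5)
The plan is to reduce the tensor statement to a spectral bound on a single $d_3\times d_3$ sample covariance matrix. Once that bound holds, it gives \eqref{RIP condition fro the ToM regression single} uniformly over all $\calX$, with no net over tensors needed. I assume the entries of $\mA$ are independent, zero-mean, unit-variance and subgaussian with bounded subgaussian norm. For general $\sigma_a^2$ one rescales $\mA$ by $1/\sigma_a$. In the complex case, transposes are replaced by conjugate transposes.

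\textbf{Step 1: reshape.} Unfold $\calX$ along its third mode into $\mtx{M}\in\R^{d_1d_2\times d_3}$, so that row $(s_1,s_2)$ of $\mtx{M}$ is the fiber $\calX(s_1,s_2,:)$. By the definition of $\times_3^1$, the unfolding of $\calX\times_3^1\mA$ is exactly $\mtx{M}\mA\in\R^{d_1d_2\times d_4}$. Hence
\begin{equation*}
\frac{1}{d_4}\|\calX\times_3^1\mA\|_F^2=\trace\Big(\mtx{M}\,\tfrac{1}{d_4}\mA\mA^{*}\mtx{M}^\top\Big),
\qquad \|\calX\|_F^2=\trace(\mtx{M}\mtx{M}^\top).
\end{equation*}
It therefore suffices to show $\|\tfrac{1}{d_4}\mA\mA^{*}-\mId_{d_3}\|\le\delta_{d_3}$ in spectral norm. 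Indeed, this gives $(1-\delta_{d_3})\mId\preceq\tfrac1{d_4}\mA\mA^*\preceq(1+\delta_{d_3})\mId$. Summing the quadratic forms over the $d_1d_2$ rows of $\mtx{M}$ then yields \eqref{RIP condition fro the ToM regression single} simultaneously for every $\calX$.

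\textbf{Step 2: concentration of the sample covariance.} The columns of $\mA$ are $d_4$ independent isotropic subgaussian vectors in dimension $d_3$. I follow the standard covariance-estimation argument.

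First, fix a unit vector $\vct{u}$. Then $\tfrac1{d_4}\|\mA^*\vct{u}\|_2^2-1$ is an average of $d_4$ independent centered subexponential variables $|\langle \va_j,\vct{u}\rangle|^2-1$. By Bernstein's inequality,
\begin{equation*}
\P{\big|\tfrac1{d_4}\|\mA^*\vct{u}\|_2^2-1\big|>\varepsilon}\le 2\exp(-c\,d_4\min(\varepsilon,\varepsilon^2)).
\end{equation*}

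Second, take a $1/4$-net $\mathcal{N}$ of the unit sphere in $\R^{d_3}$ (or $\mathbb{C}^{d_3}$) with $|\mathcal{N}|\le 9^{2d_3}$. For the Hermitian matrix $\mB=\tfrac1{d_4}\mA\mA^*-\mId$, the bound $\|\mB\|\le 2\max_{\vct{u}\in\mathcal{N}}|\vct{u}^*\mB\vct{u}|$ holds.

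Third, apply a union bound with $\varepsilon=\delta_{d_3}/2$. The failure probability is at most $2\cdot 81^{d_3}\exp(-c'd_4\delta_{d_3}^2)$. If $d_4\ge C d_3/\delta_{d_3}^2$ with $C$ large, the exponent $c'd_4\delta_{d_3}^2$ dominates $d_3\log 81$, and the failure probability becomes $e^{-c''d_4\delta_{d_3}^2}$.

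\textbf{Main obstacle.} The delicate point is matching the stated probability $1-e^{-cd_4}$, since the plain bound above only gives $e^{-c d_4\delta_{d_3}^2}$. Because $\delta_{d_3}$ is a fixed constant in $(0,1)$, I will absorb $\delta_{d_3}^2$ into $c$. In other words, $c$ is allowed to depend on $\delta_{d_3}$ and on the subgaussian norm, and I will state this dependence explicitly.

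The complex-valued case needs two remarks. First, $\calX$ is real, so $\trace(\mtx{M}\mB\mtx{M}^\top)$ is real. Second, Bernstein's inequality is applied to $|\langle\va_j,\vct{u}\rangle|^2$, which requires only that real and imaginary parts be subgaussian. Apart from these points the argument is routine.
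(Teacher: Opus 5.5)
Your proposal is correct and follows essentially the paper's route. The paper invokes the concentration-plus-covering RIP theorem of Cand\`es--Plan (with a complex subgaussian concentration bound) to get an isometry of $\mA^\top$ on $\R^{d_3}$, applies it to each mode-3 fiber $\calX(s_1,s_2,:)$, and sums over $(s_1,s_2)$; this is your Step~1 plus your self-contained proof (Bernstein, $1/4$-net, union bound) of the step the paper only cites. Your unfolding via $\mtx{M}\mA$ is also cleaner than the paper's display, which writes $\sum_{s_3,s_4}|\calX(s_1,s_2,s_3)\mA(s_3,s_4)|^2$ and so drops the cross terms, and you rightly make explicit the missing hypotheses (independent, centered, unit-variance entries) and that $c$ depends on $\delta_{d_3}$.
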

The proof is provided in {Appendix}~\ref{RIP proof}. \Cref{RIP condition fro the ToM regression} ensures  a stable embedding for tensors, with the energy $\|\calX \times_{3}^{1} \mA\|_F^2$ remaining close to $\|\calX\|_F^2$. Beyond this single-tensor perspective, the RIP also ensures separation between different tensors. For any two distinct tensors $\calX_1,\calX_2$, the RIP condition ensures distinct responses $\calX_1 \times_{3}^{1} \mA$ and $\calX_2 \times_{3}^{1} \mA$ as
\begin{eqnarray}
\frac{1}{d_4}\|\calX_1 \times_{3}^{1} \mA - \calX_2 \times_{3}^{1} \mA\|_F^2  =  \frac{1}{d_4}\|(\calX_1 - \calX_2) \times_{3}^{1} \mA \|_F^2 \ge   (1-\delta_{d_3})\|\calX_1 - \calX_2\|_F^2,
\end{eqnarray}
which indicates that their distance is faithfully preserved. Such a property is precisely what enables exact recovery in noise-free settings.
Another key implication concerns the required sample size. Instead of depending on the full ambient dimension $d_1 d_2 d_3$, the number of observations $d_4$ only grows on the order of $d_3$. This scaling behavior can be explained by noting that $\calX \times_{3}^{1} \mA$ produces an object of size $d_1 \times d_2 \times d_4$, and thus the measurement complexity is governed primarily by $d_3$.  This result establishes that the ToM regression model inherits a favorable RIP property under subgaussian measurements, which serves as a cornerstone for both the theoretical error guarantees presented in the following sections.

\subsection{Sparse Tensors}

One of the most common low-dimensional structures encountered in tensors is \emph{sparsity}, where a large fraction of the tensor entries are exactly zero. Such a structure naturally arises in many high-dimensional applications, for example, when only a limited number of features or interactions are truly relevant. Formally, we say that $\calX^\star = \calS^\star$ is a sparse tensor if $\calS^\star \in \setS_{s}$, where
\begin{eqnarray}
    \label{the set of the sparse tensors}
    \setS_{s} = \{\calS\in\R^{d_1\times d_2 \times d_3}:  \text{supp}(\calS) = s    \},
\end{eqnarray}
and $\text{supp}(\cdot)$ denotes the cardinality of the support set, i.e., the number of nonzero entries. The advantage of exploiting sparsity lies in the substantial reduction of the effective degrees of freedom: rather than estimating all $d_1 d_2 d_3$ parameters, one only needs to recover the $s$ nonzero entries.  Consequently, sparse tensor models serve as a natural and powerful prior in both theoretical analysis and practical applications. We now provide a formal analysis of the recovery performance of
$\wh\calS$, defined as the global minimizer of $\min_{\calS\in \setS_{s}}\frac{1}{2d_4}\|\calS \times_{3}^{1} \mA -   \calY  \|_F^2$.
\begin{theorem}
\label{theorem:sparse  structure conclusion main paper}
Suppose that $ \mA$ satisfy $d_3$-RIP  and that each element in $\calE\in\R^{d_1\times d_2\times d_4}$ follows a normal distribution $\calN(0, \sigma_e^2)$. Then with probability at least $1- e^{-\Omega( d_4)}  - e^{-\Omega\big(s\log(\frac{d_1d_2d_3}{s}) \big)}$, the solution $\wh\calS$ of \eqref{loss function of X another model II general optimization} satisfies
\begin{eqnarray}
    \label{upper bound of error final main paper sparse}
    \|\wh\calS  - \calS^\star \|_F\leq O\bigg( \frac{ \sqrt{(1+\delta_{d_3})\big(s\log(\frac{d_1d_2d_3}{ s})\big)}}{\sqrt{(1-\delta_{d_3})^2d_4}}\sigma_e \bigg).
\end{eqnarray}
\end{theorem}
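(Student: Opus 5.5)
The argument is the standard basic-inequality approach for constrained least squares, combined with a supremum bound for a Gaussian process over sparse unit tensors.

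\textbf{Step 1 (basic inequality).} Since $\wh\calS$ is the global minimizer over $\setS_s$ and $\calS^\star\in\setS_s$ is feasible, we have
\[
\tfrac{1}{2d_4}\|\wh\calS\times_3^1\mA-\calY\|_F^2\le \tfrac{1}{2d_4}\|\calS^\star\times_3^1\mA-\calY\|_F^2 .
\]
Substitute $\calY=\calS^\star\times_3^1\mA+\calE$ and set $\mDelta=\wh\calS-\calS^\star$. This gives
\[
\tfrac{1}{d_4}\|\mDelta\times_3^1\mA\|_F^2\le \tfrac{2}{d_4}\<\mDelta\times_3^1\mA,\calE\>.
\]
The difference $\mDelta$ has at most $2s$ nonzero entries. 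Theorem~\ref{RIP condition fro the ToM regression} holds for all tensors, so in particular for $\mDelta$. The left side is therefore at least $(1-\delta_{d_3})\|\mDelta\|_F^2$.

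\textbf{Step 2 (reduce the noise term to a supremum).} Write
\[
\<\mDelta\times_3^1\mA,\calE\>=\|\mDelta\|_F\,\<\ol\mDelta\times_3^1\mA,\calE\>,
\qquad \ol\mDelta=\mDelta/\|\mDelta\|_F ,
\]
so that $\ol\mDelta$ lies in $T=\{\calS:\ \text{supp}(\calS)\le 2s,\ \|\calS\|_F=1\}$. It then suffices to bound $Z=\sup_{\ol\mDelta\in T}\<\ol\mDelta\times_3^1\mA,\calE\>$.

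Condition on $\mA$. For fixed $\ol\mDelta$, the quantity $\<\ol\mDelta\times_3^1\mA,\calE\>$ is Gaussian with variance $\sigma_e^2\|\ol\mDelta\times_3^1\mA\|_F^2$. By the RIP upper bound this variance is at most $\sigma_e^2(1+\delta_{d_3})d_4$, and the same bound holds for differences of elements of $T$ with the metric inflated by $\sqrt{(1+\delta_{d_3})d_4}$.

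\textbf{Step 3 (main obstacle: a union bound over supports).} I would fix a support $\Omega$ of size $2s$. On that support the unit sphere has a $1/2$-net of size $5^{2s}$. The standard net argument gives
\[
\sup_{\text{supp}\subseteq\Omega}\<\cdot\>\le 2\max_{\text{net}}\<\cdot\>.
\]
Next I would take a union bound over the $\binom{d_1d_2d_3}{2s}\le (ed_1d_2d_3/2s)^{2s}$ supports, together with the Gaussian tail bound at level
\[
t\asymp \sigma_e\sqrt{(1+\delta_{d_3})d_4\, s\log(d_1d_2d_3/s)} .
\]
This gives $Z\le t$ with probability at least $1-e^{-\Omega(s\log(d_1d_2d_3/s))}$. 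The RIP event itself contributes the failure probability $e^{-\Omega(d_4)}$. The delicate point is that the metric involves the random $\mA$. Conditioning on $\mA$ and using the RIP upper bound uniformly (RIP holds for all tensors simultaneously) avoids any additional chaining.

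\textbf{Step 4 (combine).} Putting the steps together gives
\[
(1-\delta_{d_3})\|\mDelta\|_F^2\le \tfrac{2}{d_4}\|\mDelta\|_F\, Z .
\]
Dividing by $\|\mDelta\|_F$ yields
\[
\|\mDelta\|_F\lesssim \frac{\sqrt{(1+\delta_{d_3})s\log(d_1d_2d_3/s)}}{(1-\delta_{d_3})\sqrt{d_4}}\,\sigma_e ,
\]
which is \eqref{upper bound of error final main paper sparse}.
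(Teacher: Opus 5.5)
Your proposal is correct and follows essentially the same route as the paper: the basic inequality from optimality of $\wh\calS$, the RIP lower bound on the left-hand side, normalization of $\wh\calS-\calS^\star$ to a unit-norm $2s$-sparse tensor, and a net plus union bound on the Gaussian noise term conditioned on $\mA$, with the RIP upper bound controlling the variance. The only difference is cosmetic: you build the net support by support ($5^{2s}$ points per support times $\binom{d_1d_2d_3}{2s}$ supports), whereas the paper cites a global covering number of order $(Cd_1d_2d_3/(s\epsilon))^{2s}$; the two have the same logarithmic size $\asymp s\log(d_1d_2d_3/s)$ and lead to the same bound.
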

The proof is provided in {Appendix}~\ref{sparse and low-rank proof}. From \eqref{upper bound of error final main paper sparse}, the recovery error is shown to scale as $O(s\log(d_1d_2d_3/s)/d_4)$, where the effective complexity is governed by $s\log(d_1d_2d_3/s)$ rather than the ambient dimension $d_1d_2d_3$. In particular, when the
sparsity level satisfies $s\log(d_1d_2d_3/s) \ll d_1d_2d_3$, this bound is strictly sharper than the unstructured baseline $O(\sqrt{d_1d_2d_3/d_4}\sigma_e)$, confirming that exploiting sparsity leads to a provable improvement in recovery accuracy.

\subsection{Low-rank Tensors}

In addition to sparsity, which enforces parsimony by limiting the number of nonzero entries, low-rankness serves as another fundamental low-dimensional structure in tensors. Unlike sparsity, which captures localized simplicity, low-rankness exploits global correlations across modes, enabling a compact representation of high-dimensional data. Specifically, a low-rank tensor can be expressed through a small number of matrix or tensor factors, thereby reducing the effective number of parameters from the ambient dimension $d_1 d_2 d_3$ to a quantity that scales only with the tensor rank. This dramatic reduction not only facilitates efficient storage and computation but also enhances robustness by effectively denoising the underlying signal. Formally, we denote $\calX^\star = \calL^\star$ as a low-rank tensor if $\calL^\star \in \setL_{\vr}$, where
\begin{eqnarray}
    \label{Set of strcutured tensor model low rank}
    \setL_{\vr} = \{ \calL\in\R^{d_1\times d_2 \times d_3}: \text{rank}(\calL) = \vr  \},
\end{eqnarray}
where $\text{rank}(\cdot)$ refers to the tensor rank under a chosen decomposition model. Compared with sparsity, low-rankness provides a complementary mechanism of dimensionality reduction, emphasizing global structure rather than local elimination, and thereby broadening the scope of tractable recovery guarantees.

Before turning to the error analysis, we introduce several representative tensor decompositions that will serve as the foundation for our subsequent discussion.
\begin{definition}[Tucker decomposition]
\label{def:tucker}
For any tensor $\calL\in\R^{d_1\times d_2\times d_3}$, a Tucker decomposition represents each entry as
\begin{eqnarray}
    \label{definition in the Tucker decomposition main paper}
    \calL(s_1,s_2,s_3)=\sum_{i_1=1}^{r_1^\text{tk}}\sum_{i_2=1}^{r_2^\text{tk}}\sum_{i_3=1}^{r_3^\text{tk}}\mU_1(s_1,i_1)\mU_2(s_2,i_2) \mU_3(s_3,i_3)\calB(i_1,i_2,i_3),
\end{eqnarray}
where $\mU_1\in\R^{d_1\times r_1^\text{tk}}$, $\mU_2\in\R^{d_2\times r_2^\text{tk}}$ and $\mU_3\in\R^{d_3\times r_3^\text{tk}}$ are column-wise orthonormal factor matrices and $\calB\in\R^{r_1^\text{tk}\times r_2^\text{tk}\times  r_3^\text{tk}}$ is a core tensor. The $3$-tuple $(r_1^\text{tk},r_2^\text{tk},r_3^\text{tk})$ denotes the multilinear rank of the tensor in the Tucker format. We define the compact form of the Tucker tensor as follows: $\calL= [\![\calB; \mU_1, \mU_2,\mU_3 ]\!]$.
\end{definition}

\begin{definition}[Tensor train decomposition]
\label{def:tt}
For any tensor $\calL\in\R^{d_1\times d_2\times d_3}$, a tensor train (TT) decomposition represents each entry as
\begin{eqnarray}
    \label{definition in the TT decomposition main paper}
    \calL(s_1,s_2,s_3) = \mX_1(s_1,:)\mX_2(:,s_2,:)\mX_3(:,s_3),
\end{eqnarray}
where $\mX_1\in\R^{d_1\times r_1^{\text{tt}}}$, $\mX_2\in\R^{r_1^{\text{tt}} \times d_2\times r_2^{\text{tt}}}$, and $\mX_3\in\R^{r_2^{\text{tt}} \times d_3}$. We denote the TT representation of $\calL$ by $[\mX_1,\mX_2,\mX_3]$ with TT rank $\vr_{\text{tt}}=(r_1^{\text{tt}},r_2^{\text{tt}})$.
\end{definition}

\begin{definition}[Tubal decomposition]
\label{def:tubal}
For any tensor $\calL\in\R^{d_1\times d_2\times d_3}$, applying the Fast Fourier Transform (FFT) of $\calL$ along the third dimension yields  $\ol \calL = \text{fft}(\calL, [\ ],3 )$.  In the transformed domain, a tubal decomposition represents each frontal slice $\ol\calL(:,:,s_3), s_3=1,\dots,d_3$ as a matrix decomposition:
\begin{eqnarray}
    \label{SVD of Tubal decomposition in the frequency domain main paper}
    \ol\calL(:,:,s_3) = \ol\calU(:,:,s_3)\ol\calD(:,:,s_3)\ol\calV^\top(:,:,s_3),
\end{eqnarray}
where $\ol\calU(:,:,s_3)\in\R^{d_1\times r_{\text{tubal}}}$ and $\ol\calV(:,:,s_3)\in\R^{d_2\times r_{\text{tubal}}}$ are orthonormal matrices, and $\ol\calD(:,:,s_3)\in\R^{r_{\text{tubal}}\times r_{\text{tubal}}}$ is a diagonal matrix with a tubal rank $r_{\text{tubal}}$. Furthermore, we can define $\ol\calU \in\R^{d_1\times r_{\text{tubal}}\times d_3}$, $\ol\calV \in\R^{d_2\times r_{\text{tubal}}\times d_3}$ and $\ol\calD \in\R^{r_{\text{tubal}}\times r_{\text{tubal}}\times d_3}$. The collection of these slice-wise decompositions across all Fourier modes defines the tensor singular value decomposition (t-SVD) of $\ol\calL$. Then, by applying the inverse FFT along the third dimension, we obtain the low-rank tubal decomposition of the original tensor, $\calL = \text{ifft}(\ol\calL, [\ ],3 )$.
Define $\calU = \text{ifft}(\ol\calU, [\ ],3 )$, $\calD = \text{ifft}(\ol\calD, [\ ],3 )$, $\calV = \text{ifft}(\ol\calV, [\ ],3 )$. Accordingly, we represent the low-rank tubal decomposition of $\calL$ as $\calL = [\calU, \calD, \calV]$.
\end{definition}
A unifying characteristic of the aforementioned tensor decompositions is the existence of canonical representations, thus serving as a rigorous foundation for subsequent theoretical analysis and algorithmic design. Building on this structural advantage, we undertake a detailed analysis of the recovery guarantees of $\wh\calL$, defined as the global minimizer of $\min_{\calL\in \setL_{\vr}}\frac{1}{2d_4}\|\calL \times_{3}^{1} \mA -   \calY  \|_F^2$
\begin{theorem}
\label{theorem: low-rank structure conclusion main paper}
Suppose that $\mA$ satisfy $d_3$-RIP and that each element in $\calE\in\R^{d_1\times d_2\times d_4}$ follows a normal distribution $\calN(0, \sigma_e^2)$. Then with probability at least $1- e^{-\Omega( d_4)}  - e^{-\Omega( N  )}$, the solution $\wh\calL$ of \eqref{loss function of X another model II general optimization} satisfies
\begin{eqnarray}
    \label{upper bound of error final main paper LR}
    \| \wh\calL  - \calL^\star\|_F\leq O\bigg( \frac{ \sqrt{(1+\delta_{d_3})N }}{\sqrt{(1-\delta_{d_3})^2d_4}}\sigma_e \bigg),
\end{eqnarray}
with
\begin{eqnarray}
    \label{covering number of low-rank decompositions diff main paper LR}
    N = \begin{cases}
    r_1^\text{tk}r_2^\text{tk}r_3^\text{tk} + (d_1-r_1^\text{tk})r_1^\text{tk} + (d_2-r_2^\text{tk})r_2^\text{tk} + (d_3-r_3^\text{tk})r_3^\text{tk}, & \text{Tucker } \\
    d_1r_1^{\text{tt}}+d_2r_1^{\text{tt}}r_2^{\text{tt}}+d_3r_2^{\text{tt}}, & \text{TT}\\
    d_1d_3r_{\text{tubal}} + d_3r_{\text{tubal}} + d_2d_3r_{\text{tubal}} , & \text{Tubal}\\
    \end{cases}
\end{eqnarray}
where $(r_1^\text{tk},r_2^\text{tk},r_3^\text{tk})$ denotes the multilinear rank in the Tucker decomposition, $(r_1^{\text{tt}},r_2^{\text{tt}})$ denotes the tensor train ranks, and $r_{\text{tubal}}$ denotes the tubal rank.

\end{theorem}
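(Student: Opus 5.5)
We argue exactly as for Theorem~\ref{theorem:sparse  structure conclusion main paper}: a basic-inequality argument followed by a covering-number (epsilon-net) bound on a Gaussian supremum, with the sparse covering number replaced by that of the relevant low-rank set.

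\textbf{Step 1: basic inequality and the RIP lower bound.} Since $\wh\calL$ is the global minimizer over $\setL_{\vr}$ and $\calL^\star$ is feasible,
\[
\|\wh\calL\times_3^1\mA-\calY\|_F^2\le\|\calL^\star\times_3^1\mA-\calY\|_F^2 .
\]
Substituting $\calY=\calL^\star\times_3^1\mA+\calE$ and writing $\mDelta=\wh\calL-\calL^\star$ gives
\[
\|\mDelta\times_3^1\mA\|_F^2\le 2\<\mDelta\times_3^1\mA,\calE\> .
\]
Theorem~\ref{RIP condition fro the ToM regression} holds for every tensor, so it needs no restricted set. Applying it to $\mDelta$ gives
\[
(1-\delta_{d_3})\,d_4\,\|\mDelta\|_F^2\le 2\|\mDelta\|_F\sup_{\calZ\in\mathbb{T}}\<\calZ\times_3^1\mA,\calE\>,
\]
where $\mathbb{T}$ is the set of unit-norm differences of two elements of $\setL_{\vr}$. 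That difference has rank at most $2\vr$ in each model: $2r$ per Tucker mode, $2r$ in each TT rank, and $2r_{\text{tubal}}$ in the tubal model.

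\textbf{Step 2: bound the Gaussian supremum.} Condition on $\mA$. For fixed $\calZ$, the quantity $\<\calZ\times_3^1\mA,\calE\>$ is Gaussian with variance $\sigma_e^2\|\calZ\times_3^1\mA\|_F^2\le\sigma_e^2(1+\delta_{d_3})d_4$, again by Theorem~\ref{RIP condition fro the ToM regression}. Build an $\epsilon$-net of $\mathbb{T}$ from nets of the factors:
\begin{itemize}
\item \emph{Tucker:} orthonormal $\mU_k$ and a bounded core $\calB$.
\item \emph{TT:} left-orthogonal cores $\mX_1,\mX_2$ and a norm-bounded last core.
\item \emph{Tubal:} the factors $\ol\calU,\ol\calD,\ol\calV$ in the Fourier domain, using that the FFT is unitary up to scaling.
\end{itemize}
Multilinearity and a triangle-inequality argument turn a factor-wise $\epsilon/3$-net into a net of the whole set. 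Its log-cardinality is $O(N\log(1/\epsilon))$, with $N$ the parameter count in \eqref{covering number of low-rank decompositions diff main paper LR} (up to a constant factor from the rank doubling). A union bound over the net, plus the standard step absorbing the net error (for a net with $\epsilon=1/2$, $\sup\le 2\max_{\text{net}}$), gives
\[
\sup_{\calZ\in\mathbb{T}}\<\calZ\times_3^1\mA,\calE\>\le C\sigma_e\sqrt{(1+\delta_{d_3})d_4N}
\]
with probability $1-e^{-\Omega(N)}$. Intersecting with the RIP event (probability $1-e^{-\Omega(d_4)}$) and dividing yields
\[
\|\mDelta\|_F\le O\!\left(\frac{\sqrt{(1+\delta_{d_3})N}}{(1-\delta_{d_3})\sqrt{d_4}}\,\sigma_e\right),
\]
which matches \eqref{upper bound of error final main paper LR}.

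\textbf{Main obstacle.} The delicate part is the net construction in Step 2, for two reasons.
\begin{itemize}
\item \emph{The net-error step.} Elements of $\mathbb{T}$ are differences of low-rank tensors. The discrepancy $\calZ-\bar\calZ$ between a point and its net element is itself low-rank, with rank at most $4\vr$, but it is not unit-norm. The argument therefore has to work on the larger, scale-invariant set of rank-$4\vr$ tensors of norm at most one, so that the same supremum bounds the error term and can be absorbed.
\item \emph{Factor normalizations.} The factors must be normalized so that perturbations propagate with constant Lipschitz factors. For TT this means using the left-orthogonal form so that the last core carries the norm. For tubal this means controlling the complex-to-real conversion through the FFT so that the count remains $O(d_1d_3r+d_3r+d_2d_3r)$.
\end{itemize}
I would handle both by following the established Tucker, TT and t-SVD covering arguments (as in \cite{rauhut2017low, qin2025computational}), adapted to our norms.
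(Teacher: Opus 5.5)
Your proposal is correct and follows essentially the paper's own route: the basic inequality, the unrestricted RIP lower bound from Theorem~\ref{RIP condition fro the ToM regression}, and a net-plus-union-bound control of $\langle \calE, (\cdot)\times_3^1\mA\rangle$ over unit-norm differences of rank at most $2\vr$, using the Tucker/TT/tubal covering numbers from the auxiliary lemmas, exactly as in the paper's sparse case. One correction to your ``main obstacle'' remark: enlarging to the rank-$4\vr$ unit ball does not by itself close the absorption step, because that set's own net error has rank $8\vr$. What closes it, and what the paper uses when it decomposes the normalized net error in the sparse-plus-low-rank argument, is factor-wise telescoping: each piece, such as $[\![\calB-\calB^{(h)};\mU_1,\mU_2,\mU_3]\!]$ or $[\![\calB^{(h)};\mU_1-\mU_1^{(h)},\mU_2,\mU_3]\!]$, stays in the same rank-$2\vr$ class with norm $O(\epsilon)$, so the same supremum reappears with a small factor and can be absorbed.
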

The proof is provided in Appendix~\ref{sparse and low-rank proof}. \Cref{theorem: low-rank structure conclusion main paper} shows that the recovery error scales as $O(\sqrt{N/d_4}\sigma_e)$, where $N$ captures the intrinsic degrees of freedom of the low-rank tensor under each decomposition format. In each case, the bound is strictly sharper than the unstructured baseline $O(\sqrt{d_1d_2d_3/d_4}\sigma_e)$ whenever the corresponding rank is small, confirming that exploiting low-rank structure leads to a provable reduction in recovery error. Moreover, comparing with \Cref{theorem:sparse structure conclusion main paper}, the role of the sparsity parameter $s\log(d_1d_2d_3/s)$ is here played by $N$, suggesting a unified view in which the recovery error is governed by the intrinsic complexity of the assumed structural model.

\subsection{Sparse plus Low-rank Tensors}

In a wide range of practical applications, tensors often exhibit both local sparsity and global low-rankness simultaneously \cite{oymak2015simultaneously,cai2023generalized,li2025fourier}. For example, high-dimensional signals may contain a small number of dominant patterns (low-rank structure) superimposed with localized but critical irregularities (sparse components). Such a hybrid model provides a more faithful representation of real-world data by combining the interpretability of sparsity with the expressive power of low-rankness. Consequently, the sparse plus low-rank framework has emerged as a powerful structural prior, and in what follows we establish theoretical recovery guarantees under this composite assumption. Formally, we denote $\calX^\star = \calS^\star + \calL^\star$ as a sparse plus low-rank tensor if $\calS^\star\in\wt\setS_{s}$ and $\calL^\star\in\wt\setL_{\vr}$, where
\begin{eqnarray}
    \label{the definition of sparse set in the SpL set}
    \wt\setS_{s} = \{\calS\in\R^{d_1\times d_2 \times d_3}: \|\calS\|_F\leq S, \text{supp}(\calS) = s    \}, \\
    \label{the definition of low-rank set in the SpL set}
    \wt\setL_{\vr} = \{ \calL\in\R^{d_1\times d_2 \times d_3}: \|\calL\|_F\leq L, \text{rank}(\calL) = \vr  \}.
\end{eqnarray}
The additional Frobenius-norm constraints $\|\calS\|_F\leq S$ and $\|\calL\|_F\leq L$ are imposed to ensure identifiability and to facilitate the theoretical analysis. Without such bounds on the magnitudes of $\calS^\star$ and $\calL^\star$, it would be difficult to disentangle the sparse and low-rank components, as their magnitudes could otherwise be arbitrarily scaled. These constraints thus provide a natural regularization that stabilizes the decomposition and enables precise recovery guarantees. In the following, we establish the recovery error guarantees for $(\wh\calS, \wh\calL)$, defined as the global minimizer of $\min_{\calS\in \wt\setS_{s}, \calL\in \wt\setL_{\vr}}  \frac{1}{2d_4}\|(\calS + \calL) \times_{3}^{1} \mA -   \calY  \|_F^2$.
\begin{theorem}
\label{theorem:sparse and low-rank structure conclusion main paper}
Suppose that $\mA$ satisfy $d_3$-RIP and that each element in $\calE\in\R^{d_1\times d_2\times d_4}$ follows a normal distribution $\calN(0, \sigma_e^2)$. Then with probability at least $1- e^{-\Omega( d_4)}  - e^{-\Omega\big(s\log(\frac{Sd_1d_2d_3}{\min\{L,S\} s}) + N \log(\frac{ L + \min\{L,S\}}{\min\{L,S\}} ) \big)}$, the solution $(\wh\calS, \wh\calL)$ of \eqref{loss function of X another model II general optimization} satisfies
\begin{eqnarray}
    \label{upper bound of error final main paper}
    \|\wh\calS + \wh\calL - \calS^\star - \calL^\star\|_F\leq O\bigg( \frac{ \sqrt{(1+\delta_{d_3})\big(s\log(\frac{Sd_1d_2d_3}{\min\{L,S\} s}) + N \log(1 + \frac{ L }{\min\{L,S\}} )\big)}}{\sqrt{(1-\delta_{d_3})^2d_4}}\sigma_e \bigg),
\end{eqnarray}
with
\begin{eqnarray}
    \label{covering number of low-rank decompositions diff main paper}
    N = \begin{cases}
    r_1^\text{tk}r_2^\text{tk}r_3^\text{tk} + (d_1-r_1^\text{tk})r_1^\text{tk} + (d_2-r_2^\text{tk})r_2^\text{tk} + (d_3-r_3^\text{tk})r_3^\text{tk}, & \text{Tucker}, \\
    d_1r_1^{\text{tt}}+d_2r_1^{\text{tt}}r_2^{\text{tt}}+d_3r_2^{\text{tt}}, & \text{TT},\\
    d_1d_3r_{\text{tubal}} + d_3r_{\text{tubal}} + d_2d_3r_{\text{tubal}} , & \text{Tubal}.\\
    \end{cases}
\end{eqnarray}
\end{theorem}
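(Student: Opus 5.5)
We follow the standard least-squares-over-a-structured-set argument, as for the sparse and low-rank cases (Theorems~\ref{theorem:sparse  structure conclusion main paper} and~\ref{theorem: low-rank structure conclusion main paper}). The only new ingredient is a covering bound for the set of differences of sparse-plus-low-rank tensors.

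\textbf{Basic inequality.} Since $(\wh\calS,\wh\calL)$ is a global minimizer and $(\calS^\star,\calL^\star)$ is feasible, we have
$\|(\wh\calS+\wh\calL)\times_3^1\mA-\calY\|_F^2\le\|\calE\|_F^2$. Set $\mDelta=\wh\calS+\wh\calL-\calS^\star-\calL^\star$ and substitute $\calY=(\calS^\star+\calL^\star)\times_3^1\mA+\calE$. This gives
\[
\tfrac{1}{d_4}\|\mDelta\times_3^1\mA\|_F^2\le \tfrac{2}{d_4}\langle \mDelta\times_3^1\mA,\calE\rangle .
\]
By the $d_3$-RIP of Theorem~\ref{RIP condition fro the ToM regression}, which holds for every tensor and hence for $\mDelta$, the left side is at least $(1-\delta_{d_3})\|\mDelta\|_F^2$. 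Therefore
\[
(1-\delta_{d_3})\|\mDelta\|_F\le \tfrac{2}{d_4}\sup_{\calZ\in\mathbb{T}}\langle \calZ\times_3^1\mA,\calE\rangle ,
\]
where $\mathbb{T}$ is the set of unit-norm normalized differences $\calZ=(\calS_1-\calS_2+\calL_1-\calL_2)/\|\cdot\|_F$ with $\calS_j\in\wt\setS_s$ and $\calL_j\in\wt\setL_{\vr}$.

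\textbf{Gaussian supremum.} Condition on $\mA$. For each fixed $\calZ$, the quantity $\langle\calZ\times_3^1\mA,\calE\rangle$ is Gaussian with variance $\sigma_e^2\|\calZ\times_3^1\mA\|_F^2\le\sigma_e^2(1+\delta_{d_3})d_4$, using the upper RIP bound. We then take an $\epsilon$-net of $\mathbb{T}$, apply a union bound, and absorb the net approximation error in the usual way (sup over net $\ge$ half the sup). This yields, with probability $1-e^{-\Omega(\log|\text{net}|)}$,
\[
\sup_{\mathbb{T}}\langle \calZ\times_3^1\mA,\calE\rangle\lesssim \sigma_e\sqrt{(1+\delta_{d_3})d_4\log|\text{net}|}.
\]
Dividing by $d_4$ gives the claimed rate $\sqrt{(1+\delta_{d_3})\log|\text{net}|}\,\sigma_e/\big((1-\delta_{d_3})\sqrt{d_4}\big)$. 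The $e^{-\Omega(d_4)}$ term in the probability comes from the RIP event.

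\textbf{Covering bound (main obstacle).} The hard step is bounding $\log|\text{net}|$ by $s\log\frac{Sd_1d_2d_3}{\min\{L,S\}s}+N\log(1+\frac{L}{\min\{L,S\}})$. We cover the sparse and low-rank parts separately, at scale $\epsilon\min\{L,S\}$, so that the net error is of the same order for both components.

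For the sparse part, there are $\binom{d_1d_2d_3}{s}\le(ed_1d_2d_3/s)^s$ supports, and for each support a ball of radius $S$ in $\R^s$ is covered by $(3S/(\epsilon\min\{L,S\}))^s$ points. This gives the $s\log(\frac{Sd_1d_2d_3}{\min\{L,S\}s})$ term.

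For the low-rank part, we parametrize through the canonical factors: orthonormal factors plus core for Tucker, left-orthogonal cores for TT, and factors $\ol\calU,\ol\calD,\ol\calV$ in the Fourier domain for Tubal. Each factor is perturbed separately, following the standard Tucker/TT covering arguments of \cite{rauhut2017low,qin2024quantum}. Only the core or diagonal carries the scale $L$, so the covering costs $(1+cL/(\epsilon\min\{L,S\}))^{N}$, with $N$ counting the parameters of each format.

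Differences are handled by the product of two such nets, which only doubles the exponents. Normalization by $\|\mDelta\|_F$ is the delicate point. Here the Frobenius bounds $S,L$ prevent the sparse and low-rank parts from cancelling while each has a large norm, and I expect an argument at scale $\min\{L,S\}$ to make the normalized net valid. A short peeling over dyadic shells of $\|\mDelta\|_F$, if needed, would only change constants.
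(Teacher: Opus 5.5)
Your architecture is essentially the paper's: basic inequality, RIP lower bound, Gaussian tail conditional on $\mA$ with the RIP upper bound on the variance, separate nets for the sparse and low-rank parts at scale $\min\{L,S\}$, and a union bound. However, the step you flag as delicate is a genuine gap, and neither justification you offer closes it.

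Component-wise nets at scale $\epsilon\min\{L,S\}$ approximate $\mDelta$ only to absolute accuracy $\epsilon\min\{L,S\}$. Hence they approximate the direction $\mDelta/\|\mDelta\|_F$ only to accuracy $\epsilon\min\{L,S\}/\|\mDelta\|_F$, which is useless in exactly the regime $\|\mDelta\|_F\ll\min\{L,S\}$ that the theorem addresses.

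\textbf{The Frobenius bounds do not prevent cancellation.} Multiplying $\calS_1,\calS_2,\calL_1,\calL_2$ by a common factor $c\in(0,1]$ keeps them feasible and leaves the normalized difference unchanged. So $\mathbb{T}$ is the same set you would get with no constraints $\|\calS\|_F\le S$, $\|\calL\|_F\le L$ at all. In particular, nothing stops $\calS_1-\calS_2$ and $\calL_1-\calL_2$ from both having norm of order $\min\{L,S\}$ while their sum is arbitrarily small: move a piece $\mathcal{W}$ that is simultaneously sparse and low rank from one component to the other.

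\textbf{Peeling changes more than constants.} On the shell $\|\mDelta\|_F\approx\rho$ the components still range over norms up to $2S$ and $2L$. Your nets must therefore have scale $\epsilon\rho$, with entropy of order $s\log\frac{Sd_1d_2d_3}{s\rho}+N\log(1+\frac{L}{\rho})$. Taking $\rho$ down to the target error introduces factors like $\log(\min\{L,S\}\sqrt{d_4}/\sigma_e)$ instead of $\log(1+L/\min\{L,S\})$.

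As sketched, your argument gives at best a bound with these extra logarithms. The stated form needs a further idea, such as a covering of the unit sphere of the sparse-plus-low-rank cone that copes with cancellation, or an incoherence-type assumption separating the two components.

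For comparison, the paper does not normalize. It covers the bounded sets $\wt\setS_{2s}$ and $\wt\setL_{2\vr}$ at the fixed scale $\epsilon_1=\epsilon_2=\min\{L,S\}/16$, which is where $\min\{L,S\}$ actually enters. At this scale the rescaled net residual, split into a few unit-norm sparse and low-rank pieces, falls back into the norm-bounded set. It can then be absorbed, so the supremum is at most twice a net value.

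The factor $\|\mDelta\|_F$ in the conditional variance, which produces the fast rate, comes from restricting ``without loss of generality'' to net points with $\|\calS^{(h)}+\calL^{(h)}\|_F\le\|\mDelta\|_F$. That restriction is precisely the localization your argument lacks, and the paper asserts it rather than derives it. The supremum it controls is over the whole bounded set, and a genuinely localized set would again require nets at scale proportional to $\|\mDelta\|_F$. So your route coincides with the paper's, and the paper shares the soft spot you identified; but your proposal does not supply the missing step.
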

The proof is provided in Appendix~\ref{sparse and low-rank proof}. From \eqref{upper bound of error final main paper}, it is evident that the recovery error is governed jointly by the sparsity level $s$ and the low-rank degrees of freedom $N$. In addition, the bounds also depend on the relative magnitudes of the sparse and low-rank components. When $S = \min\{S,L\}$, the bound involves the factor $L/S$; otherwise, it involves $S/L$. This ratio reflects the imbalance between the two components: if one component dominates in magnitude, the other becomes harder to disentangle, and the error bound is inflated accordingly. Intuitively, without further identifiability assumptions, the sparse and low-rank structures can partially absorb each other, leading to this additional constant in the analysis. It is important to note that this factor arises primarily from the theoretical derivation to ensure generality and does not necessarily reflect a strict limitation in practical recovery algorithms.

Taken together, Theorems~\ref{theorem:sparse structure conclusion main paper}--\ref{theorem:sparse and low-rank structure conclusion main paper} establish a unified picture: the recovery error at a fixed time instant is governed not by the ambient dimension $d_1d_2d_3$ but by the intrinsic complexity of the assumed structural model, whether sparsity, low-rankness, or their combination. While the analysis focuses on time instants where the ground-truth tensor exhibits low-dimensional structure, such assumptions remain meaningful in time-varying settings, where the sparse pattern or low-rank subspace may evolve over time. In such scenarios, the bound characterizes the instantaneous recovery accuracy at each step: a uniform bound holds across time, while its constants depend explicitly on the underlying low-dimensional structure at each time instant and vary accordingly.

\section{Low-dimensional Adaptive Algorithmic Designs}
\label{sec: low-dimensional adaptive algorithmic designs}

The statistical guarantees established in Section~\ref{sec: low-dimensional ToM regression} demonstrate that, whenever low-dimensional structural priors are present at certain time instants, imposing such structure leads to provably improved recovery accuracy. A natural question is how to efficiently exploit these structures in an online setting, where new observations arrive sequentially and the underlying tensor may vary over time. In this section, we develop adaptive algorithms that incorporate sparse and low-rank structural priors directly into the update rule, enabling efficient tracking of time-varying tensors with reduced computational and storage costs.

We begin with the standard SGD update \eqref{updating method X another model II} as a baseline, which serves as the building block for all subsequent algorithmic developments. To improve recovery accuracy at each time instant, we allow multiple gradient steps per observation rather than a single pass. Specifically, upon receiving the $i$-th observation $(\mA_i, \calY_i)$, the estimator performs up to $N_{\max}$ inner iterations before moving to the next sample, with early termination when the relative change falls below a threshold $\epsilon$. The formal procedure \footnote{For $i=1$ and $t=1$, the check $\frac{\|\ol\calX_{i}(t+1) - \ol\calX_i(t)\|_F^2}{\|\ol\calX_i(t)\|_F^2} \leq \epsilon$ is automatically skipped.} is summarized in Algorithm~\ref{alg:SGD}, and serves as the reference point against which all structured variants are compared.

\begin{algorithm}
\caption{SGD}
\label{alg:SGD}
\footnotesize
\begin{algorithmic}[0]
\State \textbf{Initialization:}
$ \calX_1= \mathbf{0}_{d_1\times d_2\times d_3} $.
\State Preset parameters: $ \mu$, $N_{\max}$, $\epsilon$.
\For {$i = 1,2,\dots$ }
   \State $\ol\calX_i(1) = \calX_{i}$
\For {$t = 1,\dots, N_{\max}$ }
    \State $\ol\calX_{i}(t+1) = \ol\calX_i(t) - \frac{\mu}{d_4} (\ol\calX_i(t) \times_{3}^{1} \mA_i -   \calY_i) \times_{3}^{2} \mA_i$
    \If{$\frac{\|\ol\calX_{i}(t+1) - \ol\calX_i(t)\|_F^2}{\|\ol\calX_i(t)\|_F^2} \leq \epsilon$}
    \State $t_{\text{last}} = t+1$ \ \textbf{break}
    \EndIf
\EndFor
   \State $\calX_{i+1} = \ol\calX_{i}(t_{\text{last}})$
\EndFor
\end{algorithmic}
\end{algorithm}
This baseline framework treats the coefficient tensor as a generic, unstructured array. In the following subsections, we extend it by incorporating low-dimensional structural priors---sparsity, low-rankness, and their combination---into the update rule and develop a family of iterative hard thresholding (IHT) algorithms.

\subsection{Sparse Adaptive Tensor Algorithm}
\label{subsec:sparse methods}

Building upon the SGD baseline, we now develop algorithms that explicitly promote sparsity in the recovered tensor. In the broader context of high-dimensional signal recovery, sparsity-enforcing strategies---such as classical $\ell_1$ regularization \cite{chen2009sparse}, reweighted $\ell_1$ techniques \cite{candes2008enhancing}, and smooth $\ell_0$ approximations \cite{gu2009l_}---have been widely adopted due to their theoretical guarantees and empirical effectiveness. Nevertheless, in rapidly evolving or time-varying tensor models, these regularization-based methods can be insufficiently responsive, often lagging in capturing sudden sparse variations.

To address this limitation, we focus on a strategy that is well-suited for online or adaptive scenarios, namely \emph{iterative hard thresholding} (IHT). This approach enforces sparsity via direct projection onto a low-cardinality support set, effectively retaining only the most significant entries at each update while discarding insignificant components. As a result, IHT provides a computationally efficient and dynamically responsive mechanism for tracking sparse tensor structures in high-dimensional, fast-changing environments.

\paragraph{Iterative Hard Thresholding}
Given a target sparsity level $s$, the IHT update projects the
gradient step onto the set of $s$-sparse tensors:
\begin{eqnarray}
\label{sparse IHT with SGD sparse updating}
\calS_{i+1} = \calP_s\!\left(\calS_i - \frac{\mu}{d_4}
\big(\calS_i\times_{3}^{1}\mA_i - \calY_i\big)
\times_{3}^{2}\mA_i\right),
\end{eqnarray}
where $\calP_s(\cdot)$ retains the $s$ largest-magnitude entries and sets the remainder to zero. In practice, the exact sparsity level $s$ is often unknown. One practical alternative is retaining the top $\lceil a d_1 d_2 d_3 \rceil$ entries for a user-specified ratio $a \in (0,1]$. We denote the operator by $\calP_{\lceil a d_1 d_2 d_3 \rceil}(\cdot)$. Moreover, when the system exhibits approximate stationarity, the sparsity pattern can be fixed after $m$ training steps to further reduce computational and memory costs. The full procedure is summarized in Algorithm~\ref{alg:sparseIHT}.

\begin{algorithm}[!ht]
\caption{Sparse Iterative Hard Thresholding (Sparse IHT)}
\label{alg:sparseIHT}
\footnotesize
\begin{algorithmic}[0]
\State \textbf{Initialization:} $\calS_1 = \mathbf{0}_{d_1\times d_2\times d_3}$.
\State Preset parameters: $\mu$, $a$, $N_{\max}$, $\epsilon$.
\For {$i = 1,2,\dots$ }
   \State $\ol\calS_i(1) = \calS_{i}$
\For {$t = 1,\dots, N_{\max}$ }
    \State $\wt{\calS} = \ol\calS_i(t) - \frac{\mu}{d_4}\big(\ol\calS_i(t) \times_{3}^{1}\mA_i - \calY_i\big)\times_{3}^{2}\mA_i$
    \State $\ol\calS_{i}(t+1) = \calP_{\lceil a d_1 d_2 d_3 \rceil}(\wt{\calS})$
    \If{$\frac{\|\ol\calS_{i}(t+1) - \ol\calS_{i}(t)\|_F^2}{\|\ol\calS_{i}(t)\|_F^2} \leq \epsilon$}
    \State $t_{\text{last}} = t+1$ \ \textbf{break}
    \EndIf
\EndFor
   \State $\calS_{i+1} = \ol\calS_{i}(t_{\text{last}})$
\EndFor
\end{algorithmic}
\end{algorithm}

\subsection{Low-rank Adaptive Tensor Algorithm}
\label{subsec:lowrank methods}

Unlike sparsity, which can be imposed in various flexible and localized ways, low-rankness encodes a global structural constraint across all modes of the tensor, making it inherently more challenging to adapt or update over time. A common strategy in tensor recovery is to directly optimize the tensor factors associated with a fixed tensor decomposition \cite{cai2019nonconvex,xia2019polynomial,hao2021sparse,tong2022scaling,han2022optimal,qin2024guaranteed,qin2025robust,qin2025scalable,qin2025computational}. However, such approaches are not well suited for adaptive tensor recovery, as the tensor ranks remain fixed once the dimensions of the factors are determined, preventing dynamic adjustment during the recovery process. Moreover, fixed-rank optimization can lead to suboptimal recovery performance when the true underlying tensor exhibits rank variations or temporal changes. To overcome these limitations, it is necessary to develop low-rank adaptive tensor algorithms that can dynamically update the ranks of tensor factors in response to incoming data or changing structural information. In the following, we introduce a framework for such adaptive rank updating, which balances computational efficiency with the flexibility required for online or time-varying tensor recovery tasks. We begin by summarizing two representative low-rank tensor algorithms.

\paragraph{Iterative Hard Thresholding} Analogous to sparse IHT, low-rank IHT has been employed to enforce low-rank structure in tensor recovery by truncating the tensor ranks during the iterative process. Specifically, a low-rank projection is applied after each SGD step:
\begin{eqnarray}
\label{low-rank tensor IHT with SGD sparse updating}
\calL_{i+1} = \calP_{\vr}\!\left(\calL_i - \frac{\mu}{d_4}\big(\calL_i\times_{3}^{1}\mA_i - \calY_i\big)\times_{3}^{2}\mA_i\right),
\end{eqnarray}
where $\calP_{\vr}(\cdot)$ denotes the projection operator based on the specified tensor decomposition, such as HOSVD \cite{sidiropoulos2000uniqueness} for Tucker decomposition, TT-SVD \cite{oseledets2011tensor} for tensor train decomposition, or t-SVD \cite{kilmer2011factorization} for the tubal decomposition. However, this operation inherently relies on hard thresholding with predefined tensor ranks, which limits its adaptability to data with evolving or unknown rank structures. To address this issue, we next introduce a soft-thresholding approach based on nuclear-norm regularization, which allows continuous rank adaptation and improved flexibility in the recovery process.

\paragraph{Nuclear-norm Regularization}

Building upon the above intuition, low-rankness can be enforced through a nuclear-norm penalty, leading to a proximal-type update:
\begin{eqnarray}
\calL_{i+1} = \text{prox}_{\lambda\|\cdot\|_*}\!\left(\calL_i - \frac{\mu}{d_4}\big(\calL_i\times_{3}^{1}\mA_i - \calY_i\big)\times_{3}^{2}\mA_i\right),
\end{eqnarray}
where $\text{prox}_{\lambda\|\cdot\|_*}(\cdot)$ denotes the tensor soft-thresholding operator applied to the singular values along each unfolding. Specifically, for a given mode-$n$ unfolding $\mL_{(n)}$, the operator performs singular value decomposition $\mL_{(n)} = \mU_n \Sigma_n \mV_n^\top$ and shrinks each singular value $\sigma_k$ according to $\max(\sigma_k - \lambda, 0)$, thereby promoting a low-rank structure. However, the choice of the regularization parameter $\lambda$ is highly nontrivial in practice. A large $\lambda$ may excessively penalize the singular values, leading to biased recovery or rank underestimation, while a small $\lambda$ may fail to enforce the desired low-rank structure.

By combining the merits of the two aforementioned approaches, we develop a new adaptive low-rank IHT method that enables dynamic rank adjustment during optimization:
\begin{eqnarray}
\label{low-rank tensor IHT with SGD sparse updating new}
\calL_{i+1} = \calP_{\vr_{\min}}\!\left(\calL_i - \frac{\mu}{d_4}\big(\calL_i\times_{3}^{1}\mA_i - \calY_i\big)\times_{3}^{2}\mA_i\right),
\end{eqnarray}
where the projection operator $\calP_{\vr_{\min}}(\cdot)$ adaptively determines the truncation rank $\vr_{\min}$ based on the iteration dynamics. Specifically, $ \vr_{\min} = \min\{\vr_{\text{preset}}, \vr_{\max}  \}$ where $\vr_{\text{preset}}$ denotes the predefined tensor ranks, and $\vr_{\max}$ is selected such that the relative change between consecutive iterations satisfies
\begin{eqnarray}
\label{rank generation principle}
\frac{\big\| \calP_{\vr_{\max}}\!(\calL_i - \frac{\mu}{d_4}\big(\calL_i\times_{3}^{1}\mA_i - \calY_i\big)\times_{3}^{2}\mA_i )  - {\calL}_{i}\big\|_F^2}{\|{\calL}_{i}\|_F^2}\leq \delta,
\end{eqnarray}
with $\delta$ being a small positive constant.
This adaptive rule allows the algorithm to automatically refine the tensor rank based on the relative change between successive iterations, thereby improving flexibility and preventing over- or under-estimation of the true rank. The detailed procedure is summarized in \Cref{alg:lowrankIHT}.

\begin{algorithm}[!ht]
\caption{Low-rank  Iterative Hard Thresholding (Low-rank IHT)}
\label{alg:lowrankIHT}
\footnotesize
\begin{algorithmic}[0]
\State \textbf{Initialization:} $\calL_1 = \mathbf{0}_{d_1\times d_2\times d_3}$.
\State Preset parameters: $\mu$,  $\vr_{\text{preset}}$, $\delta$, $N_{\max}$,  $\epsilon$.
\For{$i = 1,2,\dots$}
   \State $\ol\calL_i(1) = \calL_{i}$
\For {$t = 1,\dots, N_{\max}$ }
    \State $\wt{\calL} = \ol\calL_i(t) - \frac{\mu}{d_4}\big(\ol\calL_i(t) \times_{3}^{1}\mA_i - \calY_i\big)\times_{3}^{2}\mA_i$
    \State $\ol\calL_{i}(t+1) = \calP_{\vr_{\min}}(\wt{\calL})$ \Comment{via HOSVD, TT-SVD, or t-SVD}
    \If{$\frac{\|\ol\calL_{i}(t+1) - \ol\calL_{i}(t)\|_F^2}{\|\ol\calL_{i}(t)\|_F^2} \leq \epsilon$}
    \State $t_{\text{last}} = t+1$ \ \textbf{break}
    \EndIf
\EndFor
   \State $\calL_{i+1} = \ol\calL_{i}(t_{\text{last}})$
\EndFor
\end{algorithmic}
\end{algorithm}

\subsection{Sparse and Low-rank Adaptive Tensor Algorithm}
\label{subsec:sparse and lowrank methods}

Finally, when a tensor can be decomposed into the sum of a sparse component and a low-rank component, one can further integrate the advantages of both structural priors. Based on the previous adaptive designs, we employ an alternating iterative hard thresholding scheme to update the sparse part $\calS$ and the low-rank part $\calL$ in an interleaved manner. Specifically, the updates are given by
\begin{eqnarray}
\label{updating method X another model II sparse and low-rank updating}
\calS_{i+1} &=& \calP_s\!\left(\calS_i - \frac{\mu_1}{d_4}\big((\calS_i + \calL_i)\times_{3}^{1}\mA_i - \calY_i\big)\times_{3}^{2}\mA_i \right), \\[4pt]
\calL_{i+1} &=& \calP_{\vr_{\min}}\!\left(\calL_i - \frac{\mu_2}{d_4}\big((\calS_i + \calL_i)\times_{3}^{1}\mA_i - \calY_i\big)\times_{3}^{2}\mA_i\right).
\end{eqnarray}
Here, $\calP_s(\cdot)$ denotes the hard thresholding operator that preserves the top-$s$ entries with the largest magnitudes, while $\calP_{\vr_{\min}}(\cdot)$ performs rank-$\vr_{\min}$ truncation under the chosen tensor decomposition format. The alternating structure allows the model to capture both global correlations (through $\calL$) and localized anomalies (through $\calS$) in a data-adaptive manner. This formulation extends the purely low-rank or purely sparse adaptive methods, providing a more flexible framework for tensors exhibiting joint structural properties. The full algorithm is shown in \Cref{alg:sparse_lowrank}.

\begin{algorithm}[!ht]
\caption{Sparse and Low-rank Iterative Hard Thresholding (Sparse and Low-rank IHT)}
\label{alg:sparse_lowrank}
\footnotesize
\begin{algorithmic}[0]
\State \textbf{Initialization:} $\calS_1 = \calL_1 = \mathbf{0}_{d_1\times d_2\times d_3}$.
\State Preset parameters: $\mu_1$, $\mu_2$, $a$, $\vr_{\text{preset}}$, $\delta$,  $N_{\max}$, $\epsilon_1$, $\epsilon_2$.
\For {$i = 1,2,\dots$ }
   \State $\ol\calS_i(1) = \calS_{i}$, $\ol\calL_i(1) = \calL_{i}$
\For {$t = 1,\dots, N_{\max}$ }
    \State $\wt{\calS} = \ol\calS_i(t) - \frac{\mu_1}{d_4}\big((\ol\calS_i(t) +\ol\calL_i(t) )\times_{3}^{1}\mA_i - \calY_i\big)\times_{3}^{2}\mA_i$
    \State $\wt{\calL} = \ol\calL_i(t) - \frac{\mu_2}{d_4}\big((\ol\calS_i(t) +\ol\calL_i(t) ) \times_{3}^{1}\mA_i - \calY_i\big)\times_{3}^{2}\mA_i$
    \State $\ol\calS_{i}(t+1) = \calP_{\lceil a d_1 d_2 d_3 \rceil}(\wt{\calS})$
    \State $\ol\calL_{i}(t+1) = \calP_{\vr_{\min}}(\wt{\calL})$ \Comment{via HOSVD, TT-SVD, or t-SVD}
    \If{$\frac{\|\ol\calS_{i}(t+1) - \ol\calS_{i}(t)\|_F^2}{\|\ol\calS_{i}(t)\|_F^2} \leq \epsilon_1$ and $\frac{\|\ol\calL_{i}(t+1) - \ol\calL_{i}(t)\|_F^2}{\|\ol\calL_{i}(t)\|_F^2} \leq \epsilon_2$}
    \State $t_{\text{last}} = t+1$ \ \textbf{break}
    \EndIf
\EndFor
   \State $\calS_{i+1} = \ol\calS_{i}(t_{\text{last}})$, $\calL_{i+1} = \ol\calL_{i}(t_{\text{last}})$
\EndFor
\end{algorithmic}
\end{algorithm}

\subsection{Computational Complexity}
\label{subsec:complexity}

We summarize the computational complexity of each algorithm in Table~\ref{tab:complexity}, where complexity is measured in terms of the number of multiplications required per time instant $i$, including all $N_{\max}$ inner iterations. The dominant cost in each algorithm is the gradient computation $(\calX \times_3^1 \mA_i - \calY_i)\times_3^2\mA_i$, which
requires $O(2N_{\max}d_1d_2d_3d_4)$ multiplications. The structured algorithms incur additional computational costs for enforcing structural priors, either through projection steps or structure-aware multiplicative updates, depending on the specific algorithmic design.

\begin{table*}[!ht]
\centering
\footnotesize
\caption{Multiplication complexity (including $N_{\max}$ inner iterations).}
\label{tab:complexity}
\renewcommand{\arraystretch}{1.2}
\begin{tabular}{lll}
\hline
\textbf{Algorithm} & \textbf{Projection} &
\textbf{Complexity per $i$} \\
\hline
SGD & --- & $O\big(N_{\max} d_1 d_2 d_3 (2d_4 +2)  \big)$ \\

Sparse IHT & Sparse & $O\big(N_{\max} d_1 d_2 d_3 (2d_4 +2 + \log(d_1 d_2 d_3))    \big)$ \\

Low-rank IHT & Tucker & $O\big(N_{\max} d_1 d_2 d_3 (2d_4 +2 + d_1 +d_2 +d_3 + r_1^\text{tk}r_2^\text{tk}r_3^\text{tk} )  \big)$ \\
& Tensor train & $O\big(N_{\max}d_1 d_2 d_3 (2d_4 +2 + 3(\max\{r_1^{\text{tt}}, r_2^{\text{tt}}\})^2)  \big)$ \\
& Tubal & $O\big(N_{\max} (d_1 d_2 d_3 2d_4 +2 + 2\log d_3 + \min\{ d_1, d_2 \} + r_{\text{tubal}}   )\big)$ \\

Sparse and Low-rank IHT & Sparse + Tucker & $O\big(N_{\max} d_1 d_2 d_3 (2d_4 +4 + \log(d_1 d_2 d_3) + d_1 +d_2 +d_3 + r_1^\text{tk}r_2^\text{tk}r_3^\text{tk} )  )\big)$ \\
& Sparse + Tensor train & $O\big(N_{\max}d_1 d_2 d_3 (2d_4 +4 + \log(d_1 d_2 d_3) + 3(\max\{r_1^{\text{tt}}, r_2^{\text{tt}}\})^2)  )\big)$ \\
& Sparse + Tubal & $O\big(N_{\max} d_1 d_2 d_3 (2d_4 + 4 + \log(d_1 d_2 d_3) + 2\log d_3 + \min\{ d_1, d_2 \} + r_{\text{tubal}} )  )\big)$ \\
\hline
\end{tabular}
\end{table*}

\subsection{Simulation}
\label{subsec:simulation}

We evaluate the proposed algorithms through numerical experiments under a stationary tensor setting, which allows us to clearly demonstrate the advantages of low-dimensional structures in ToM regression. The model dimensions are set to $d_1 = 20$, $d_2 = 40$, $d_3 = 3$, and $d_4 = 100$. Three types of ground-truth tensors are considered. The sparse tensor $\calS^\star$ is generated by drawing each entry i.i.d.\ from $\calN(0,1)$, randomly selecting $s = 20$ positions, and setting all remaining entries to zero. The low-rank tensor $\calL^\star$ is constructed by generating inherent factors with entries drawn i.i.d.\ from $\calN(0,1)$ under the Tucker ranks $(r_1^\text{tk}, r_2^\text{tk}, r_3^\text{tk}) = (4,5,2)$, tensor train ranks $(r_1^\text{tt}, r_2^\text{tt}) = (4,5)$, and tubal rank $r_{\text{tubal}} = 5$, followed by normalization of the resulting tensor. The sparse plus low-rank tensor is formed as $\calS^\star + \calL^\star$ by combining the two constructions above. The input entries of $\va_i$ are i.i.d.\ Gaussian with zero mean and variance $\sigma_a^2 = 1$, and the noise entries of $\calE_i$ are i.i.d.\ Gaussian with zero mean and variance $\sigma_e^2 = 0.01$. Common algorithmic parameters are set to $N_{\max} = 10$, $\delta = 10^{-10}$, and $\epsilon = 10^{-3}$ ($\epsilon_1 = \epsilon_2 = 10^{-3}$ for Sparse and Low-rank IHT). All methods are initialized using a zero tensor initialization. Algorithm-specific parameters are listed in Table~\ref{tab:sim_params}. The recovery performance is evaluated using the mean squared error (MSE), as defined in \eqref{upper bound of error final main paper sparse}, \eqref{upper bound of error final main paper LR}, and \eqref{upper bound of error final main paper}. The MSE is approximated by averaging the squared Frobenius-norm error between the estimate and the ground truth over $30$ independent Monte Carlo trials.

\begin{table*}[!ht]
\centering
\footnotesize
\caption{Algorithm-specific simulation parameters.}
\label{tab:sim_params}
\renewcommand{\arraystretch}{1.05}
\begin{tabular}{ll}
\hline
\textbf{Algorithm} &   \textbf{Parameters} \\
\hline
SGD  &
$\mu=0.01$   \\

Sparse IHT &
$\mu=0.01$, $a=0.015$  \\

Low-rank IHT (Tucker) &
$\mu=0.01$, $\vr_{\text{preset}}=[4,5,2]$ \\
Low-rank IHT (TT) &
$\mu=0.01$, $\vr_{\text{preset}}=[4,5]$ \\
Low-rank IHT (Tubal) &
$\mu=0.01$, $r_{\text{preset}}=5$ \\

Sparse and Low-rank IHT (Tucker) &
$\mu_1=0.35$, $\mu_2=0.015$, $a=0.008$,
$\vr_{\text{preset}}=[4,5,2]$ \\
Sparse and Low-rank IHT (TT) &
$\mu_1=0.35$, $\mu_2=0.015$, $a=0.008$,
$\vr_{\text{preset}}=[4,5]$ \\
Sparse and Low-rank IHT (Tubal) &
$\mu_1=0.3$, $\mu_2=0.012$, $a=0.008$,
$r_{\text{preset}}=5$ \\
\hline
\end{tabular}
\end{table*}

\begin{figure*}[!ht]
\centering
\subfigure[]{
\begin{minipage}[t]{0.23\textwidth}
\centering
\includegraphics[width=\linewidth]{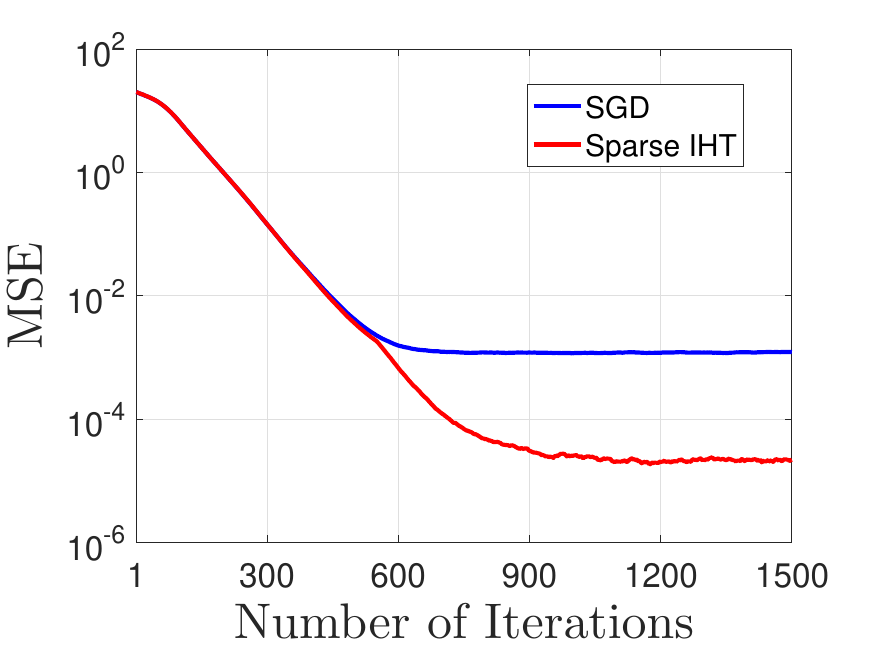}
\end{minipage}
\label{fig:sparse_iht}
}
\hfill
\subfigure[]{
\begin{minipage}[t]{0.23\textwidth}
\centering
\includegraphics[width=\linewidth]{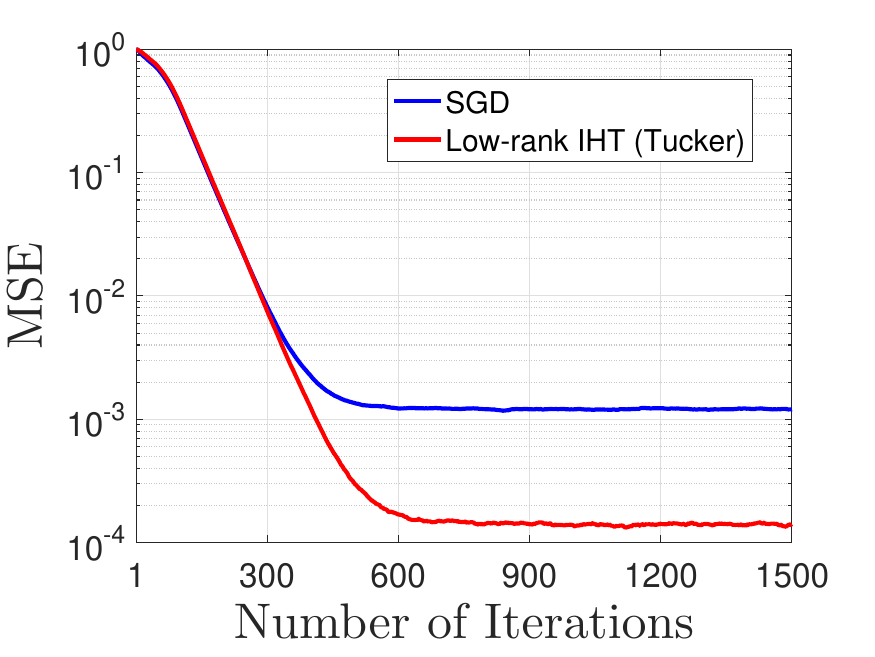}
\end{minipage}
\label{fig:tucker_iht}
}
\hfill
\subfigure[]{
\begin{minipage}[t]{0.23\textwidth}
\centering
\includegraphics[width=\linewidth]{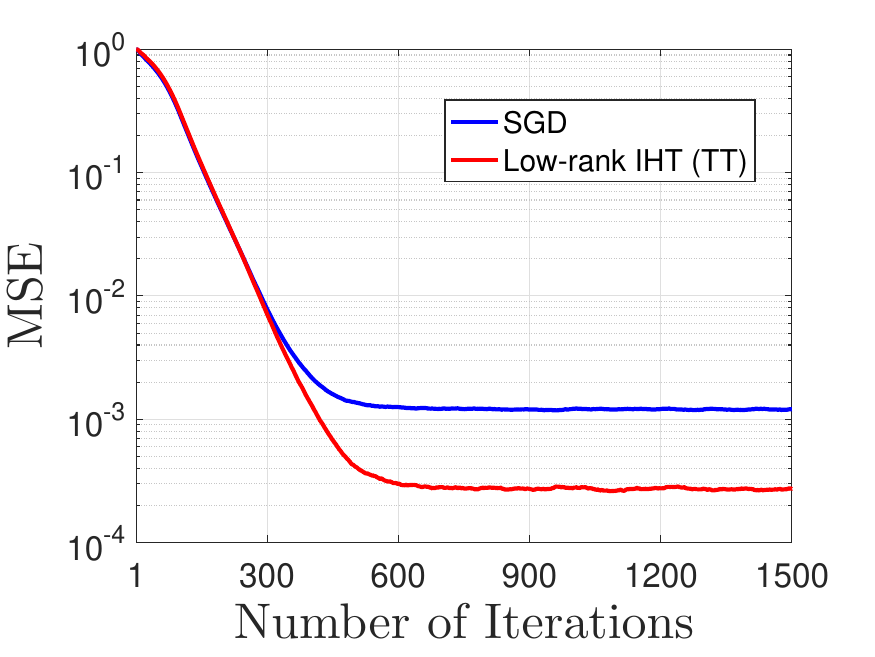}
\end{minipage}
\label{fig:tt_iht}
}
\hfill
\subfigure[]{
\begin{minipage}[t]{0.23\textwidth}
\centering
\includegraphics[width=\linewidth]{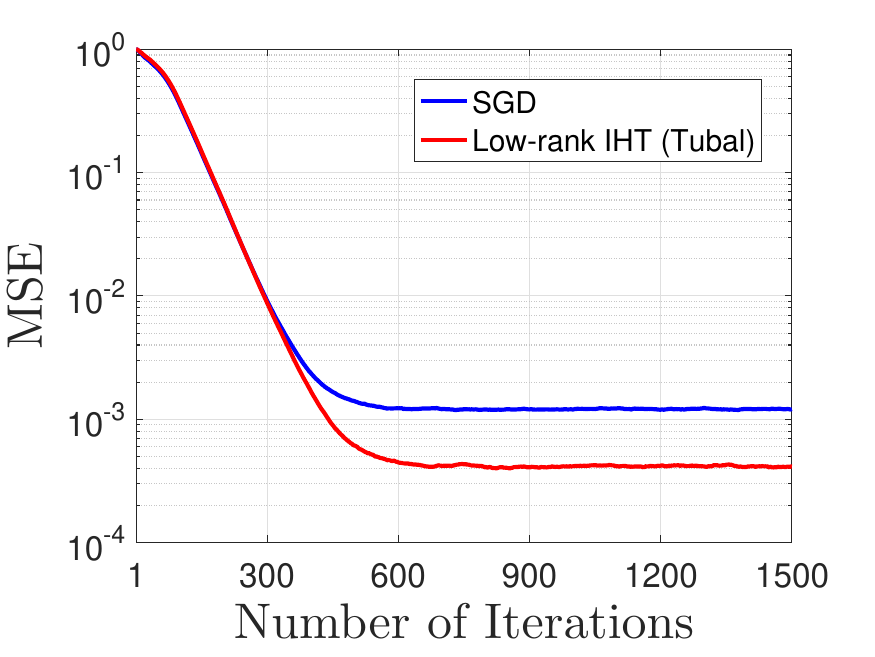}
\end{minipage}
\label{fig:tubal_iht}
}

\caption{MSE comparison of SGD and IHT-based methods. The first subplot (a) corresponds to sparse tensor recovery, while the remaining three subplots (b)-(d) correspond to low-rank tensor recovery under Tucker, tensor-train, and tubal decompositions, respectively.}
\label{fig:mse_all}
\end{figure*}

\begin{figure}[!ht]
\centering
\subfigure[]{
\begin{minipage}[t]{0.31\textwidth}
\centering
\includegraphics[width=5cm]{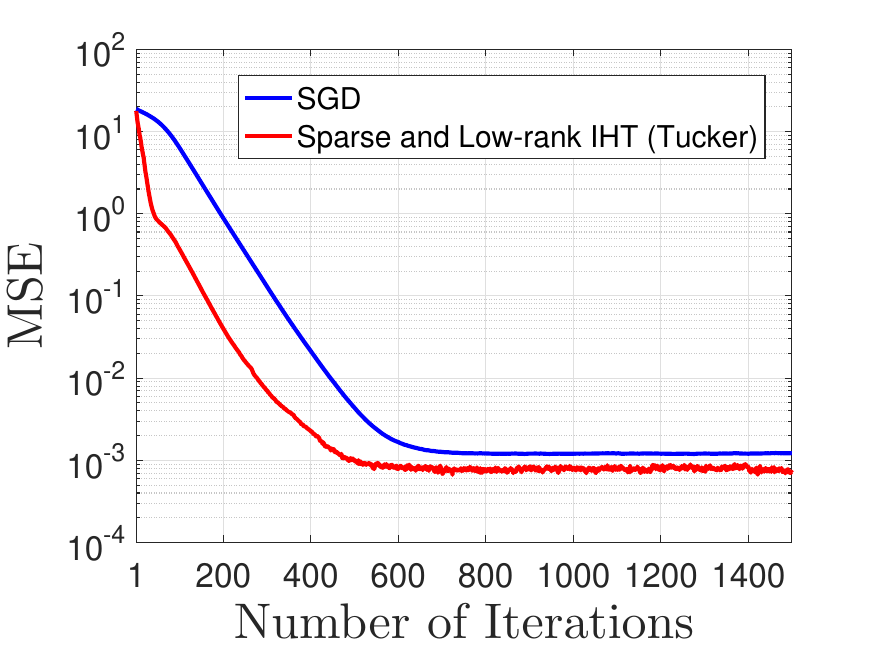}
\end{minipage}
\label{Sparse LR IHT Tucker}
}
\subfigure[]{
\begin{minipage}[t]{0.31\textwidth}
\centering
\includegraphics[width=5cm]{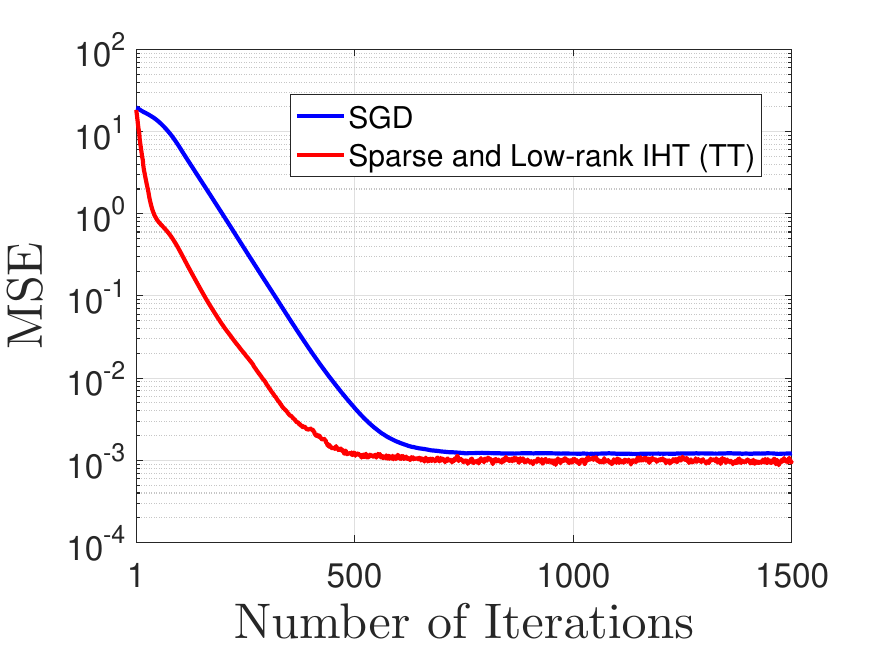}
\end{minipage}
\label{Sparse LR IHT TT}
}
\subfigure[]{
\begin{minipage}[t]{0.31\textwidth}
\centering
\includegraphics[width=5cm]{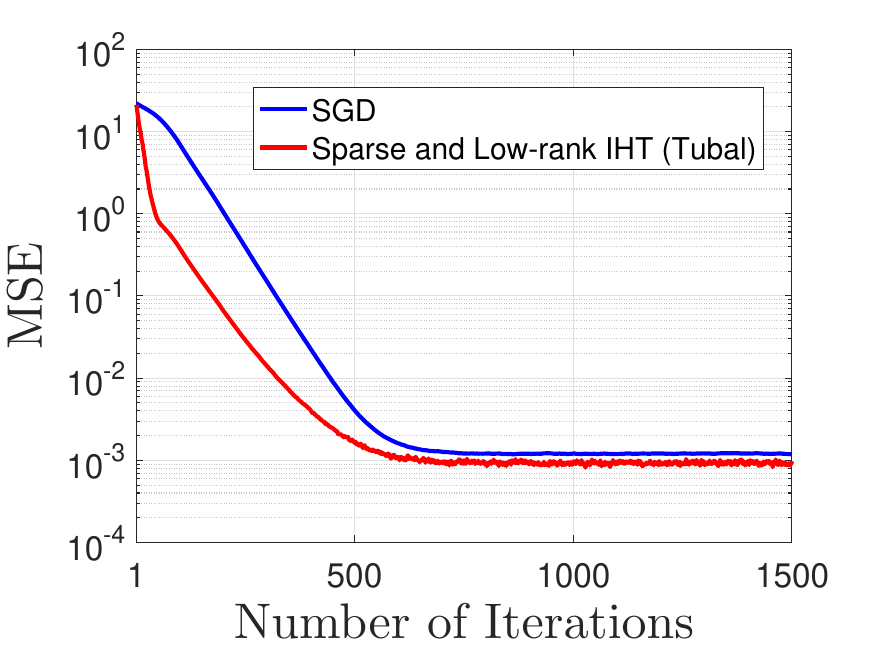}
\end{minipage}
\label{Sparse LR IHT Tubal}
}
\caption{MSE comparison of SGD and Sparse \& Low-rank IHT on sparse plus low-rank tensor recovery under Tucker, tensor train, and tubal decompositions.}
\end{figure}

Figures~\ref{fig:sparse_iht} demonstrates that Sparse IHT outperforms the unstructured SGD baseline in terms of steady-state MSE when the ground-truth tensor is sparse, in agreement with \Cref{theorem:sparse structure conclusion main paper}. Figures~\ref{fig:tucker_iht},~\ref{fig:tt_iht}, and~\ref{fig:tubal_iht} demonstrate that Low-rank IHT achieves a lower steady-state MSE than SGD under all three tensor decomposition formats, confirming that the low-rank structure effectively suppresses noise by reducing the degrees of freedom of the estimator, as established in \Cref{theorem: low-rank structure conclusion main paper}. Finally, Figures~\ref{Sparse LR IHT Tucker},~\ref{Sparse LR IHT TT}, and~\ref{Sparse LR IHT Tubal} show that Sparse and Low-rank IHT achieves both a faster convergence rate and a lower steady-state MSE than SGD across all decomposition formats, demonstrating the benefit of jointly exploiting sparsity and low-rankness. These results are consistent with the theoretical analysis in \Cref{theorem:sparse and low-rank structure conclusion main paper}. Moreover, incorporating both sparsity and low-rank structure simultaneously results in a comparatively smaller reduction in recovery error relative to employing either structural prior in isolation, as the error is jointly governed by these two complementary—yet potentially competing-constraints. Furthermore, the resulting joint structural model is inherently more complex and nonconvex than either the sparsity-only or low-rank-only formulations. This increased complexity necessitates the adoption of step sizes tailored to distinct structural components within the algorithm, thereby introducing nontrivial challenges in algorithm design and hyperparameter tuning.

\section{Experimental Results}
\label{sec:experimental_results}

In this section, we validate the effectiveness of the proposed adaptive methods, including SGD, sparse IHT (S IHT), low-rank IHT (LR IHT), and sparse and low-rank IHT (S \& LR IHT), for predicting the global total electron content (TEC) distribution. We use the Video Imputation with SoftImpute, temporal smoothing, and auxiliary (VISTA) dataset \cite{sun2023complete}, where $\{\mY_i\}_{i=1}^{N} \in \mathbb{R}^{181 \times 361}$ denotes the sequence of multivariate responses. The spatial-temporal resolution of the VISTA dataset is $1^\circ \times 1^\circ \times 15\,\mathrm{min}$. We consider data from September 2017, where the entire month forms a matrix-valued time series with $N = 2880$. For the auxiliary covariates $\{\va_i\}_{i=1}^{N} \in \mathbb{R}^{3}$, we collect 15-minute resolution IMF Bz and Sym-H time series, which characterize near-Earth magnetic field and plasma conditions \cite{papitashvili2014omni}. In addition, we include the daily F10.7 index, which measures the solar radio flux at 10.7 cm, as an auxiliary predictor. The IMF Bz and Sym-H data are obtained from the OMNI dataset \cite{king2020omni}, while the F10.7 index is accessed from the NOAA data repository \cite{tapping201310}.

During the training phase, the response $\calY_i(s_1,s_2,s_4) = \mY_{i-s_4+1}$ is constructed from the ground-truth data. In contrast, during the testing phase, we define $\calY_i(s_1,s_2,1) = (\calX_{i} \times_{3}^{1} \mA_i)(s_1,s_2,1)$ is generated via the model, while all other entries are taken from the ground-truth data. This is due to the causal constraint that only past responses are available at time $i$. For the algorithmic setup, the number of most recent input vectors is set to $d_4 = 2$. The first $2000$ samples are used for training, and the remaining $880$ samples are used for testing. Common parameters are set as $N_{\max} = 10$, $\delta = 10^{-10}$, and $\epsilon = 10^{-3}$ (with $\epsilon_1 = \epsilon_2 = 10^{-3}$ for S \& LR IHT). The sparsity ratio and rank are chosen to strike a good balance, as larger values provide only marginal additional gains. All methods are initialized using a zero tensor initialization. Algorithm-specific parameters are listed in Table~\ref{tab:exp_params}. Unless otherwise specified, these settings are used throughout the experiments. In addition, we evaluate performance using the normalized mean squared error (NMSE), defined as
\begin{equation}
\label{NMSE experimental results}
\mathrm{NMSE}(i) = \frac{\|\widehat{\mY}_i - \mY_i\|_F}{\|\mY_i\|_F},
\end{equation}
where $\widehat{\mY}_i = (\calX_{i} \times_{3}^{1} \mA_i)(:,:,1)$ and $\mY_i$ denote the predicted and ground-truth responses at time index $i$, respectively. The training and testing NMSE are computed as the average NMSE over the corresponding time periods. Finally, we note that, in all subsequent tables, the minimum NMSE value in each column is highlighted in boldface.

\begin{table*}[!ht]
\centering
\footnotesize
\caption{Algorithm-specific experimental parameters.}
\label{tab:exp_params}
\renewcommand{\arraystretch}{1.1}
\begin{tabular}{ll}
\hline
\textbf{Algorithm} &  \textbf{Parameters} \\
\hline
SGD &
$\mu = 5\times 10^{-5}$   \\

S IHT &
$\mu = 5\times 10^{-5}$, $a=0.65$  \\

LR IHT (Tucker) &
$\mu = 5\times 10^{-5}$, $\vr_{\text{preset}}=[15,15,2]$ \\
LR IHT (TT) &
$\mu = 5\times 10^{-5}$, $\vr_{\text{preset}}=[15,2]$ \\
LR IHT (Tubal)  &
$\mu = 5\times 10^{-5}$, $r_{\text{preset}}=15$ \\

S \& LR IHT (Tucker) &
$\mu_1=2\times 10^{-7}$, $\mu_2=5\times 10^{-5}$, $a=0.03$,
$\vr_{\text{preset}}=[11,11,2]$ \\
S \& LR IHT (TT) &
$\mu_1=2\times 10^{-7}$, $\mu_2=5\times 10^{-5}$, $a=0.03$,
$\vr_{\text{preset}}=[10,2]$ \\
S \& LR IHT (Tubal) &
$\mu_1=2\times 10^{-7}$, $\mu_2=5\times 10^{-5}$, $a=0.03$,
$r_{\text{preset}}=7$ \\
\hline
\end{tabular}
\end{table*}

In the first experiment, we compare the performance of the proposed methods in terms of NMSE, runtime, and storage. As shown in Table~\ref{tab:results_nmse_runtime_exp2}, all IHT-based methods achieve lower training NMSE than SGD. However, only the sparse and low-rank (S \& LR) IHT methods attain significantly lower testing NMSE, indicating that the underlying system exhibits a joint sparse and low-rank structure, which is crucial for improved generalization performance. In terms of computational efficiency, SGD achieves the lowest runtime since it does not involve any structural projection. Among the structured approaches, low-rank projections incur higher computational overhead than sparse projections due to the SVD-based operations. Nevertheless, all structured methods substantially reduce storage requirements compared to SGD, highlighting a favorable trade-off between computational complexity and memory efficiency.

\begin{table}[!ht]
\centering
\footnotesize
\caption{Performance comparison in terms of NMSE, runtime, and storage.}
\label{tab:results_nmse_runtime_exp2}
\begin{tabular}{lcccc}
\hline
\textbf{Method} & \textbf{Training NMSE} & \textbf{Testing NMSE} & \textbf{Runtime (s)} & \textbf{Storage} \\
\hline

SGD               & 0.022601 & 0.032458 & \textbf{19.87} & 100\% \\
S IHT             & 0.022576 & 0.032438 & 50.94 & 65\% \\
LR IHT (Tucker)   & \textbf{0.022169} & 0.032526 & 84.76 & \textbf{4.38\%} \\
LR IHT (TT)       & 0.022199 & 0.032564 & 54.00 & 6.91\% \\
LR IHT (Tubal)    & 0.022231 & 0.032636 & 147.47 & 12.47\% \\
S \& LR IHT (Tucker) & 0.022235 & 0.031852 & 125.79 & 6.17\% \\
S \& LR IHT (TT)     & 0.022312 & 0.031915 & 87.73 & 7.61\% \\
S \& LR IHT (Tubal)  & 0.022517 & \textbf{0.031454} & 188.51 & 8.82\% \\

\hline
\end{tabular}
\end{table}

In the second experiment, we compare the performance of different methods for $d_4 = 1$ and $d_4 = 2$. To ensure stable testing performance when $d_4 = 1$, we set $\mu = \mu_1 = 5\times10^{-6}$ and $\mu_2 = 4\times10^{-8}$. As shown in Table~\ref{tab:compare_d4_new}, increasing $d_4$ leads to a consistent reduction in both training and testing NMSE across all methods. This improvement can be explained from two perspectives. First, a larger $d_4$ effectively increases the number of samples involved in each update, leading to more accurate parameter estimation. Second, under the testing scheme $\widehat{\mY}_i = (\calX_{i} \times_{3}^{1} \mA_i)(:,:,1)$, the model with $d_4 = 1$ does not update its estimate during the testing phase, whereas for $d_4 = 2$, the adaptive updating mechanism remains active. This observation highlights the importance of adaptivity, as continuous updates during testing enable the model to better track the underlying dynamics and achieve improved prediction performance.

\begin{table*}[!ht]
\centering
\footnotesize
\caption{Comparison between $d_4 = 1$ and $d_4 = 2$ in terms of NMSE and runtime.}
\label{tab:compare_d4_new}
\begin{tabular}{lccc|ccc}
\hline
 & \multicolumn{3}{c|}{$d_4 = 1$} & \multicolumn{3}{c}{$d_4 = 2$} \\
\cline{2-7}
\textbf{Method}
& \textbf{Training} & \textbf{Testing} & \textbf{Runtime (s)}
& \textbf{Training} & \textbf{Testing} & \textbf{Runtime (s)} \\
\hline

SGD
& 0.055757 & 0.072444 & \textbf{8.78}
& 0.022601 & 0.032458 & \textbf{19.87} \\

S IHT
& 0.055784 & 0.072500 & 25.11
& 0.022576 & 0.032438 & 50.94 \\

LR IHT (Tucker)
& 0.055740 & 0.072370 & 41.97
& \textbf{0.022169} & 0.032526 & 84.76 \\

LR IHT (TT)
& 0.055751 & 0.072395 & 29.34
& 0.022199 & 0.032564 & 54.00 \\

LR IHT (Tubal)
& 0.055710 & 0.072394 & 80.50
& 0.022231 & 0.032636 & 147.47 \\

S \& LR IHT (Tucker)
& \textbf{0.055698} & 0.072371 & 62.17
& 0.022235 & 0.031852 & 125.79 \\

S \& LR IHT (TT)
& 0.055739 & 0.072365 & 41.97
& 0.022312 & 0.031915 & 87.73 \\

S \& LR IHT (Tubal)
& 0.055806 & \textbf{0.072353} & 101.87
& 0.022517 & \textbf{0.031454} & 188.51 \\

\hline
\end{tabular}
\end{table*}

In the third experiment, we investigate the effect of the number of training samples. As shown in Table~\ref{tab:results_nmse_runtime_exp2}, the testing NMSE consistently decreases as the number of training samples increases, indicating improved generalization with more data. Across all sample sizes, the S \& LR IHT (Tubal) method consistently achieves the lowest testing NMSE. This highlights the superior effectiveness of the Tubal-based tensor structure in capturing intrinsic data correlations, leading to more accurate predictions in both low- and high-sample regimes. Furthermore, all S \& LR IHT methods outperform their purely sparse or purely low-rank counterparts across different sample sizes. This demonstrates that combining sparsity and low-rank structures provides complementary inductive biases, where sparsity captures localized patterns and low-rank structure models global dependencies, resulting in improved sample efficiency. Finally, the storage requirements and runtime remain essentially unchanged compared to those reported in Table~\ref{tab:results_nmse_runtime_exp2}.

\begin{table*}[!ht]
\centering
\footnotesize
\caption{Testing NMSE versus number of training samples.}
\label{tab:results_testing_samples}
\begin{tabular}{lccc}
\hline
\textbf{Method} & 1000 & 1500 & 2000 \\
\hline

SGD
& 0.037996 & 0.035907 & 0.032458 \\

S IHT
& 0.037970 & 0.035904 & 0.032438 \\

LR IHT (Tucker)
& 0.037907 & 0.035861 & 0.032526 \\

LR IHT (TT)
& 0.037957 & 0.035924 & 0.032564 \\

LR IHT (Tubal)
& 0.038035 & 0.035986 & 0.032636 \\

S \& LR IHT (Tucker)
& 0.037537 & 0.035410 & 0.031852 \\

S \& LR IHT (TT)
& 0.037583 & 0.035425 & 0.031915 \\

S \& LR IHT (Tubal)
& \textbf{0.037194} & \textbf{0.034919} & \textbf{0.031454} \\

\hline
\end{tabular}
\end{table*}

In the fourth experiment, we evaluate the robustness of different methods under noisy responses. As a baseline, we consider the persistence model \cite{sun2023matrix}, in which the current prediction is obtained from the previous noisy response. We focus on two representative noise models commonly encountered in practice.

\paragraph{$(i)$ Additive noise:}  The noisy observations are generated as $\wt\mY_i = \mY_i + \beta \sigma \mE_i, \quad i = 1, \dots, N$, where $\mE_i$ has i.i.d.\ entries drawn from $\calN(0,1)$. The parameter $\sigma^2$ is defined as $\sigma^2 = \frac{1}{N d_1 d_2} \sum_{i=1}^{N} \|\mY_i\|_F^2$ which corresponds to the empirical average signal power. Under this normalization, the signal-to-noise ratio (SNR) is given by $\mathrm{SNR} = 1/\beta^2$. Note that NMSE is computed with respect to the clean responses, while both training and testing are performed using the noisy responses. From Table~\ref{tab:noise_testing_final}, we observe that the testing NMSE increases as $\beta$ increases (i.e., as the SNR decreases), which is expected since stronger noise introduces larger perturbations in the responses and degrades estimation accuracy. In addition, the S \& LR IHT methods achieve lower testing NMSE as the SNR increases compared with other methods, indicating that the underlying system exhibits a jointly sparse and low-rank structure that is effectively exploited by the model. Furthermore, we note that SGD and S IHT are more sensitive to noise, whereas LR IHT and S \& LR IHT exhibit stronger robustness. This can be attributed to the low-rank structure, which effectively suppresses noise components by restricting the solution to a low-dimensional subspace, thereby reducing the impact of noise on the estimation. This behavior is further illustrated in \Cref{Noisy error dynamics}, where low-rank methods exhibit more stable error dynamics and consistently achieve lower NMSE compared to methods without low-rank constraints. Finally, the proposed methods outperform the persistence model in terms of testing NMSE. This is because the adaptive regression framework incorporates a denoising mechanism through model fitting, whereas the persistence model directly propagates noisy observations without any noise mitigation.

\begin{table*}[!ht]
\centering
\footnotesize
\caption{Testing NMSE under different noise levels $\beta$ (SNR $=1/\beta^2$).}
\label{tab:noise_testing_final}
\setlength{\tabcolsep}{4pt}
\begin{tabular}{lccccc}
\hline
\textbf{Method}
& $\beta=0.3$ & $\beta=0.5$ & $\beta=0.8$ & $\beta=1$ & $\beta=2$ \\
\hline

SGD
& 0.047771 & 0.088838 & 0.194070 & 0.289440 & 1.10660 \\

S IHT
& 0.047689 & 0.089273 & 0.194370 & 0.293200 & 1.13920 \\

LR IHT (Tucker)
& 0.031598 & 0.033806 & 0.045588 & 0.059820 & 0.19657 \\

LR IHT (TT)
&  \textbf{0.031576} & 0.033879 & 0.045639 & 0.059670 & 0.19619 \\

LR IHT (Tubal)
&	0.032019 &	0.034474 &	0.052064 &	0.075953 &	0.32705 \\

S \& LR IHT (Tucker)
& 0.031933	& 0.032371 &	0.040219 &	0.049567 &	0.14872 \\

S \& LR IHT (TT)
& 0.031879 &	0.032071 &	0.038948 &	0.046485 &	 \textbf{0.13687} \\

S \& LR IHT (Tubal)
& 0.031718 &	 \textbf{0.031773} &  \textbf{0.036126} &  \textbf{0.041145} &	0.14410  \\

\hline
\textbf{Persistence model}
& 0.1142 & 0.2878 & 0.7110 & 1.1013 & 4.3557 \\
\hline
\end{tabular}
\end{table*}

\begin{figure}[!ht]
\centering
\subfigure[]{
\begin{minipage}[t]{0.31\textwidth}
\centering
\includegraphics[width=5cm]{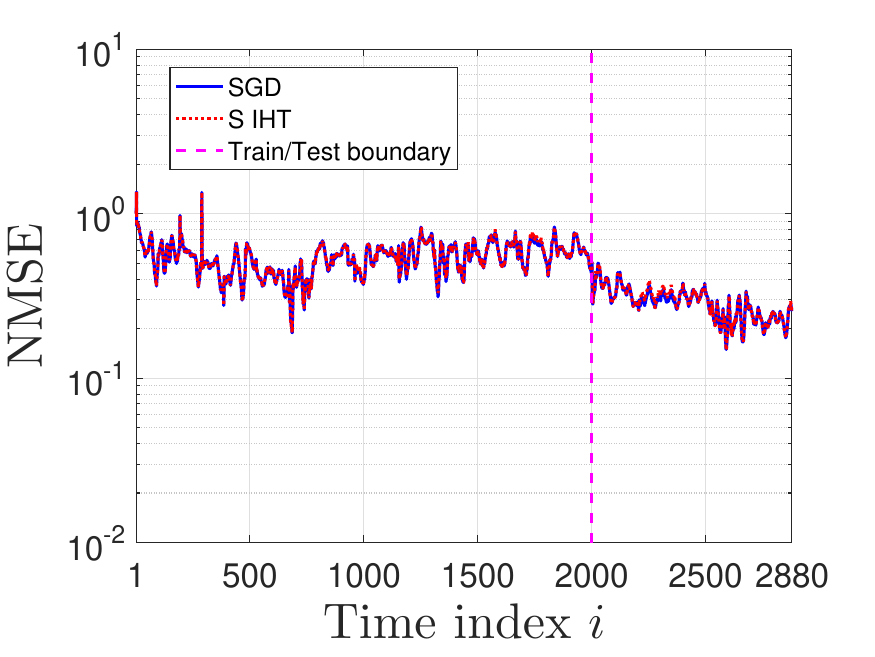}
\end{minipage}
\label{Sparse dynamics}
}
\subfigure[]{
\begin{minipage}[t]{0.31\textwidth}
\centering
\includegraphics[width=5cm]{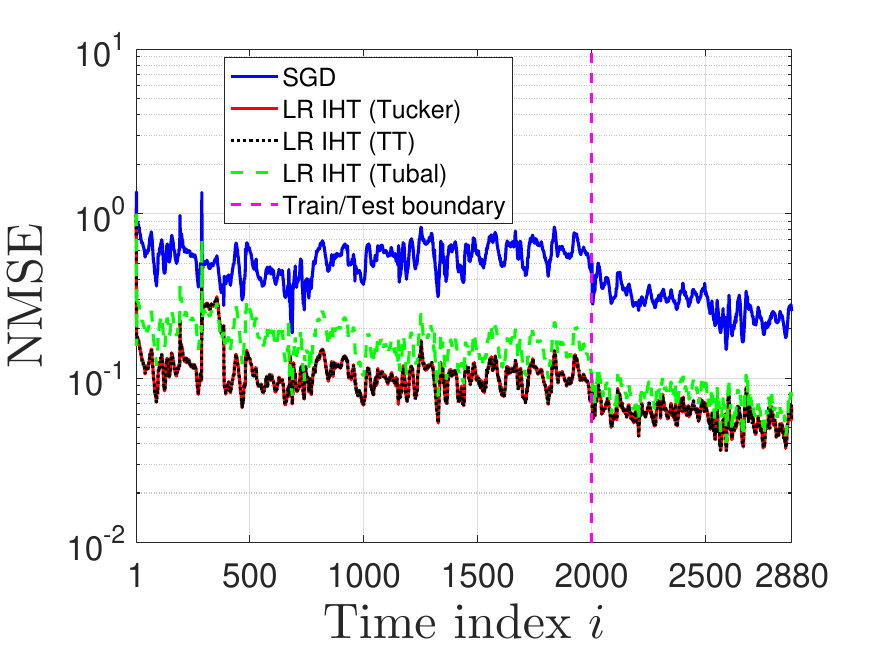}
\end{minipage}
\label{LR dynamics}
}
\subfigure[]{
\begin{minipage}[t]{0.31\textwidth}
\centering
\includegraphics[width=5cm]{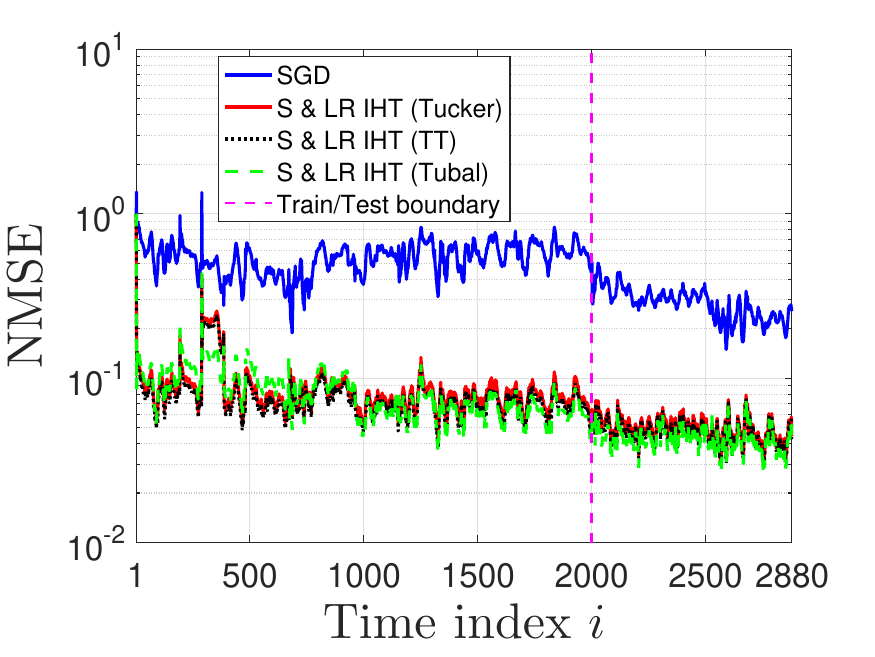}
\end{minipage}
\label{Sparse LR dynamics}
}
\caption{Training and testing error dynamics n the presence of additive noise ($\beta=1$), comparing SGD with (a) sparse IHT, (b) low-rank IHT, and (c) sparse and low-rank IHT.}
\label{Noisy error dynamics}
\end{figure}

\paragraph{$(ii)$ Incomplete information:} In practical scenarios, the responses are often only partially observed, and it is important to evaluate whether accurate prediction of the full responses remains feasible under such conditions. To this end, we randomly sample $10\%$, $30\%$, $50\%$, $70\%$, and $90\%$ of the entries in the responses $\{\mY_i\}_{i=1}^N$ and apply the proposed methods. As shown in Table~\ref{tab:sampling_testing_final}, the testing NMSE increases as the sampling ratio decreases, reflecting the loss of information due to missing observations. Moreover, we observe that methods incorporating low-rank structure consistently achieve lower NMSE compared to SGD and S IHT across different sampling ratios. This indicates that low-rank structure is effective in exploiting global dependencies in the data, which helps mitigate the impact of missing entries. In contrast, sparsity alone provides limited improvement in this setting. These results suggest that structural constraints, particularly low-rankness, play a crucial role in recovering incomplete responses by leveraging the underlying correlations in the data. This advantage is further reflected in \Cref{Incomplete data error dynamics}, where LR-based methods exhibit more stable error dynamics than SGD and S IHT. Finally, the proposed methods consistently outperform the persistence model in terms of testing NMSE, since the latter does not exploit any structural regularity in the data.

\begin{table*}[!ht]
\centering
\footnotesize
\caption{Testing NMSE under different sampling ratios.}
\label{tab:sampling_testing_final}
\setlength{\tabcolsep}{4pt}
\begin{tabular}{lccccc}
\hline
\textbf{Method}
& 0.1 & 0.3 & 0.5 & 0.7 & 0.9 \\
\hline

SGD
& 0.84005 & 0.56135 & 0.33651 & 0.16897 & 0.057763 \\

S IHT
& 0.83987 & 0.56139 & 0.33698 & 0.16891 & 0.057737 \\

LR IHT (Tucker)
& 0.82753 & 0.52384 & 0.28896 & 0.12709 & 0.040936 \\

LR IHT (TT)
& 0.82750 & 0.52380 & 0.28891 & 0.12704 & 0.040979 \\

LR IHT (Tubal)
& 0.82907 &	0.52884 &	0.29506 &	0.12993 &	0.041455 \\

S \& LR IHT (Tucker)
& 0.82516 &	0.51981 &	0.28291 &	0.12243 &	0.040007 \\

S \& LR IHT (TT)
&  \textbf{0.82465} &	0.51861 &	0.28152 &	0.12147 &	0.039769 \\

S \& LR IHT (Tubal)
& 0.82477 &	 \textbf{0.5179} &	 \textbf{0.28009} &	 \textbf{0.11935} &	 \textbf{0.039721}  \\

\hline
\textbf{Persistence model}
& 0.9016 & 0.7049 & 0.5082 & 0.3118 & 0.1149 \\
\hline
\end{tabular}
\end{table*}

\begin{figure}[!ht]
\centering
\subfigure[]{
\begin{minipage}[t]{0.31\textwidth}
\centering
\includegraphics[width=5cm]{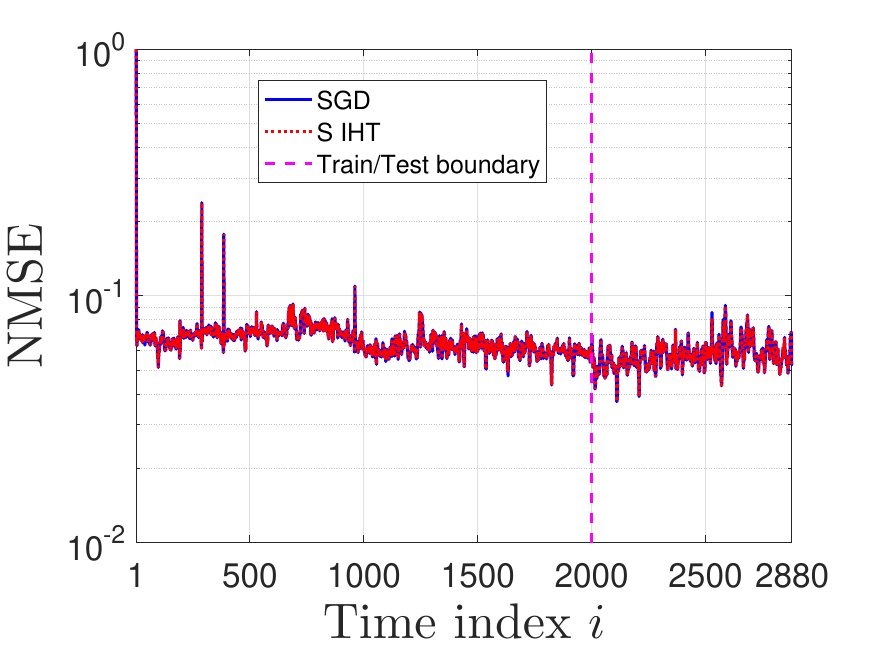}
\end{minipage}
\label{Sparse dynamics incomplete}
}
\subfigure[]{
\begin{minipage}[t]{0.31\textwidth}
\centering
\includegraphics[width=5cm]{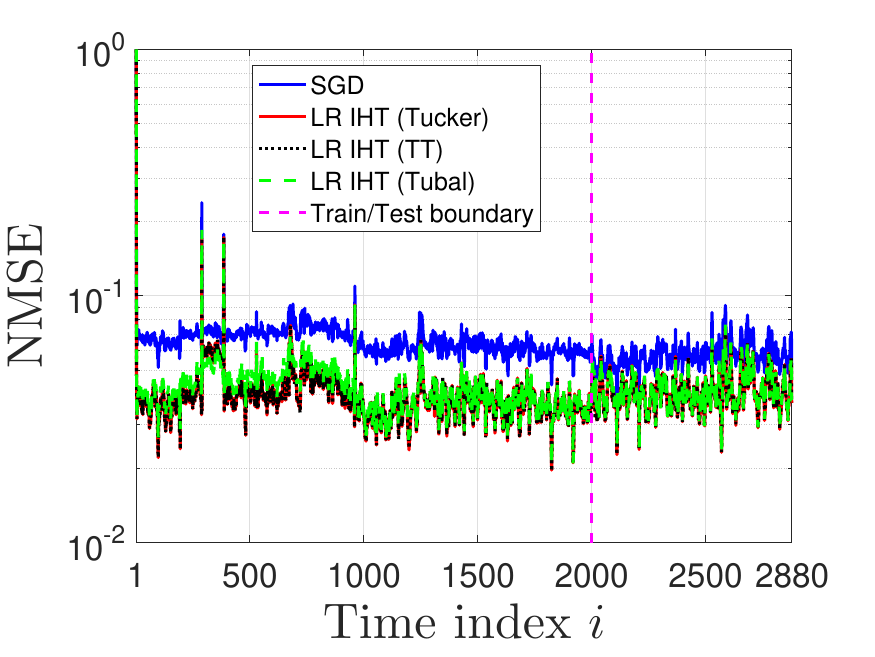}
\end{minipage}
\label{LR dynamics incomplete}
}
\subfigure[]{
\begin{minipage}[t]{0.31\textwidth}
\centering
\includegraphics[width=5cm]{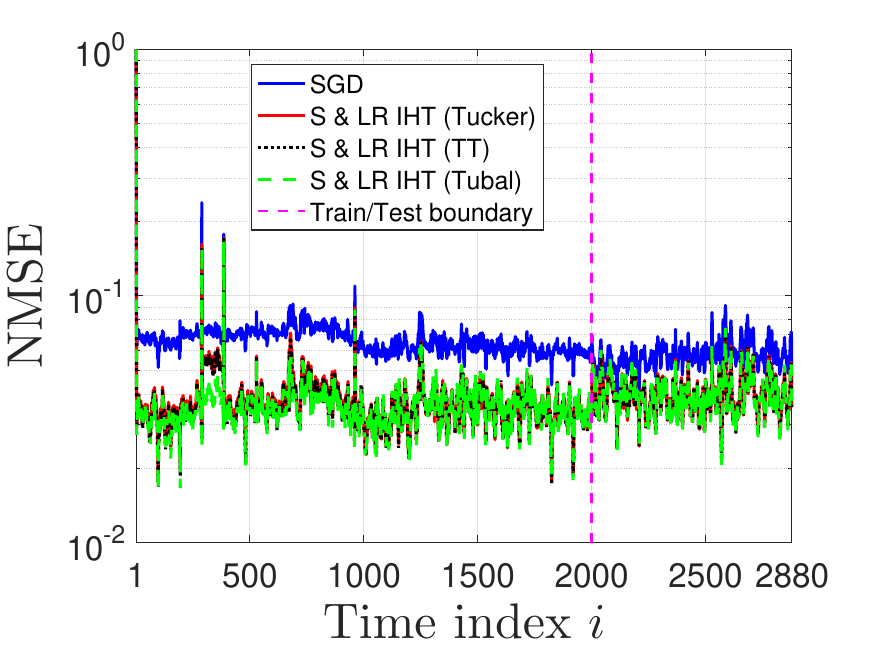}
\end{minipage}
\label{Sparse LR dynamics incomplete}
}
\caption{Training and testing error dynamics in the presence of incomplete responses (90\% sampling ratio), comparing SGD with (a) sparse IHT, (b) low-rank IHT, and (c) sparse and low-rank IHT.}
\label{Incomplete data error dynamics}
\end{figure}

\section{Conclusion}
\label{conclusion}

An adaptive tensor regression framework is developed for streaming matrix-valued time series prediction. The problem of data representation in streaming settings is addressed through two formulations, MoM and ToM, both optimized via online SGD. A consistent observation is that higher-order tensor lifting of matrix-valued outputs yields improved performance. Relative to MoM, the ToM formulation achieves reduced steady-state error and enhanced denoising capability, which identifies it as the more effective representation. This advantage persists under time-varying system dynamics, as reflected in the tracking behavior of SGD. Under general low-dimensional structural settings, fixed-time recovery guarantees for the ToM regression problem are obtained. The resulting error is shown to depend on the intrinsic degrees of freedom rather than the ambient dimension, underscoring the benefit of structural modeling. Exploiting these properties, structured IHT algorithms are designed to incorporate sparsity, low-rankness, and their joint structures. Empirical evaluations on TEC forecasting consistently confirm improved accuracy and robustness.

\section{Acknowledgments}
\label{sec: ack}

We acknowledge funding support from NASA Grants No. 80NSSC23M0191 and 80NSSC23M0192. We thank Kevin Jin for providing and organizing the dataset used in this work. ZQ gratefully acknowledges support from the MICDE Research Scholars Program at the University of Michigan.

\clearpage

\crefalias{section}{appendix}
\appendices

\section{Proof of \Cref{The main theorem of error in the model I}}
\label{Proof of error analysis SGD of model I}

\begin{proof}
We first need to define some notations as follows:
\begin{eqnarray}
    \label{new notations in the MoM regression model}
    \mX_i &\!\!\!\!=\!\!\!\!& \text{reshape}(\calX_i,[d_1d_2, d_3d_4] )\in\R^{d_1d_2\times d_3d_4},\nonumber\\
    \mX^\star &\!\!\!\!=\!\!\!\!& \text{reshape}(\calX^\star,[d_1d_2, d_3d_4] )\in\R^{d_1d_2\times d_3d_4},\nonumber\\
    \ol \va_i &\!\!\!\!=\!\!\!\!& \text{reshape}(\mA_i,[d_3d_4, 1] )\in\R^{d_3d_4\times 1},\\
    \vy_i &\!\!\!\!=\!\!\!\!& \text{reshape}(\mY_i,[d_1d_2, 1] )\in\R^{d_1d_2\times 1}, \nonumber\\
    \ve_i &\!\!\!\!=\!\!\!\!& \text{reshape}(\mE_i^\star,[d_1d_2, 1] )\in\R^{d_1d_2\times 1}.\nonumber
\end{eqnarray}

Based on \eqref{new notations in the MoM regression model}, \eqref{updating method X model I} can be reformulated as
\begin{eqnarray}
    \label{updating method X model I reformula}
    \mX_{i+1} = \mX_i - \mu (\mX_i \ol \va_i - \vy_i) \circ \va_i.
\end{eqnarray}
Then, taking the expectation with respect to $\ol{\va}_i$ and $\ve_i$, we obtain
\begin{eqnarray}
    \label{updating method X model I difference in expectation}
    \E[\|\mX_{i+1} - \mX^\star\|_F^2] &\!\!\!\!=\!\!\!\!&  \E[\|\mX_{i} - \mX^\star - \mu (\mX_i \ol \va_i - \vy_i) \circ \va_i\|_F^2]\nonumber\\
    &\!\!\!\!=\!\!\!\!& \E[\|\mX_{i} - \mX^\star\|_F^2] - 2\mu\E[\trace(((\mX_i \ol \va_i - \vy_i) \circ \va_i)^\top (\mX_{i} - \mX^\star) ) ]\nonumber\\
    &\!\!\!\!\!\!\!\!& + \mu^2 \E[\| (\mX_i \ol \va_i - \vy_i) \circ \va_i \|_F^2 ].
\end{eqnarray}

First, the second term in the last line of \eqref{updating method X model I difference in expectation} can be computed as
\begin{eqnarray}
    \label{cross term in the MoM regression model}
    &\!\!\!\!\!\!\!\!&\E[\trace(((\mX_i \ol \va_i - \vy_i) \circ \va_i)^\top (\mX_{i} - \mX^\star) ) ]\nonumber\\
    &\!\!\!\!=\!\!\!\!& \E[\trace(((\mX_i \ol \va_i - \mX^\star \ol \va_i - \ve_i ) \circ \va_i)^\top (\mX_{i} - \mX^\star) ) ]\nonumber\\
    &\!\!\!\!=\!\!\!\!& \E[ \trace( ((\mX_i - \mX^\star)(\ol \va_i\ol \va_i^\top )   )^\top  (\mX_i - \mX^\star)  )   ] \nonumber\\
    &\!\!\!\!=\!\!\!\!& \E[\trace(\sigma_a^2 \mId_{d_3d_4\times d_3d_4}  (\mX_i - \mX^\star)^\top(\mX_i - \mX^\star) )]\nonumber\\
    &\!\!\!\!=\!\!\!\!& \sigma_a^2 \E[\|\mX_i - \mX^\star\|_F^2],
\end{eqnarray}
where the second equation uses $\E[\ve_i] = {\bf 0}$  and the third equation follows $\E[\ol \va_i\ol \va_i^\top] = \sigma_a^2 \mId_{d_3d_4\times d_3d_4}$.

Next, we evaluate the third term in the last line of \eqref{updating method X model I difference in expectation} as
\begin{eqnarray}
    \label{sqaured term in the MoM regression model}
    &\!\!\!\!\!\!\!\!&\E[\| (\mX_i \ol \va_i - \vy_i) \circ \va_i \|_F^2 ]\nonumber\\
    &\!\!\!\!=\!\!\!\!&  \E[  \trace( ((\mX_i - \mX^\star)(\ol \va_i\ol \va_i^\top ) - \ve_i \ol \va_i^\top   )^\top(  (\mX_i - \mX^\star)(\ol \va_i\ol \va_i^\top ) - \ve_i \ol \va_i^\top    )     )   ]\nonumber\\
    &\!\!\!\!=\!\!\!\!& \E[\trace( (\ol \va_i\ol \va_i^\top )(\mX_i - \mX^\star)^\top (\mX_i - \mX^\star)(\ol \va_i\ol \va_i^\top )  + \ol \va_i \ve_i^\top \ve_i \ol \va_i^\top )   ]\nonumber\\
    &\!\!\!\!=\!\!\!\!& \E[\trace(\sigma_a^2\mId_{d_3d_4\times d_3d_4} \trace((\mX_i - \mX^\star)^\top (\mX_i - \mX^\star)   \sigma_a^2\mId_{d_3d_4\times d_3d_4} )   )] \nonumber\\
    &\!\!\!\!\!\!\!\!&+2\E[\trace(\sigma_a^2\mId_{d_3d_4\times d_3d_4}   (\mX_i - \mX^\star)^\top (\mX_i - \mX^\star)   \sigma_a^2\mId_{d_3d_4\times d_3d_4})]\nonumber\\
    &\!\!\!\!\!\!\!\!& + \E[\trace(\sigma_a^2 \mId_{d_3d_4\times d_3d_4} )\trace(\sigma_e^2 \mId_{d_1d_2\times d_1d_2} )]\nonumber\\
    &\!\!\!\!=\!\!\!\!&(d_3d_4 + 2)\sigma_a^4 \E[\|\mX_i - \mX^\star\|_F^2] + d_1d_2d_3d_4\sigma_a^2\sigma_e^2,
\end{eqnarray}
where the second equation uses $\E[\ve_i] = {\bf 0}$ and the third equation follows \cite[Lemma A.2]{sayed2011adaptive}.

Letting $i\to \infty$, we have $\E[\|\mX_{i+1} - \mX^\star\|_F^2] = \E[\|\mX_{i} - \mX^\star\|_F^2]$. Combining \eqref{updating method X model I difference in expectation}, \eqref{cross term in the MoM regression model}, and \eqref{sqaured term in the MoM regression model}, we further obtain
\begin{eqnarray}
    \label{steady state performance in the MoM regression model}
    \lim_{i\to\infty}\E[\|\calX_{i} - \calX^\star\|_F^2]  =  \lim_{i\to\infty}\E[\|\mX_{i} - \mX^\star\|_F^2]=   \frac{\mu d_1d_2d_3d_4\sigma_e^2}{2 - \mu(d_3d_4 + 2)\sigma_a^2}.
\end{eqnarray}

\end{proof}

\section{Proof of \Cref{The main theorem of error in the model II}}
\label{Proof of error analysis SGD of model II}

\begin{proof}

First, we define the following notations:
\begin{eqnarray}
    \label{new notations in the ToM regression model}
    \mX_i &\!\!\!\!=\!\!\!\!& \text{reshape}(\calX_i,[d_1d_2, d_3] )\in\R^{d_1d_2\times d_3},\nonumber\\
    \mX^\star &\!\!\!\!=\!\!\!\!& \text{reshape}(\calX^\star,[d_1d_2, d_3] )\in\R^{d_1d_2\times d_3},\nonumber\\
    \mY_i&\!\!\!\!=\!\!\!\!& \text{reshape}(\calY_i,[d_1d_2, d_4] )\in\R^{d_1d_2\times d_4},  \\
    \mE_i &\!\!\!\!=\!\!\!\!& \text{reshape}(\calE_i^\star,[d_1d_2, d_4] )\in\R^{d_1d_2\times d_4}.\nonumber
\end{eqnarray}
Then, following \eqref{new notations in the ToM regression model}, \eqref{updating method X another model II} can be rewritten as
\begin{eqnarray}
    \label{updating method X model II reformula}
    \mX_{i+1} = \mX_i - \frac{\mu}{d_4} (\mX_i \mA_i - \mY_i) \mA_i^\top.
\end{eqnarray}
Taking the expectation with respect to $\mA_i$ and $\mE_i$, we obtain
\begin{eqnarray}
    \label{relationship between difference in the ToM regression model}
    \E[\|\mX_{i+1} - \mX^\star\|_F^2] &\!\!\!\!=\!\!\!\!& \E[\|\mX_{i} - \frac{\mu}{d_4} (\mX_i \mA_i - \mY_i) \mA_i^\top -\mX^\star \|_F^2]\nonumber\\
    &\!\!\!\!=\!\!\!\!& \E[\|\mX_{i} - \mX^\star\|_F^2  ] - \frac{2\mu}{d_4}\E[\trace((\mX_{i} - \mX^\star )^\top  (\mX_i \mA_i - \mY_i) \mA_i^\top  ) ]\nonumber\\
    &\!\!\!\!\!\!\!\!& + \frac{\mu^2}{d_4^2} \E[\|  (\mX_i \mA_i - \mY_i) \mA_i^\top  \|_F^2 ].
\end{eqnarray}

Now, we can compute the second term in the last line of \eqref{relationship between difference in the ToM regression model} as
\begin{eqnarray}
    \label{expectation of cross term in the ToM regression model}
    &\!\!\!\!\!\!\!\!&\E[\trace((\mX_{i} - \mX^\star )^\top  (\mX_i \mA_i - \mY_i) \mA_i^\top  ) ]\nonumber\\
    &\!\!\!\!=\!\!\!\!& \E[\trace((\mX_{i} - \mX^\star )^\top  ((\mX_{i} - \mX^\star ) \mA_i - \mE_i) \mA_i^\top  ) ]\nonumber\\
    &\!\!\!\!=\!\!\!\!&\E[\trace( (\mX_{i} - \mX^\star )^\top  (\mX_{i} - \mX^\star ) \mA_i\mA_i^\top )]\nonumber\\
    &\!\!\!\!=\!\!\!\!& d_4\sigma_a^2\E[\|\mX_{i} - \mX^\star\|_F^2],
\end{eqnarray}
where the second equation uses $\E[\mE_i] = {\bm 0}$ and the last lines follows $\E[\mA_i\mA_i^\top] = \sum_{j=1}^{d_4}\E[\va_j \va_j^\top] = d_4\sigma_a^2 \mId_{d_3\times d_3}$.

In addition, we can further derive the third term in the last line of \eqref{relationship between difference in the ToM regression model} as
\begin{eqnarray}
    \label{expectation of squared term in the ToM regression model}
    &\!\!\!\!\!\!\!\!&\E[\|  (\mX_i \mA_i - \mY_i) \mA_i^\top  \|_F^2 ]\nonumber\\
    &\!\!\!\! = \!\!\!\!& \E[\trace(\mA_i ( \mX_i \mA_i - \mY_i)^\top  (\mX_i \mA_i - \mY_i) \mA_i^\top )    ]\nonumber\\
    &\!\!\!\! = \!\!\!\!& \E[\trace(\mA_i\mA_i^\top ( \mX_i - \mX^\star )^\top( \mX_i - \mX^\star ) \mA_i\mA_i^\top )] + \E[\trace(\mA_i \mE_i^\top \mE_i \mA_i^\top )]\nonumber\\
    &\!\!\!\! = \!\!\!\!& \E\bigg[\trace\bigg(\bigg(\sum_{j=1}^{d_4} \va_j \va_j^\top \bigg)  ( \mX_i - \mX^\star )^\top( \mX_i - \mX^\star ) \bigg(\sum_{j=1}^{d_4} \va_j \va_j^\top \bigg) \bigg)    \bigg]   +   \E[\| \mE_i \mA_i^\top  \|_F^2 ]\nonumber\\
    &\!\!\!\! = \!\!\!\!& \sum_{j_1\neq j_2}\trace(\E[( \va_{j_1} \va_{j_1}^\top )  ( \mX_i - \mX^\star )^\top( \mX_i - \mX^\star ) ( \va_{j_2} \va_{j_2}^\top )  ])\nonumber\\
    &\!\!\!\!\!\!\!\!& + \sum_{j =1}^{d_4}\trace(\E[( \va_{j} \va_{j}^\top )  ( \mX_i - \mX^\star )^\top( \mX_i - \mX^\star ) ( \va_{j} \va_{j}^\top )  ]) + \sum_{j=1}^{d_4}\E[\| \mE_i \wt \va_j  \|_2^2 ]\nonumber\\
    &\!\!\!\! = \!\!\!\!& (d_4-1)d_4\sigma_a^4\E[\|\mX_i - \mX^\star  \|_F^2] + d_4\E[\trace(\sigma_a^2\mId_{d_3\times d_3}\trace( ( \mX_i - \mX^\star )^\top( \mX_i - \mX^\star ) \sigma_a^2\mId_{d_3\times d_3}  )   )]\nonumber\\
    &\!\!\!\!\!\!\!\!&+2d_4 \sigma_a^4\E[\trace( ( \mX_i - \mX^\star )^\top( \mX_i - \mX^\star ) )] + \sum_{j=1}^{d_4}\E[\trace(\sigma_e^2 \|\wt \va_j\|_2^2 \mId_{d_1d_4\times d_1d_4}  ) ]\nonumber\\
    &\!\!\!\! = \!\!\!\!& (d_3+d_4+1)d_4\sigma_a^4\E[\|\mX_i - \mX^\star  \|_F^2] + \sigma_a^2\sigma_e^2d_1d_2d_3d_4,
\end{eqnarray}
where the second equation uses $\E[\mE_i] = {\bm 0}$ and in the fourth equation, we define  $\wt \va_j$ as the $j$-th column of $\mA_i^\top$. In addition, the fifth equation follows \cite[Lemma A.2]{sayed2011adaptive} and $\mE_i \wt \va_j \sim\calN({\bf 0}, \sigma_e^2d_1d_2\|\wt \va_j\|_2^2)$ and the last line uses $\sum_{j=1}^{d_4}\E[\|\wt \va_j\|_2^2 ] = \E[\|\mA_i\|_F^2 ] = \trace(d_3\sigma_a^2 \mId_{d_4\times d_4} ) = d_3d_4\sigma_a^2$.

Considering that when $i\to\infty$, $\E[\|\mX_{i+1} - \mX^\star\|_F^2] = \E[\|\mX_{i} - \mX^\star\|_F^2]$. Hence, combing \eqref{relationship between difference in the ToM regression model}, \eqref{expectation of cross term in the ToM regression model} and \eqref{expectation of squared term in the ToM regression model}, as $i\to\infty$ the recursion stabilizes at
\begin{eqnarray}
    \label{steady state performance in the ToM regression model}
    \lim_{i\to\infty}\E[\|\calX_{i} - \calX^\star\|_F^2]  =  \lim_{i\to\infty}\E[\|\mX_{i} - \mX^\star\|_F^2]=   \frac{\mu d_1d_2d_3\sigma_e^2}{2d_4 - \mu(d_3 + d_4 + 1)\sigma_a^2}.
\end{eqnarray}

\end{proof}

\section{Proof of \Cref{The main theorem of tracking ability in the model II}}
\label{Proof of tracking ability of SGD of model II}

\begin{proof}
Following the same notations in \eqref{new notations in the ToM regression model}, $\mX_i^\star = \text{reshape}(\calX_i^\star,[d_1d_2, d_3] )\in\R^{d_1d_2\times d_3}$ and $\mQ_i = \text{reshape}(\calQ_i,[d_1d_2, d_3] )\in\R^{d_1d_2\times d_3}$, we can obtain
\begin{eqnarray}
    \label{tracking ability relationship between difference in the ToM regression model}
    &\!\!\!\!\!\!\!\!&\E[\|\mX_{i+1} - \mX_{i+1}^\star\|_F^2]\nonumber\\
    &\!\!\!\!=\!\!\!\!& \E[\|\mX_{i} -\mX_i^\star - \frac{\mu}{d_4} (\mX_i \mA_i - \mY_i) \mA_i^\top  + \lambda(\mX_i^{\star} - \mQ_i) \|_F^2]\nonumber\\
    &\!\!\!\!=\!\!\!\!& \E[\|\mX_{i} -\mX_i^\star - \frac{\mu}{d_4} (\mX_i \mA_i - \mY_i) \mA_i^\top \|_F^2] + \lambda^2\E[\| \mX_i^{\star} - \mQ_i  \|_F^2]\nonumber\\
    &\!\!\!\!\!\!\!\!& + 2\lambda\E\bigg[\trace\bigg(\big(  \mX_{i} -\mX_i^\star - \frac{\mu}{d_4} (\mX_i \mA_i - \mY_i) \mA_i^\top   \big)^\top \big(\mX_i^{\star} - \mQ_i  \big) \bigg)\bigg].
\end{eqnarray}

First, by \eqref{expectation of cross term in the ToM regression model} and \eqref{expectation of squared term in the ToM regression model}, we have
\begin{eqnarray}
    \label{tracking ability relationship between difference in the ToM regression model first term}
    &\!\!\!\!\!\!\!\!&\E[\|\mX_{i} -\mX_i^\star - \frac{\mu}{d_4} (\mX_i \mA_i - \mY_i) \mA_i^\top \|_F^2] \nonumber\\
    &\!\!\!\!=\!\!\!\!& \bigg(1 - 2 \mu \sigma_a^2 + \frac{\mu^2(d_3+d_4+1)\sigma_a^4}{d_4} \bigg)\E[\|\mX_{i} - \mX_{i}^\star\|_F^2] + \frac{\mu^2\sigma_a^2\sigma_e^2d_1d_2d_3}{d_4}.
\end{eqnarray}

To derive the second and third terms of the last equation in \eqref{tracking ability relationship between difference in the ToM regression model}, we first compute the expected squared value of each entry of $\mX_i^\star$ as follows. Specifically, for $(s_1,s_2)$-th element, we have
\begin{eqnarray}
    \label{expectation squared value of Xi}
    \E[(\mX_i^\star(s_1, s_2))^2 ] &\!\!\!\!=\!\!\!\!& (1 - \lambda)^2 \E[(\mX_{i-1}^\star(s_1, s_2))^2 ] + \lambda^2 \sigma_q^2\nonumber\\
    &\!\!\!\!=\!\!\!\!& (1 - \lambda)^{2i} \E[(\mX_{0}^\star(s_1, s_2))^2 ] + \lambda^2 \sigma_q^2 \sum_{k=0}^{i-1} (1-\lambda)^{2k}\nonumber\\
    &\!\!\!\!=\!\!\!\!& (1 - \lambda)^{2i} \E[(\mX_{0}^\star(s_1, s_2))^2 ] + \frac{\lambda}{2-\lambda}(1 - (1-\lambda)^{2i})\sigma_q^2.
\end{eqnarray}
Hence, we can further derive
\begin{eqnarray}
    \label{expectation squared value of Xi all elements}
    \E[\|\mX_i^\star\|_F^2 ] &\!\!\!\!=\!\!\!\!& d_1d_2d_3[(1 - \lambda)^{2i} \E[(\mX_{0}^\star(s_1, s_2))^2 ] + \frac{\lambda}{2-\lambda}(1 - (1-\lambda)^{2i})\sigma_q^2]\nonumber\\
    &\!\!\!\!=\!\!\!\!&  \frac{d_1d_2d_3\lambda}{2 - \lambda}\sigma_q^2, \  i\to \infty.
\end{eqnarray}
Using \eqref{expectation squared value of Xi all elements}, we can get
\begin{eqnarray}
    \label{tracking ability relationship between difference in the ToM regression model second term}
    &\!\!\!\!\!\!\!\!&\E[\| \mX_i^{\star} - \mQ_i  \|_F^2]  = \E[\| \mX_i^{\star}  \|_F^2] + \E[\| \mQ_i  \|_F^2] - 2\E[\<\mX_i^{\star}, \mQ_i   \> ]\nonumber\\
    &\!\!\!\! = \!\!\!\!&\frac{2d_1d_2d_3(1-\lambda)^2}{2-\lambda}\sigma_q^2, \  i\to \infty,
\end{eqnarray}
where the first equation uses $\E[\mQ_i] = {\bm 0}$ and the second equation applies $\E[\<\mX_i^{\star}, \mQ_i   \> ] = \E[\<(1-\lambda)\mX_{i-1}^{\star} + \lambda \mQ_i, \mQ_i   \> ] = \lambda \E[\| \mQ_i  \|_F^2] = \lambda d_1d_2d_3\sigma_q^2$.

In addition, we have
\begin{eqnarray}
    \label{tracking ability relationship between difference in the ToM regression model third term}
    &\!\!\!\!\!\!\!\!&\E\bigg[\trace\bigg(\big(  \mX_{i} -\mX_i^\star - \frac{\mu}{d_4} (\mX_i \mA_i - \mY_i) \mA_i^\top   \big)^\top \big(\mX_i^{\star} - \mQ_i  \big) \bigg)\bigg]\nonumber\\
    &\!\!\!\! = \!\!\!\!& \E\bigg[ \trace( -(\mX_i^\star)^\top \mX_i^\star + \frac{\mu}{d_4} \mA_i \mA_i^\top (\mX_i^\star)^\top \mX_i^\star  )   \bigg] + \E\bigg[\trace(\mX_{i}^\top\mX_i^{\star} - \frac{\mu}{d_4} \mA_i \mA_i^\top \mX_i ^\top \mX_i^\star  ) \bigg]\nonumber\\
    &\!\!\!\! = \!\!\!\!&  (\sigma_a^2\mu - 1)\E[\|\mX_i^\star\|_F^2 ] + (1 - \sigma_a^2\mu)\E[\trace(\mX_{i}\mX_i^{\star} ) ] - (1 - \sigma_a^2\mu)\E[\<\mX_{i} -\mX_i^{\star}, \mQ_i   \> ] \nonumber\\
    &\!\!\!\! = \!\!\!\!&  (1 - \sigma_a^2\mu)(\E[\trace((\mX_{i} -\mX_i^{\star})^\top \mX_i^{\star} ) ] -  \E[\<\mX_{i} -\mX_i^{\star}, \mQ_i   \> ]   ),
\end{eqnarray}
where the first equation uses $\E[ \mE_i] = {\bf 0}$ and $\E[\mQ_i] = {\bf 0}$.  Define $\mDelta_i = \mX_i - \mX_i^\star$. To handle the cross term $\E[\trace(\mDelta_i^\top \mX_i^\star)]$, we first establish a recursion for $c_i = \E[\trace(\mDelta_i^\top \mX_i^\star)]$. From the update rule and the RW model $\mX_{i+1}^\star = (1-\lambda)\mX_i^\star + \lambda\mQ_{i+1}$,  subtracting gives
\begin{eqnarray}
    \label{delta recursion}
    \mDelta_{i+1} = \mDelta_i - \frac{\mu}{d_4}(\mDelta_i\mA_i - \mE_i)\mA_i^\top + \lambda\mX_i^\star - \lambda\mQ_{i+1},
\end{eqnarray}
where we used $\mX_i\mA_i - \mY_i = \mDelta_i\mA_i - \mE_i$. Then, expanding $\E[\trace(\mDelta_{i+1}^\top \mX_{i+1}^\star)]$ using \eqref{delta recursion} and $\mX_{i+1}^\star = (1-\lambda)\mX_i^\star + \lambda\mQ_{i+1}$:
\begin{eqnarray}
    \label{ci recursion}
    c_{i+1} &\!\!\!\!=\!\!\!\!& \E\bigg[\trace\bigg(\bigg(\mDelta_i - \frac{\mu}{d_4}(\mDelta_i\mA_i - \mE_i)\mA_i^\top + \lambda\mX_i^\star - \lambda\mQ_{i+1}\bigg)^\top\bigg((1-\lambda)\mX_i^\star + \lambda\mQ_{i+1}\bigg)\bigg)\bigg]\nonumber\\
    &\!\!\!\! = \!\!\!\!& (1-\lambda) c_i - \mu(1-\lambda) \sigma_a^2 c_i + \lambda(1 - \lambda) \E[\|\mX_i^\star\|_F^2 ] - \lambda^2 \E[\|\mQ_{i+1}\|_F^2 ]\nonumber\\
    &\!\!\!\! = \!\!\!\!& (1-\lambda)(1 - \mu\sigma_a^2) c_i + \frac{\lambda^2(1-\lambda)d_1d_2d_3}{2 - \lambda }\sigma_q^2  - \lambda^2 d_1d_2d_3\sigma_q^2,
\end{eqnarray}
where the last equation uses \eqref{expectation squared value of Xi all elements} and $\E[\|\mQ_{i+1}\|_F^2 ] = d_1d_2d_3\sigma_q^2$.

When $i\to \infty$, we have
\begin{eqnarray}
    \label{ci recursion infty}
    c_{\infty} = -\frac{\lambda^2}{(2 - \lambda)(\lambda + \mu\sigma_a^2 - \lambda\mu\sigma_a^2 )}d_1d_2d_3\sigma_q^2.
\end{eqnarray}
In addition, using \eqref{delta recursion}, we can derive
\begin{eqnarray}
    \label{noise recursion infty}
    \E[\<\mX_{i} -\mX_i^{\star}, \mQ_i   \> ] = - \lambda \E[\|\mQ_i \|_F^2] = -\lambda d_1d_2d_3\sigma_q^2.
\end{eqnarray}
Therefore, \eqref{tracking ability relationship between difference in the ToM regression model third term} can be further rewritten as
\begin{eqnarray}
    \label{tracking ability relationship between difference in the ToM regression model third term new}
    &\!\!\!\!  \!\!\!\!&\E\bigg[\trace\bigg(\big(  \mX_{i} -\mX_i^\star - \frac{\mu}{d_4} (\mX_i \mA_i - \mY_i) \mA_i^\top   \big)^\top \big(\mX_i^{\star} - \mQ_i  \big) \bigg)\bigg]   \nonumber\\
    &\!\!\!\! = \!\!\!\!&-\frac{\lambda^2(1 - \mu\sigma_a^2)}{(2 - \lambda)(\lambda + \mu\sigma_a^2 - \lambda\mu\sigma_a^2 )}d_1d_2d_3\sigma_q^2 +  \lambda (1 - \mu\sigma_a^2) d_1d_2d_3\sigma_q^2, \   i\to\infty.
\end{eqnarray}

Based on \eqref{tracking ability relationship between difference in the ToM regression model first term}, \eqref{tracking ability relationship between difference in the ToM regression model second term} and \eqref{tracking ability relationship between difference in the ToM regression model third term new}, when $i\to \infty$, we have
\begin{eqnarray}
    \label{tracking ability relationship between difference in the ToM regression model new version}
    \lim_{i\to\infty}\E[\|\calX_{i} - \calX_i^\star\|_F^2]    = \frac{\mu\sigma_e^2d_1d_2d_3 + \dfrac{2\lambda^2 d_1d_2d_3 d_4\sigma_q^2 \big[ 1 - \lambda(2-\lambda)\big(\lambda + \mu\sigma_a^2(1-\lambda)\big) \big]}{(2-\lambda)(\lambda + \mu\sigma_a^2 - \lambda\mu\sigma_a^2)}}{2d_4 - \mu(d_3+d_4+1)\sigma_a^2}.
\end{eqnarray}
Finally, to ensure that  $1 - \lambda(2-\lambda)(\lambda + \mu\sigma_a^2(1-\lambda))>0$ and $2d_4 - \mu(d_3+d_4+1)\sigma_a^2 > 0$, it suffices to require $\mu< \min\bigg\{\frac{1-2\lambda^2+\lambda^3}{\lambda(2 - \lambda)(1-\lambda)\sigma_a^2}, \frac{2d_4}{(d_3 + d_4 + 1)\sigma_a^2} \bigg\}$.

\end{proof}

\section{Proof of \Cref{RIP condition fro the ToM regression}}
\label{RIP proof}

\begin{proof}
By substituting the concentration inequality from \cite[Theorem 2.3]{candes2011tight} with the concentration inequality for complex-valued subgaussian random variables, when \eqref{eq:mrip ToM single} holds true, we can assert with a probability of $1- e^{-c d_4}$ that:
\begin{eqnarray}
    \label{RIP condition fro the ToM regression single each element}
    (1-\delta_{d_3})\sum_{s_3=1}^{d_3}\|\calX(s_1,s_2,s_3) \|_F^2 &\!\!\!\!\leq\!\!\!\!& \frac{1}{d_4}\sum_{s_3=1}^{d_3}\sum_{s_4=1}^{d_4}\|\calX(s_1,s_2,s_3)    \mA(s_3,s_4)\|_F^2\nonumber\\
    &\!\!\!\!\leq\!\!\!\!&(1+\delta_{d_3})\sum_{s_3=1}^{d_3}\|\calX(s_1,s_2,s_3) \|_F^2,
\end{eqnarray}
Due to $\sum_{s_1=1}^{d_1}\sum_{s_2=1}^{d_2}\sum_{s_3=1}^{d_3}\sum_{s_4=1}^{d_4}\| \calX(s_1,s_2,s_3)   \mA(s_3,s_4)\|_F^2 = \|\calX \times_{3}^{1} \mA\|_F^2$ and $\sum_{s_1=1}^{d_1}\sum_{s_2=1}^{d_2}\sum_{s_3=1}^{d_3}\|\calX(s_1,s_2,s_3) \|_F^2 =  \|\calX\|_F^2$, this completes the proof.
\end{proof}

\section{Proof of \Cref{theorem:sparse  structure conclusion main paper,theorem: low-rank structure conclusion main paper,theorem:sparse and low-rank structure conclusion main paper}}
\label{sparse and low-rank proof}

\begin{proof}
We first establish \Cref{theorem:sparse and low-rank structure conclusion main paper}, which addresses the general sparse plus low-rank setting. The results for the purely sparse or purely low-rank cases (\Cref{theorem:sparse  structure conclusion main paper,theorem: low-rank structure conclusion main paper}) then follow naturally as special instances of this more general proof.

We define $\wh\calS$ and $\wh\calL$ as the minimizers of the following optimization problem:
\begin{eqnarray}
    \label{the definition of minimum value of sparse and LR loss}
    (\wh\calS, \wh\calL) = \min_{\calS\in \wt\setS_{s}, \calL\in \wt\setL_{\vr}}  \frac{1}{2d_4}\|(\calS + \calL) \times_{3}^{1} \mA -   \calY  \|_F^2.
\end{eqnarray}
Hence, we have
\begin{eqnarray}
    \label{minimum value and ground-truth value relationship}
    0 &\!\!\!\!\leq\!\!\!\!&   \frac{1}{d_4}\|(\calS^\star + \calL^\star) \times_{3}^{1} \mA -   \calY  \|_F^2 -  \frac{1}{d_4}\|(\wh\calS + \wh\calL) \times_{3}^{1} \mA -   \calY  \|_F^2\nonumber\\
    &\!\!\!\!\leq\!\!\!\!& \frac{2}{d_4}\<  \calE,  (\wh\calS + \wh\calL - \calS^\star - \calL^\star) \times_{3}^{1} \mA   \>  -  \frac{1}{d_4}\|(\wh\calS + \wh\calL - \calS^\star - \calL^\star) \times_{3}^{1} \mA   \|_F^2,
\end{eqnarray}
and further derive
\begin{eqnarray}
    \label{minimum value and ground-truth value relationship1}
    \frac{1}{d_4}\|(\wh\calS + \wh\calL - \calS^\star - \calL^\star) \times_{3}^{1} \mA   \|_F^2\leq \frac{2}{d_4}\<  \calE,  (\wh\calS + \wh\calL - \calS^\star - \calL^\star) \times_{3}^{1} \mA   \>.
\end{eqnarray}

According to \Cref{RIP condition fro the ToM regression}, we have
\begin{eqnarray}
    \label{lower bound of the left term}
    \frac{1}{d_4}\|(\wh\calS + \wh\calL - \calS^\star - \calL^\star) \times_{3}^{1} \mA   \|_F^2\geq (1-\delta_{d_3})\|\wh\calS + \wh\calL - \calS^\star - \calL^\star\|_F^2.
\end{eqnarray}

On the other hand, we need to apply the covering argument to bound $\frac{2}{d_4}\<  \calE,  (\wh\calS + \wh\calL - \calS^\star - \calL^\star) \times_{3}^{1} \mA   \>$. We respectively define two sets: $\wt\setS_{2s} = \{ \calS:    \|\calS\|_F\leq 2S, \text{supp}(\calS) \leq 2s     \}$ and $\wt\setL_{2\vr} = \{ \calL:  \|\calL\|_F\leq 2L, \text{rank}(\calL) \leq 2\vr     \}$. According to \cite[Eq.(2)]{vershynin2009role}, there exists an $\epsilon_1$-net  $\ol\setS_{2s} = \{ \calS^{(h)}:    \|\calS^{(h)}\|_F\leq 2S, \text{supp}(\calS^{(h)}) \leq 2s     \}\subset \wt\setS_{2s}$ with covering number $|\ol\setS_{2s}| \leq (\frac{2CSd_1d_2d_3}{s\epsilon_1})^{2s} $ such that $\|\calS - \calS^{(h)} \|_F\leq \epsilon_1$. In addition, there also exists an $\epsilon_2$-net $\ol\setL_{2\vr} = \{ \calL^{(h)}:  \|\calL^{(h)}\|_F\leq 2L, \text{rank}(\calL^{(h)}) \leq 2\vr     \}\subset \wt\setL_{2\vr}$  with covering number $|\ol\setL_{2\vr}|\leq (\frac{8L+\epsilon_2}{\epsilon_2})^N$, where $N$ depends on the underlying low-rank tensor decomposition and will be introduced later, such that $\|\calL - \calL^{(h)} \|_F\leq \epsilon_2$. Without loss of generality, we let $( \calS^{(h)}, \calL^{(h)} )$ satisfying $\|\calS^{(h)} + \calL^{(h)}\|_F \leq \|\wh\calS + \wh\calL - \calS^\star - \calL^\star\|_F$. Consequently, $( \calS^{(h)}, \calL^{(h)} )\in \big\{\{ \calS^{(h)}:    \|\calS^{(h)}\|_F\leq 2S, \text{supp}(\calS^{(h)}) \leq 2s     \}\times \{ \calL^{(h)}:  \|\calL^{(h)}\|_F\leq 2L, \text{rank}(\calL^{(h)}) \leq 2\vr     \}\big\}\cap\{( \calS^{(h)}, \calL^{(h)} ): \|\calS^{(h)} + \calL^{(h)}\|_F \leq \|\wh\calS + \wh\calL - \calS^\star - \calL^\star\|_F,\calS\in \wt\setS_{2s}, \calL\in  \wt\setL_{2\vr}\}$, where the covering number of the intersection is at most the smaller of the covering numbers of the two constituent sets.

Then we can rewrite the right hand side of \eqref{minimum value and ground-truth value relationship1} as following:
\begin{eqnarray}
    \label{upper bound another form}
    &\!\!\!\!\!\!\!\!&\frac{2}{d_4}\<  \calE,  (\wh\calS + \wh\calL - \calS^\star - \calL^\star) \times_{3}^{1} \mA   \>\nonumber\\
     &\!\!\!\!\leq\!\!\!\!& \max_{\calS   \in\wt\setS_{2s}, \calL   \in\wt\setL_{2\vr} } \frac{2}{d_4}|\<  \calE,  (\calS + \calL) \times_{3}^{1} \mA   \>|\nonumber\\
    &\!\!\!\!\leq\!\!\!\!&\max_{\calS   \in\wt\setS_{2s}, \calL   \in\wt\setL_{2\vr} } \frac{2}{d_4}|\<  \calE,  (\calS - \calS^{(h)} + \calL - \calL^{(h)}) \times_{3}^{1} \mA   \>| +   \frac{2}{d_4}|\<  \calE,  (\calS^{(h)} + \calL^{(h)}) \times_{3}^{1} \mA   \>|\nonumber\\
    &\!\!\!\!\leq\!\!\!\!& \max_{\calS   \in\wt\setS_{2s}, \calL   \in\wt\setL_{2\vr} } \frac{2}{d_4} \bigg|\bigg\<  \calE,  (\epsilon_1 + \epsilon_2)\bigg(\frac{\calS - \calS^{(h)} + \calL - \calL^{(h)}}{\|\calS - \calS^{(h)}\|_F+ \|\calL - \calL^{(h)}\|_F}\bigg) \times_{3}^{1} \mA   \bigg\>\bigg|\nonumber\\
    &\!\!\!\!\!\!\!\!& +  \frac{2}{d_4} |\<  \calE,  (\calS^{(h)} + \calL^{(h)}) \times_{3}^{1} \mA   \>|.
\end{eqnarray}
Next, we observe that
\begin{eqnarray}
    \label{upper bound another form111}
    &\!\!\!\!\!\!\!\!&\frac{\calS - \calS^{(h)} + \calL - \calL^{(h)}}{\|\calS - \calS^{(h)}\|_F+ \|\calL - \calL^{(h)}\|_F}\nonumber\\
    &\!\!\!\!=\!\!\!\!& \begin{cases}
    \calA_1+ [\![\calB - \calB^{(h)}; \mU_1, \mU_2,\mU_3 ]\!] + \calA_2 + [\![\calB^{(h)}; \mU_1 - \mU_1^{(h)}, \mU_2,\mU_3 ]\!]\\
    + \calA_3 + [\![\calB^{(h)}; \mU_1^{(h)}, \mU_2 - \mU_2^{(h)},\mU_3 ]\!]+ \calA_4 + [\![\calB^{(h)};  \mU_1^{(h)}, \mU_2^{(h)}, \mU_3 - \mU_3^{(h)} ]\!], & \text{Tucker}, \\
     \calC_1 +[\mX_1 - \mX_1^{(h)},\mX_2,\mX_3] +  \calC_2+[\mX_1^{(h)},\mX_2 - \mX_2^{(h)},\mX_3]\\
     +  \calC_3+[\mX_1^{(h)},\mX_2^{(h)},\mX_3 - \mX_3^{(h)}], & \text{Tensor train},\\
     \calC_1 + [\calU - \calU^{(h)}, \calD, \calV] +  \calC_2+ [ \calU^{(h)}, \calD - \calD^{(h)}, \calV] +  \calC_3+ [\calU^{(h)}, \calD^{(h)}, \calV- \calV^{(h)}] , & \text{Tubal}.\\
    \end{cases}
\end{eqnarray}
Here, $\calA_i, i\in[4]$ belong to the set $\{ \calS: \|\calS\|_F\leq 1, \text{supp}(\calS) \leq s     \}$, while $\calC_i, i\in[3]$ belong to $\{ \calS: \|\calS\|_F\leq 1, \text{supp}(\calS) \leq \frac{4s}{3}     \}$. Moreover, the normalized component $\frac{\calL - \calL^{(h)}}{\|\calS - \calS^{(h)}\|_F+ \|\calL - \calL^{(h)}\|_F}$ can be decomposed into: (i) four Tucker decompositions with multilinear rank $(2r_1^\text{tk},2r_2^\text{tk},2r_3^\text{tk})$, (ii) three TT decompositions with TT-rank  $(2r_1^{\text{tt}},2r_2^{\text{tt}})$, and (iii) three Tubal decompositions with tubal rank $2r_{\text{tubal}}$, with each component also satisfying Frobenius norm bounded by $1$.

Taking $\epsilon_1  = \epsilon_2 = \frac{\min\{L,S\}}{16}$ gives
\begin{eqnarray}
    \label{upper bound another form pre}
    &\!\!\!\!\!\!\!\!&\max_{\calS   \in\wt\setS_{2s}, \calL   \in\wt\setL_{2\vr} } \frac{2}{d_4} \bigg|\bigg\<  \calE,  (\epsilon_1 + \epsilon_2)\bigg(\frac{\calS - \calS^{(h)} + \calL - \calL^{(h)}}{\|\calS - \calS^{(h)}\|_F+ \|\calL - \calL^{(h)}\|_F}\bigg) \times_{3}^{1} \mA   \bigg\>\bigg|\nonumber\\
    &\!\!\!\!\leq\!\!\!\!& \max_{\calS   \in\wt\setS_{2s}, \calL   \in\wt\setL_{2\vr} } \frac{1}{d_4}|\<  \calE,  (\calS + \calL) \times_{3}^{1} \mA   \>|.
\end{eqnarray}
Now, we have
\begin{eqnarray}
    \label{upper bound another form1}
    \max_{\calS   \in\wt\setS_{2s}, \calL   \in\wt\setL_{2\vr} } \frac{1}{d_4}|\<  \calE,  (\calS + \calL) \times_{3}^{1} \mA   \>| \leq  \frac{2}{d_4} |\<  \calE,  (\calS^{(h)} + \calL^{(h)}) \times_{3}^{1} \mA   \> |.
\end{eqnarray}

Note that each element in $\calE$ follows the normal distribution $\calN(0,\sigma_e^2)$. When conditional on $\mA$, for any fixed $\calS^{(h)} + \calL^{(h)}$, $\frac{1}{d_4}\<  \calE,  (\calS^{(h)} + \calL^{(h)}) \times_{3}^{1} \mA   \>$ has normal distribution with zero mean and variance $\frac{\sigma_e^2\|(\calS^{(h)} + \calL^{(h)}) \times_{3}^{1} \mA\|_F^2 }{d_4^2}$, which implies that
\begin{eqnarray}
    \label{concentration inequality of Gaussian distribution}
    \P{\frac{1}{d_4}|\<  \calE,  (\calS^{(h)} + \calL^{(h)}) \times_{3}^{1} \mA   \>|\geq t | \mA  }\leq e^{-\frac{d_4^2 t^2}{2 \sigma_e^2\|(\calS^{(h)} + \calL^{(h)}) \times_{3}^{1} \mA\|_F^2 }}.
\end{eqnarray}

Furthermore, under the event $F:=\{\mA$   satisfies $d_3$-RIP with  constant  $\delta_{d_3}$ $\}$, it holds that  $\frac{1}{d_4}\|(\calS^{(h)} + \calL^{(h)}) \times_{3}^{1} \mA\|_F^2\leq(1+\delta_{d_3})\|\calS^{(h)} + \calL^{(h)}\|_F^2\leq (1+\delta_{d_3})\|\wh\calS + \wh\calL - \calS^\star - \calL^\star\|_F^2$. Then we can obtain
\begin{eqnarray}
    \label{the tail function of fixed gaussian random variable1}
    \P{\frac{1}{d_4}|\<  \calE,  (\calS^{(h)} + \calL^{(h)}) \times_{3}^{1} \mA   \>|\geq t | F}\leq e^{-\frac{d_4 t^2}{2(1+\delta_{d_3})\|\wh\calS + \wh\calL - \calS^\star - \calL^\star\|_F^2\sigma_e^2}}.
\end{eqnarray}

We now apply this tail bound to $\max_{\calS   \in\wt\setS_{2s}, \calL   \in\wt\setL_{2\vr} } \frac{1}{d_4}|\<  \calE,  (\calS + \calL) \times_{3}^{1} \mA   \>|$ and get
\begin{eqnarray}
    \label{the tail function of fixed gaussian random variable 2}
    &\!\!\!\!\!\!\!\!&\hspace{-0.1cm}\P{\max_{\calS   \in\wt\setS_{2s}, \calL   \in\wt\setL_{2\vr} } \frac{1}{d_4}|\<  \calE,  (\calS + \calL) \times_{3}^{1} \mA   \>|  \geq t | F}\nonumber\\
    &\!\!\!\!\leq\!\!\!\!& \hspace{-0.1cm}\P{ \frac{1}{d_4} |\<  \calE,  (\calS^{(h)} + \calL^{(h)}) \times_{3}^{1} \mA   \> | \geq \frac{t}{2} | F}\nonumber\\
    &\!\!\!\!\leq\!\!\!\!&\hspace{-0.1cm} \bigg(\frac{32CSd_1d_2d_3}{\min\{L,S\} s}\bigg)^{s} \times \bigg(\frac{128L + \min\{L,S\}}{\min\{L,S\}}\bigg)^N e^{-\frac{d_4 t^2}{2(1+\delta_{d_3})\|\wh\calS + \wh\calL - \calS^\star - \calL^\star\|_F^2\sigma_e^2}}\nonumber\\
    &\!\!\!\!\leq\!\!\!\!&\hspace{-0.1cm} e^{-\frac{d_4 t^2}{2(1+\delta_{d_3})\|\wh\calS + \wh\calL - \calS^\star - \calL^\star\|_F^2\sigma_e^2} +c_1\big(s\log(\frac{Sd_1d_2d_3}{\min\{L,S\} s}) + N \log(\frac{ L + \min\{L,S\}}{\min\{L,S\}} ) \big)},
\end{eqnarray}
where $c_1$ is a constant.

Hence, we can take $t = \frac{c_2 \sqrt{(1+\delta_{d_3})\big(s\log(\frac{Sd_1d_2d_3}{\min\{L,S\} s}) + N \log(\frac{ L + \min\{L,S\}}{\min\{L,S\}} )\big)}}{\sqrt{d_4}}\|\wh\calS + \wh\calL - \calS^\star - \calL^\star\|_F\sigma_e$ with a constant $c_2$ and further derive
{\small \begin{eqnarray}
    \label{the tail function of fixed gaussian random variable 3}
    &\!\!\!\!\!\!\!\!&\hspace{-0.5cm}\P{\frac{1}{d_4}|\<  \calE,  (\calS + \calL) \times_{3}^{1} \mA   \>| \leq t} \nonumber\\
    &\!\!\!\!\!\!\!\!&\hspace{-0.5cm}\geq \P{\frac{1}{d_4}|\<  \calE,  (\calS + \calL) \times_{3}^{1} \mA   \>| \leq t \cap F } \nonumber\\
    &\!\!\!\!\!\!\!\!&\hspace{-0.5cm}\geq P(F) \P{\frac{1}{d_4}|\<  \calE,  (\calS + \calL) \times_{3}^{1} \mA   \>| \leq t |F }\nonumber\\
    &\!\!\!\!\!\!\!\!&\hspace{-0.5cm}\geq (1- e^{-c_3 d_4})(1-e^{-c_4\big(s\log(\frac{Sd_1d_2d_3}{\min\{L,S\} s}) + N \log(\frac{ L + \min\{L,S\}}{\min\{L,S\}} ) \big)})\nonumber\\
    &\!\!\!\!\!\!\!\!&\hspace{-0.5cm}\geq  1- e^{-c_3 d_4}  - e^{-c_4\big(s\log(\frac{Sd_1d_2d_3}{\min\{L,S\} s}) + N \log(\frac{ L + \min\{L,S\}}{\min\{L,S\}} ) \big)},
\end{eqnarray}}
where $c_i,i=3,4$ are constants. Note that $P(F)$ is obtained via \Cref{RIP condition fro the ToM regression}.

Using \eqref{lower bound of the left term}, we can obtain
\begin{eqnarray}
    \label{upper bound of error final}
    \|\wh\calS + \wh\calL - \calS^\star - \calL^\star\|_F\leq O\bigg( \frac{ \sqrt{(1+\delta_{d_3})\big(s\log(\frac{Sd_1d_2d_3}{\min\{L,S\} s}) + N \log(\frac{ L + \min\{L,S\}}{\min\{L,S\}} )\big)}}{\sqrt{(1-\delta_{d_3})^2d_4}}\sigma_e \bigg).
\end{eqnarray}

Finally, we specialize this bound to the two fundamental structural priors considered in this work.
\paragraph{Sparse tensors:}
By setting $\calL^\star = \wh\calL = {\bm 0}$ in \eqref{minimum value and ground-truth value relationship1}, we obtain
\begin{eqnarray}
    \label{minimum value and ground-truth value relationship1 sparse}
    \frac{1}{d_4}\|(\wh\calS - \calS^\star) \times_{3}^{1} \mA   \|_F^2\leq \frac{2}{d_4}\<  \calE,  (\wh\calS - \calS^\star ) \times_{3}^{1} \mA   \>.
\end{eqnarray}

Furthermore, invoking \Cref{RIP condition fro the ToM regression}, we establish the lower bound
\begin{eqnarray}
    \label{lower bound of the left term sparse}
    \frac{1}{d_4}\|(\wh\calS  - \calS^\star ) \times_{3}^{1} \mA   \|_F^2\geq (1-\delta_{d_3})\|\wh\calS  - \calS^\star \|_F^2.
\end{eqnarray}

For the right-hand side of \eqref{minimum value and ground-truth value relationship1 sparse}, we can further bound it as
\begin{eqnarray}
    \label{upper bound of right hand side for sparse}
    \frac{2}{d_4}\<  \calE,  (\wh\calS - \calS^\star ) \times_{3}^{1} \mA   \> \leq \max_{\calS\in\wh\setS}\frac{2\|\wh\calS - \calS^\star \|_F}{d_4}\<  \calE,  \calS \times_{3}^{1} \mA   \>,
\end{eqnarray}
where $\wh\setS_{2s} = \{ \calS:   \text{supp}(\calS) \leq 2s, \|\calS\|_F=1     \}$. Moreover, there exists an $\epsilon$-net  $\ol{\wh \setS}_{2s} = \{ \calS^{(h)}:    \text{supp}(\calS^{(h)}) \leq 2s, \|\calS^{(h)}\|_F=1     \}\subset \wh\setS_{2s}$ with covering number $|\ol{\wh \setS}_{2s}| \leq (\frac{2Cd_1d_2d_3}{s\epsilon})^{2s} $ such that $\|\calS - \calS^{(h)} \|_F\leq \epsilon$. Hence, for $\calS^{(h)}\in\ol{\wh \setS}_{2s}$, we can further derive
\begin{eqnarray}
    \label{upper bound of right hand side for sparse1}
    \max_{\calS\in\wh\setS}\<  \calE,  \calS \times_{3}^{1} \mA   \>&\!\!\!\!\leq\!\!\!\!&  \<  \calE,  \calS^{(h)} \times_{3}^{1} \mA   \> + \max_{\calS\in\wh\setS}\<  \calE,  (\calS - \calS^{(h)}) \times_{3}^{1} \mA   \>\nonumber\\
    &\!\!\!\!\leq\!\!\!\!&\<  \calE,  \calS^{(h)} \times_{3}^{1} \mA   \> + \max_{\calS\in\wh\setS} \epsilon\<  \calE,  \frac{\calS - \calS^{(h)}}{\|\calS - \calS^{(h)}\|_F} \times_{3}^{1} \mA   \>\nonumber\\
    &\!\!\!\!\leq\!\!\!\!& \<  \calE,  \calS^{(h)} \times_{3}^{1} \mA   \> + \max_{\calA\in\wh\setS} 2\epsilon\<  \calE,  \calA \times_{3}^{1} \mA   \>.
\end{eqnarray}
Taking $\epsilon = \frac{1}{4}$ gives
\begin{eqnarray}
    \label{upper bound of right hand side for sparse2}
    \max_{\calS\in\wh\setS}\<  \calE,  \calS \times_{3}^{1} \mA   \> \leq 2\<  \calE,  \calS^{(h)} \times_{3}^{1} \mA   \>.
\end{eqnarray}
Following the same analysis of \eqref{upper bound of error final}, we can obtain
\begin{eqnarray}
    \label{upper bound of error final main paper sparse appendix}
    \|\wh\calS  - \calS^\star \|_F\leq O\bigg( \frac{ \sqrt{(1+\delta_{d_3})\big(s\log(\frac{d_1d_2d_3}{ s})\big)}}{\sqrt{(1-\delta_{d_3})^2d_4}}\sigma_e \bigg).
\end{eqnarray}

\paragraph{Low-rank tensors:} By applying an analogous argument as in the case of sparse tensors, the proof is thus concluded.

\end{proof}

\section{Auxiliary Material}
\label{sec: Auxiliary Material}

\begin{lemma}(Covering number of Tucker decomposition, \cite[Lemma 15]{luo2024tensor})
\label{covering number of Tucker decomposition}
Consider the set $\setX_{\text{tk}} = \{\calX\in\R^{d_1\times d_2\times d_3}: \calX= [\![\calB; \mU_1, \mU_2,\mU_3 ]\!],  \mU_1\in\R^{d_1\times r_1^\text{tk}}, \mU_2\in\R^{d_2\times r_2^\text{tk}}, \mU_3\in\R^{d_3\times r_3^\text{tk}}, \calB\in\R^{r_1^\text{tk}\times r_2^\text{tk}\times  r_3^\text{tk}}, \|\calX\|_F\leq L  \}$. There exists an $\epsilon$-net $\wt\setX_{\text{tk}}$ for $\setX_{\text{tk}}$ under the Frobenius norm, i.e., $\|\calX - \calX^{(h)}\|_F\leq \epsilon$ for any $\calX^{(h)}\in\wt\setX_{\text{tk}}$. Moreover, the covering number is bounded as
\begin{eqnarray}
    \label{covering number in the Tucker decomposition}
    O\bigg(\frac{12L }{\epsilon}\bigg)^{r_1^\text{tk}r_2^\text{tk}r_3^\text{tk} + (d_1-r_1^\text{tk})r_1^\text{tk} + (d_2-r_2^\text{tk})r_2^\text{tk} + (d_3-r_3^\text{tk})r_3^\text{tk}}.
\end{eqnarray}
\end{lemma}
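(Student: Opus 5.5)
The plan is to build the net directly from the Tucker parametrization $\calX=[\![\calB;\mU_1,\mU_2,\mU_3]\!]$. Each factor set is covered separately, and the pieces are combined through a telescoping perturbation bound. Without loss of generality the factors are column-orthonormal: take the HOSVD of any element of $\setX_{\text{tk}}$. Then $\|\calB\|_F=\|\calX\|_F\le L$. The parameter space is therefore the product of the ball $\{\calB:\|\calB\|_F\le L\}\subset\R^{r_1^\text{tk}r_2^\text{tk}r_3^\text{tk}}$ with three sets of orthonormal frames $\{\mU_k\in\R^{d_k\times r_k^\text{tk}}:\mU_k^\top\mU_k=\mId\}$.

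\textbf{Step 1.} Cover each piece at scale $\epsilon/4$ in the right norm. For the core, the standard volumetric bound gives an $\epsilon/4$-net of the Frobenius ball of radius $L$ of size at most $(1+8L/\epsilon)^{r_1r_2r_3}\le(12L/\epsilon)^{r_1r_2r_3}$ when $\epsilon\le L$. For each frame we need closeness in a norm compatible with the multilinear product. The simplest choice is the column-wise $\ell_{2,\infty}$ / spectral norm with tolerance $\epsilon/(4L)$. The frames lie in a set of spectral norm $1$.

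\textbf{Step 2.} Telescope. For a net point $(\calB',\mU_1',\mU_2',\mU_3')$ write
\[
\calX-\calX'=[\![\calB-\calB';\mU_1,\mU_2,\mU_3]\!]+[\![\calB';\mU_1-\mU_1',\mU_2,\mU_3]\!]+[\![\calB';\mU_1',\mU_2-\mU_2',\mU_3]\!]+[\![\calB';\mU_1',\mU_2',\mU_3-\mU_3']\!].
\]
The decomposition mirrors the one in \eqref{upper bound another form111}. Orthonormality gives a first term of norm $\|\calB-\calB'\|_F$. Each remaining term is bounded by $\|\mU_k-\mU_k'\|\,\|\calB'\|_F\le (\epsilon/(4L))\cdot L$, using $\|[\![\calB;\mA_1,\mA_2,\mA_3]\!]\|_F\le\|\calB\|_F\prod\|\mA_k\|$. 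The total is at most $\epsilon$, as required.

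\textbf{Step 3, the main obstacle: the exponent.} A naive cover of $\mU_k$ in the full space of $d_k\times r_k$ matrices yields exponent $d_kr_k$, not $(d_k-r_k)r_k$. To get the sharper count I would quotient out the gauge freedom. The pair $(\calB,\mU_k)$ is invariant under $\mU_k\mapsto\mU_k\mQ$, $\calB\mapsto\calB\times_k\mQ^\top$ for orthogonal $\mQ$, so only the column space of $\mU_k$ matters. That column space lives on the Grassmannian $G(d_k,r_k)$, which has dimension $(d_k-r_k)r_k$. I would first try to cover the Grassmannian in the projection-Frobenius metric $\|\mU\mU^\top-\mU'\mU'^\top\|$. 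Szarek's bound gives a net of size $(C/\eta)^{(d_k-r_k)r_k}$ at scale $\eta$. For each net subspace I would fix a representative basis and absorb the rotation into the core, which is already covered at full dimension. Equivalently, the term $[\![\calB';\mU_1-\mU_1',\dots]\!]$ is replaced by $[\![\calB;(\mId-\mU_1'\mU_1'^\top)\mU_1,\dots]\!]$-type terms controlled by the subspace distance.

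Multiplying the counts, $(12L/\epsilon)^{r_1r_2r_3}\prod_k(C L/\epsilon)^{(d_k-r_k)r_k}$, gives the claimed bound up to the constant absorbed in $O(\cdot)$. The delicate point is making the basis-alignment step rigorous so that aligning representatives does not cost a further factor in the core net. Should that fail, I would accept exponent $\sum_k d_kr_k$. This changes $N$ only by lower-order terms in the downstream theorems.
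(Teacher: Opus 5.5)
The paper gives no proof of this lemma; it is imported from Luo--Zhang. So the only question is whether your argument establishes the statement.

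Your Steps 1--2 are correct once you commit to the spectral norm for the frames. The inequality $\|[\![\calB';\mA_1,\mA_2,\mA_3]\!]\|_F\le\|\calB'\|_F\prod_k\|\mA_k\|_2$ is a spectral-norm fact. With the max-column norm it needs an all-orthogonal core, which the HOSVD core $\calB$ has but a net point $\calB'$ does not. However, Steps 1--2 are just the classical Rauhut--Schneider--Stojanac count. They give $(12L/\epsilon)^{r_1r_2r_3+\sum_k d_kr_k}$, and that is where the constant $12$ comes from.

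The lemma claims the exponent $r_1r_2r_3+\sum_k(d_k-r_k)r_k$. For that you offer only a sketch plus a fallback to $\sum_k d_kr_k$, so as written the stated bound is not proved. The fallback would be harmless downstream: for exact multilinear rank, $r_k\le r_ir_j$ for $\{i,j,k\}=\{1,2,3\}$, so $\sum_k r_k^2\le 3r_1r_2r_3$ and the exponent grows by at most a factor $4$. But it is a weaker statement than the lemma.

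The missing observation is that no basis alignment is needed, because the core net covers the whole ball. Fix one $\epsilon/4$-net of $\{\calB:\|\calB\|_F\le L\}\subset\R^{r_1\times r_2\times r_3}$. For each $k$, take a Szarek net of $G(d_k,r_k)$ at scale $\epsilon/(4L)$ in the \emph{operator-norm} metric $\|\mU\mU^\top-\mtx{W}\mtx{W}^\top\|_2$, of size $(4CL/\epsilon)^{(d_k-r_k)r_k}$, with one fixed orthonormal representative per net subspace. Use the operator norm rather than the Frobenius metric you mention: in the Frobenius metric the base picks up a factor $\sqrt{\min(r_k,d_k-r_k)}$, and the operator norm is all the estimate below needs.

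Given $\calX=[\![\calB;\mU_1,\mU_2,\mU_3]\!]$ with orthonormal $\mU_k$, pick the nearest representatives $\mtx{W}_k$ and project:
\[
\calX\times_1\mtx{W}_1\mtx{W}_1^\top\times_2\mtx{W}_2\mtx{W}_2^\top\times_3\mtx{W}_3\mtx{W}_3^\top=[\![\wt\calB;\mtx{W}_1,\mtx{W}_2,\mtx{W}_3]\!],\qquad \wt\calB=\calB\times_1\mtx{W}_1^\top\mU_1\times_2\mtx{W}_2^\top\mU_2\times_3\mtx{W}_3^\top\mU_3 .
\]
Each $\mtx{W}_k^\top\mU_k$ is a contraction, so $\|\wt\calB\|_F\le L$. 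Hence $\wt\calB$ is within $\epsilon/4$ of some $\calB^{(h)}$ in the same fixed core net, whatever representatives were chosen; this is why there is no extra factor.

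Next, $(\mId-\mtx{W}_k\mtx{W}_k^\top)\mU_k\mU_k^\top=(\mU_k\mU_k^\top-\mtx{W}_k\mtx{W}_k^\top)\mU_k\mU_k^\top$, and the mode-$k$ unfolding of $\calX$ has columns in $\mathrm{span}(\mU_k)$. Telescoping over the three modes then gives
\[
\big\|\calX-[\![\calB^{(h)};\mtx{W}_1,\mtx{W}_2,\mtx{W}_3]\!]\big\|_F\le\sum_{k=1}^{3}\|\mU_k\mU_k^\top-\mtx{W}_k\mtx{W}_k^\top\|_2\,\|\calX\|_F+\|\wt\calB-\calB^{(h)}\|_F\le\frac{3\epsilon}{4}+\frac{\epsilon}{4}=\epsilon .
\]
The net therefore has at most $(1+8L/\epsilon)^{r_1r_2r_3}\prod_k(4CL/\epsilon)^{(d_k-r_k)r_k}$ elements. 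Szarek's constant $C$ is universal but not $12$, so this gives $(C'L/\epsilon)^{N}$. The $O(\cdot)$ in the statement must be read as acting on the base; the literal $12$ is what the weaker $\sum_k d_kr_k$ count produces.
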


\begin{lemma}(Covering number of TT decomposition, \cite[Lemma 3]{qin2025enhancing})
\label{covering number of TT decomposition}
Consider the set $\setX_{\text{tt}} = \{\calX\in\R^{d_1\times d_2\times d_3}: \calX = [\mX_1, \mX_2, \mX_3],  \mX_1\in\R^{d_1\times r_1^{\text{tt}}}, \mX_2\in\R^{r_1^{\text{tt}} \times d_2\times r_2^{\text{tt}}}, \mX_3\in\R^{r_2^{\text{tt}} \times d_3}, \|\calX\|_F\leq L  \}$. There exists an $\epsilon$-net $\wt\setX_{\text{tt}}$ for $\setX_{\text{tt}}$ under the Frobenius norm, i.e.,   $\|\calX - \calX^{(h)}\|_F\leq \epsilon$ for any $\calX^{(h)}\in\wt\setX_{\text{tt}}$. Moreover, the covering number is bounded as
\begin{eqnarray}
    \label{covering number in the TT decomposition}
    \bigg(\frac{12L + \epsilon}{\epsilon}\bigg)^{d_1r_1^{\text{tt}}+d_2r_1^{\text{tt}}r_2^{\text{tt}}+d_3r_2^{\text{tt}}}.
\end{eqnarray}
\end{lemma}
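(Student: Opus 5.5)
The plan is a parameter-counting (volumetric) net built on a left-orthogonal, i.e.\ canonical, TT representation. Every $\calX\in\setX_{\text{tt}}$ admits, via TT-SVD, a representation $\calX=[\mX_1,\mX_2,\mX_3]$ with three properties: $\mX_1\in\R^{d_1\times r_1^{\text{tt}}}$ has orthonormal columns; the left unfolding $L(\mX_2)\in\R^{r_1^{\text{tt}}d_2\times r_2^{\text{tt}}}$ has orthonormal columns; and all of the norm sits in the last core, so that $\|\mX_3\|_F=\|\calX\|_F\le L$. It therefore suffices to cover three parameter sets:
\begin{itemize}
\item $\mathcal{B}_1=\{\mX_1:\|\mX_1\|_2\le 1\}$;
\item $\mathcal{B}_2=\{\mX_2:\|L(\mX_2)\|_2\le 1\}$;
\item $\mathcal{B}_3=\{\mX_3:\|\mX_3\|_F\le L\}$.
\end{itemize}
Here $\|\cdot\|_2$ is the spectral norm. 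We relax orthonormality to a spectral-norm bound so that the net points can be taken inside the same (convex) sets.

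The key analytic step is a multilinear stability bound for the TT contraction:
\[
\|[\mX_1,\mX_2,\mX_3]\|_F\le \|\mX_1\|_2\,\|L(\mX_2)\|_2\,\|\mX_3\|_F .
\]
This follows because $\text{reshape}(\calX,[d_1d_2,d_3])=(\mX_1\otimes \mId_{d_2})L(\mX_2)\mX_3$, after reordering indices as in the paper's vectorization convention. Combined with the telescoping decomposition
\[
\calX-\calX^{(h)}=[\mX_1-\mX_1^{(h)},\mX_2,\mX_3]+[\mX_1^{(h)},\mX_2-\mX_2^{(h)},\mX_3]+[\mX_1^{(h)},\mX_2^{(h)},\mX_3-\mX_3^{(h)}],
\]
it gives
\[
\|\calX-\calX^{(h)}\|_F\le L\|\mX_1-\mX_1^{(h)}\|_2+L\|L(\mX_2)-L(\mX_2^{(h)})\|_2+\|\mX_3-\mX_3^{(h)}\|_F .
\]
This uses only that the net points $\mX_1^{(h)},\mX_2^{(h)}$ also have spectral norm at most one, which is why they must be chosen inside $\mathcal{B}_1,\mathcal{B}_2$. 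This is exactly the same decomposition the paper already uses in \eqref{upper bound another form111} for the TT case.

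Next I would invoke the standard volumetric bound: the unit ball of any norm on $\R^p$ has an internal $\eta$-net, in that norm, of cardinality at most $(1+2/\eta)^p$. I would apply it three times:
\begin{itemize}
\item to $\mathcal{B}_1$ with $\eta=\epsilon/(3L)$ in spectral norm, where $p=d_1r_1^{\text{tt}}$;
\item to $\mathcal{B}_2$ with $\eta=\epsilon/(3L)$ in spectral norm of the unfolding, where $p=d_2r_1^{\text{tt}}r_2^{\text{tt}}$;
\item to $\mathcal{B}_3$ (rescaled by $L$) with $\eta=\epsilon/3$ in Frobenius norm, where $p=d_3r_2^{\text{tt}}$.
\end{itemize}
Each resulting factor is at most $(1+6L/\epsilon)^p\le\big(\tfrac{12L+\epsilon}{\epsilon}\big)^p$. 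The product net $\{[\mX_1^{(h)},\mX_2^{(h)},\mX_3^{(h)}]\}$ is then an $\epsilon$-net for $\setX_{\text{tt}}$, of size at most $\big(\tfrac{12L+\epsilon}{\epsilon}\big)^{d_1r_1^{\text{tt}}+d_2r_1^{\text{tt}}r_2^{\text{tt}}+d_3r_2^{\text{tt}}}$. If one insists that net points lie in $\setX_{\text{tt}}$ itself, note that their Frobenius norms are automatically at most $L$ by the product bound; otherwise the standard projection step costs a factor of $2$ in $\epsilon$, which is absorbed by the slack between $6L$ and $12L$.

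I expect the main obstacle to be the norm bookkeeping in the product bound. One has to verify carefully that the spectral norm of the left unfolding, rather than its Frobenius norm, is what controls the middle term. Using the Frobenius norm would introduce a spurious $\sqrt{r}$ factor and break the clean constant $12$. One also has to check that the canonical gauge really loads all of the norm onto $\mX_3$. Both points reduce to the identity $\text{reshape}(\calX,[d_1d_2,d_3])=(\mX_1\otimes\mId_{d_2})L(\mX_2)\mX_3$ together with the fact that $\|\mX_1\otimes\mId_{d_2}\|_2=\|\mX_1\|_2$. I would write this identity out first, before assembling the nets.
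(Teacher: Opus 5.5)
Your proof is correct and follows essentially the same template as the paper. The paper itself only cites this lemma, but its proof of the tubal covering lemma uses the same steps: a canonical gauge that loads all of the norm onto one factor, separate nets at accuracies $\epsilon/(3L)$, $\epsilon/(3L)$, $\epsilon/3$, the three-term telescoping, and a product of net sizes. In your case, left-orthogonalization puts the norm on $\mX_3$, and zero-padding the TT-SVD cores covers prescribed ranks larger than the actual ones; this is harmless since you only use $\|\mX_1\|_2\le 1$ and $\|L(\mX_2)\|_2\le 1$.

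Your spectral-norm bookkeeping, $\|(\mX_1\otimes\mId_{d_2})L(\mX_2)\mX_3\|_F\le\|\mX_1\|_2\|L(\mX_2)\|_2\|\mX_3\|_F$, is what keeps the base free of rank factors. Measuring the isometric factors in Frobenius norm, as the tubal proof writes it, would give them norm $\sqrt{r}$ rather than $1$. Your bookkeeping also gives the sharper base $(6L+\epsilon)/\epsilon$, and the net points already lie in $\setX_{\text{tt}}$, so no projection step is needed.
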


\begin{lemma}(Covering number of Tubal decomposition)\
\label{covering number of Tubal decomposition} Consider the set $\setX_{\text{tubal}} = \{\calX\in\R^{d_1\times d_2\times d_3}: \calX = [\calU, \calD, \calV],  \calU \in\R^{d_1\times r_{\text{tubal}}\times d_3}, \calV \in\R^{d_2\times r_{\text{tubal}}\times d_3}, \calD \in\R^{r_{\text{tubal}}\times r_{\text{tubal}}\times d_3}, \|\calX\|_F\leq L  \}$. There exists an $\epsilon$-net $\wt\setX_{\text{tubal}}$ for $\setX_{\text{tubal}}$ under the Frobenius norm, i.e.,   $\|\calX - \calX^{(h)}\|_F\leq \epsilon$ for any $\calX^{(h)}\in\wt\setX_{\text{tubal}}$. Moreover, the covering number is bounded as
\begin{eqnarray}
    \label{covering number in the Tubal decomposition}
    \bigg(\frac{12L+\epsilon}{\epsilon}\bigg)^{d_1d_3r_{\text{tubal}} + d_3r_{\text{tubal}} + d_2d_3r_{\text{tubal}} }.
\end{eqnarray}
\end{lemma}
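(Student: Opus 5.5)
The plan is to reduce the tubal covering bound to a parameter-counting argument on the three factor tensors, in the same spirit as the Tucker and TT covering lemmas (\Cref{covering number of Tucker decomposition,covering number of TT decomposition}). In the transformed domain, $\ol\calL(:,:,s_3)=\ol\calU(:,:,s_3)\ol\calD(:,:,s_3)\ol\calV^\top(:,:,s_3)$ for each $s_3$. Since the normalized FFT along mode three is unitary, it preserves Frobenius norms. Hence it suffices to build nets for the factor tensors with $d_1d_3r_{\text{tubal}}$, $d_3r_{\text{tubal}}$ (the diagonal of $\ol\calD$), and $d_2d_3r_{\text{tubal}}$ free parameters respectively. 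Their total is exactly the exponent in \eqref{covering number in the Tubal decomposition}.

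\textbf{Step 1: normalize the factors.} Each slice of $\ol\calU$ and $\ol\calV$ has orthonormal columns, so every column of every slice lies in a unit ball. Moreover $\sum_{s_3}\|\ol\calD(:,:,s_3)\|_F^2$ equals $\|\ol\calL\|_F^2$, which is at most $d_3 L^2$ in the unnormalized FFT, or $L^2$ after rescaling by $1/\sqrt{d_3}$. Using the unitary-normalized transform, the diagonal entries of $\ol\calD$ thus form a vector of norm at most $L$ in $\R^{d_3r_{\text{tubal}}}$. The complex FFT entries are handled either by conjugate symmetry, which leaves roughly half the slices free with real and imaginary parts, or by directly counting real parameters. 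Either way the count stays within a constant factor of the stated exponent, and the constant is absorbed into the base $12L+\epsilon$ or into $O(\cdot)$.

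\textbf{Step 2: build the nets and bound the error.} Using the standard volumetric bound $(1+2R/\epsilon')^{m}$ for a ball of radius $R$ in $\R^m$, cover the $\ol\calD$ ball at scale $\epsilon/3$. Cover each slice of $\ol\calU$ and $\ol\calV$ column-wise at scale $\epsilon/(3L)$ in the operator/Frobenius sense, following the matrix argument of Cand\`es--Plan used in \cite{luo2024tensor}. Then telescope:
\[
\ol\calU\ol\calD\ol\calV^\top-\ol\calU^{(h)}\ol\calD^{(h)}\ol\calV^{(h)\top}
=(\ol\calU-\ol\calU^{(h)})\ol\calD\ol\calV^\top+\ol\calU^{(h)}(\ol\calD-\ol\calD^{(h)})\ol\calV^\top+\ol\calU^{(h)}\ol\calD^{(h)}(\ol\calV-\ol\calV^{(h)})^\top .
\]
This is applied slice-wise, with each term bounded by $\epsilon/3$ by the orthonormality and the norm bound on $\ol\calD$. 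Summing the slices in squares and applying the inverse FFT gives $\|\calX-\calX^{(h)}\|_F\le\epsilon$. Multiplying the three net cardinalities gives the product $((12L+\epsilon)/\epsilon)^{N}$ after adjusting constants.

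\textbf{Main obstacle.} The delicate point is that the operator-norm control of $\ol\calU-\ol\calU^{(h)}$ must hold uniformly across all slices while keeping the per-slice errors square-summable to $\epsilon$. My first attempt would bound each slice in the $\|\cdot\|_{1,2}$ (max column norm) sense as in the Tucker proof. I would then use $\|\mU\mD\|_F\le\|\mU\|_{1,2}\|\mD\|_F$ for diagonal $\mD$ and sum over $s_3$ with $\sum_{s_3}\|\ol\calD(:,:,s_3)\|_F^2\le L^2$. This avoids needing a separate $\sqrt{d_3}$ factor.
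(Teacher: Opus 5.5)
Your proposal is correct in substance and follows the paper's proof essentially step for step. The shared skeleton is: Parseval in the Fourier domain to get $\frac{1}{d_3}\sum_{s_3}\|\ol\calD(:,:,s_3)\|_F^2\le L^2$; slice-wise nets for $\ol\calU$, the diagonal core and $\ol\calV$ at scales $\epsilon/(3L)$, $\epsilon/3$, $\epsilon/(3L)$; the three-term telescoping on each slice; and the product of the three cardinalities.

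The one substantive difference is the metric on the orthonormal factors. The paper uses $\|(\ol\calU-\ol\calU^{(h)})\ol\calD\|_F\le\|\ol\calU-\ol\calU^{(h)}\|_F\|\ol\calD\|_F$ together with a claimed Frobenius net of size $((4+\epsilon_1)/\epsilon_1)^{d_1d_3r_{\text{tubal}}}$. You instead use the max-column norm and $\|\mU\mtx{D}\|_F\le\|\mU\|_{1,2}\|\mtx{D}\|_F$ for diagonal $\mtx{D}$. A count of that form is what the column-wise volumetric argument gives in $\|\cdot\|_{1,2}$, but not in the Frobenius norm, since an orthonormal $d_1\times r_{\text{tubal}}$ slice has Frobenius norm $\sqrt{r_{\text{tubal}}}$. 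So your version is the one that actually justifies the base $(12L+\epsilon)/\epsilon$. Take the nets internal (maximal separated subsets satisfy the same volumetric bound), because the third telescoping term needs $\ol\calU^{(h)}$ orthonormal and $\ol\calD^{(h)}$ norm-bounded.

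You also address the complex Fourier slices, which the paper silently ignores. Here you should commit to the conjugate-symmetry count: real factors on the self-conjugate slices, conjugate pairs elsewhere. This gives exactly $d_1d_3r_{\text{tubal}}$ and $d_2d_3r_{\text{tubal}}$ real parameters for $\ol\calU$ and $\ol\calV$. Drop the alternative of counting all $d_3$ complex slices as free, and drop the claim that the resulting constant can be absorbed. With $N=d_1d_3r_{\text{tubal}}+d_3r_{\text{tubal}}+d_2d_3r_{\text{tubal}}$, the bound $((12L+\epsilon)/\epsilon)^{cN}$ with $c>1$ is not dominated by $((CL+\epsilon)/\epsilon)^{N}$ for any fixed $C$ as $\epsilon\to 0$. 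A constant factor in the exponent therefore cannot be absorbed into the base of this non-asymptotic statement.
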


\begin{proof}
First, we can derive $\|\calX \|_F^2 = \frac{1}{d_3}\| \text{fft}(\calX, [\ ],3 )\|_F^2 = \frac{1}{d_3} \sum_{s_3=1}^{d_3} \|\ol\calU(:,:,s_3)\ol\calD(:,:,s_3)\ol\calV^\top(:,:,s_3)\|_F^2 = \frac{1}{d_3} \sum_{s_3=1}^{d_3} \|\ol\calD(:,:,s_3)\|_F^2 = \frac{1}{d_3}\|\ol\calD \|_F^2 = \| \calD \|_F^2\leq L^2$.

In addition, according to  \cite{zhang2018tensor}, we can construct $\epsilon_1$-net $\{\ol\calU^{(1)}(:,:,s_3), \dots, \ol\calU^{(N_1)}(:,:,s_3)  \}, s_3\in[d_3]$ with the covering number $N_1\leq (\frac{4+\epsilon_1}{\epsilon_1})^{d_1d_3r_{\text{tubal}}}$. Similarly, we can construct  $\epsilon_2$-net $\{\calD^{(1)}(:,:,s_3), \dots, \calD^{(N_2)}(:,:,s_3)  \}, s_3\in[d_3]$ with the covering number $N_2\leq (\frac{2 L +\epsilon_2}{\epsilon_2})^{d_3r_{\text{tubal}}}$ and  $\epsilon_3$-net $\{\ol\calV^{(1)}(:,:,s_3), \dots, \ol\calV^{(N_3)}(:,:,s_3)  \}, s_3\in[d_3]$ with the covering number $N_3\leq (\frac{4+\epsilon_3}{\epsilon_3})^{d_2d_3r_{\text{tubal}}}$.

Now we can  compute
\begin{eqnarray}
    \label{expansion of difference between X and Xh}
    \|\calX - \calX^{(h)}\|_F^2 &\!\!\!\! = \!\!\!\!& \frac{1}{d_3}\| \text{fft}(\calX, [\ ],3 ) - \text{fft}(\calX^{(h)}, [\ ],3 )  \|_F^2\nonumber\\
    &\!\!\!\! = \!\!\!\!& \frac{1}{d_3}\sum_{s_3=1}^{d_3}\| \ol\calU(:,:,s_3)\ol\calD(:,:,s_3)\ol\calV^\top(:,:,s_3) - \ol\calU^{(h)}(:,:,s_3)\ol\calD^{(h)}(:,:,s_3){\ol\calV^{(h)}}^\top(:,:,s_3)  \|_F^2\nonumber\\
    &\!\!\!\! \leq \!\!\!\!& \frac{3}{d_3}\sum_{s_3=1}^{d_3}\|(\ol\calU(:,:,s_3) - \ol\calU^{(h)}(:,:,s_3)   ) \ol\calD(:,:,s_3)  \|_F^2 + \frac{3}{d_3}\sum_{s_3=1}^{d_3}\| \ol\calD(:,:,s_3) - \ol\calD^{(h)}(:,:,s_3) \|_F^2\nonumber\\
    &\!\!\!\! \!\!\!\!& + \frac{3}{d_3}\sum_{s_3=1}^{d_3}\| \ol\calD^{(h)}(:,:,s_3)( \ol\calV (:,:,s_3) - \ol\calV^{(h)}(:,:,s_3)  )^\top   \|_F^2\nonumber\\
    &\!\!\!\! \leq \!\!\!\!& \frac{3}{d_3}\sum_{s_3=1}^{d_3}\|\ol\calU(:,:,s_3) - \ol\calU^{(h)}(:,:,s_3) \|_F^2\|\ol\calD(:,:,s_3)  \|_F^2 + 3\sum_{s_3=1}^{d_3}\| \calD(:,:,s_3) - \calD^{(h)}(:,:,s_3) \|_F^2\nonumber\\
    &\!\!\!\! \!\!\!\!& + \frac{3}{d_3}\sum_{s_3=1}^{d_3}\| \ol\calD^{(h)}(:,:,s_3)\|_F^2 \| \ol\calV (:,:,s_3) - \ol\calV^{(h)}(:,:,s_3)     \|_F^2\nonumber\\
    &\!\!\!\! \leq \!\!\!\!& 3L^2\epsilon_1^2 + 3\epsilon_2^2 + 3L^2\epsilon_3^2\nonumber\\
    &\!\!\!\! \leq \!\!\!\!&\epsilon^2,
\end{eqnarray}
where we take $\epsilon_1 = \epsilon_3 = \frac{\epsilon}{3L}$ and $\epsilon_2 = \frac{\epsilon}{3}$ in the last line.
Consequently, there exists an $\epsilon$-net $\wt\setX_{\text{tubal}}$ for $\setX_{\text{tubal}}$ under the Frobenius norm, i.e.,   $\|\calX - \calX^{(h)}\|_F\leq \epsilon$ for any $\calX^{(h)}\in\wt\setX_{\text{tubal}}$, with the covering number
\begin{eqnarray}
    \label{covering number in the tubal tensor}
    N_1N_2N_3&\!\!\!\! \leq \!\!\!\!& \bigg(\frac{12L+\epsilon}{\epsilon}\bigg)^{d_1d_3r_{\text{tubal}}}\bigg(\frac{12L+\epsilon}{\epsilon}\bigg)^{d_2d_3r_{\text{tubal}}}\bigg(\frac{6 L +\epsilon}{\epsilon}\bigg)^{d_3r_{\text{tubal}}}\nonumber\\
    &\!\!\!\! \leq \!\!\!\!&\bigg(\frac{12L+\epsilon}{\epsilon}\bigg)^{d_1d_3r_{\text{tubal}} + d_3r_{\text{tubal}} + d_2d_3r_{\text{tubal}} }.
\end{eqnarray}

\end{proof}


\end{document}